\documentclass[10pt]{article}

\usepackage[T1]{fontenc}
\usepackage[utf8]{inputenc}
\usepackage{graphicx}
\usepackage{float}
\usepackage{amsmath}
\usepackage{amssymb}
\usepackage{mathtools}
\usepackage{amsthm}
\usepackage{booktabs}
\usepackage[margin=1.1in]{geometry}
\usepackage[hidelinks]{hyperref}
\usepackage{natbib}
\usepackage{subcaption}
\usepackage{enumitem}
\theoremstyle{plain}
\newtheorem{theorem}{Theorem}
\newtheorem{proposition}{Proposition}
\newtheorem{lemma}{Lemma}
\newtheorem{corollary}{Corollary}
\theoremstyle{definition}

\theoremstyle{definition}

\title{{\Large\bfseries
Fisher Rank Inflation: A Spectral Signature of Memorization under Label Noise
}}

\author{
Satwik Bathula, Anand A. Joshi \\[0.5em]
Department of Electrical and Computer Engineering, University of Southern California \\[0.25em]
\texttt{sbathula@usc.edu, ajoshi@usc.edu}
}

\date{}
\begin{document}

\maketitle

\begin{abstract}
Deep networks trained with label noise often learn clean structure before memorizing corrupted labels. We show that this transition leaves a spectral signature in the centered scatter of per-example last-layer gradients. Its effective rank transiently expands during memorization and contracts after corrupted labels are fit. We call this phenomenon \emph{Fisher Rank Inflation}.

 We show that corrupted labels can increase effective rank by injecting spectral mass into low-energy or previously unused eigendirections, thereby increasing the entropy of the gradient spectrum. We derive a first-order leave-one-out attribution formula and identify conditions under which corrupted examples contribute more strongly to rank inflation than clean examples. We further show that once the normalized Fisher-gradient spectrum stabilizes, individual attribution signals vanish, explaining the post-memorization weakening of leave-one-out rank contributions.

We empirically test these mechanistic predictions on CIFAR-10 and CIFAR-100 using SmallCNN, ResNet18, and Vision Transformers under symmetric label corruption, and additionally evaluate the phenomenon on CIFAR-10N with naturally occurring human annotation errors. Across datasets and architectures, Fisher effective rank exhibits a consistent inflation--collapse trajectory aligned with memorization dynamics. At peak-rank checkpoints, corrupted examples are strongly enriched among the highest rank-contributing samples, with top-100 noisy fractions ranging from \(69.2\%\) to \(96.2\%\) across five-seed experiments under synthetic corruption and reaching \(94.4\%\pm1.9\%\) on CIFAR-10N. The first-order spectral attribution closely matches exact leave-one-out rank contributions in the convolutional models and remains enriched in the Vision Transformer. In addition, a seeded corruption sweep shows that peak Fisher effective rank increases monotonically with corruption severity, rising from \(28.88 \pm 1.95\) under clean training to \(97.09 \pm 1.78\) at \(60\%\) corruption. In several settings, the retrospectively identified onset of rank inflation precedes observable test degradation. The persistence of Fisher Rank Inflation under both synthetic corruption and naturally occurring human annotation errors suggests that the phenomenon captures a broader spectral signature of memorization rather than an artifact of a particular noise-generation process.

These results establish Fisher Rank Inflation as a spectral signature of memorization under label noise and connect the dynamics of the last-layer Fisher-gradient spectrum to corrupted-example enrichment, corruption severity, and the transition from structure learning to memorization.
\end{abstract}

\section{Introduction}

Deep neural networks can achieve near-perfect accuracy even when a substantial fraction of training samples are corrupted. A large body of work has shown that learning under label noise typically proceeds in stages. Networks first learn structured patterns shared across many examples before eventually memorizing corrupted labels that are inconsistent with the underlying data distribution. This phenomenon has motivated extensive research on memorization, early learning, label-noise robustness, and data attribution.

 Existing analyses often focus on training and test losses, prediction dynamics, representation geometry, or example-level influence. While these perspectives characterize when memorization occurs, they provide less insight into how corrupted examples reshape the spectrum of training gradients during the memorization process. The gradient spectrum captures how optimization directions are distributed across the training set, providing a global geometric view of memorization that complements example-level or loss-based analyses.

In this work, we study memorization through the covariance structure of per-example last-layer gradients. Specifically, we analyze the effective rank of the centered Fisher-gradient scatter matrix. Empirically, we observe a consistent phenomenon across datasets and architectures. As training transitions from structured learning to memorization, the effective rank of the Fisher-gradient scatter undergoes a pronounced expansion. This inflation peaks during memorization and subsequently collapses once corrupted labels have been fit. We refer to this phenomenon as \emph{Fisher Rank Inflation}.

Beyond this effect, we use leave-one-out rank attribution to identify which training examples contribute to inflation. For each example \(i\), we compute
\[
\Delta_i=\operatorname{er}(S)-\operatorname{er}(S_{-i}),
\]
where \(S_{-i}\) is the centered Fisher-gradient scatter after removing example \(i\) and re-centering the remaining gradients. Examples with large positive \(\Delta_i\) are those whose removal most reduces effective rank, and therefore contribute most to rank inflation. At the peak of Fisher Rank Inflation, these high-\(\Delta_i\) examples are disproportionately corrupted. Removing a corrupted example typically produces a larger reduction in effective rank than removing a clean example.

To explain these observations, we develop a spectral theory of Fisher Rank Inflation. We show that corrupted labels can increase effective rank by introducing covariance mass into low-energy or previously unused eigendirections, thereby increasing spectral entropy. We further derive a first-order leave-one-out attribution formula that identifies the spectral conditions under which corrupted examples contribute more strongly to rank inflation than clean examples. Finally, we establish conditions under which these attribution signals weaken after memorization as the normalized Fisher-gradient spectrum becomes stable.

We validate the theory under symmetric label corruption on CIFAR-10 and CIFAR-100 using SmallCNN, ResNet18, and Vision Transformers. Across datasets and architectures, we observe a consistent inflation--collapse trajectory. At seed-specific peak-rank checkpoints, corrupted examples are strongly enriched among the highest rank-contributing samples, with top-100 noisy fractions ranging from \(69.2\%\) to \(96.2\%\) across five-seed experiments.  A seeded corruption sweep further shows that peak Fisher effective rank increases monotonically with corruption severity, rising from \(28.88 \pm 1.95\) under clean training to \(97.09 \pm 1.78\) at \(60\%\) corruption. In several settings, the retrospectively identified onset of Fisher Rank Inflation occurs before observable test degradation.

Our contributions are summarized as follows:

\begin{itemize}
\item We identify \emph{Fisher Rank Inflation}, a spectral phenomenon in which the effective rank of the Fisher-gradient scatter expands during memorization and collapses afterward.

\item We explain how corrupted labels inflate effective rank through spectral spreading and derive conditions under which corrupted examples exhibit larger leave-one-out rank contributions.

\item We explain the post-memorization weakening of attribution signals as a consequence of normalized spectral stabilization.

\item We validate the phenomenon across multiple architectures, datasets, and
noise regimes, including symmetric synthetic label corruption on CIFAR-10 and
CIFAR-100 and naturally occurring human annotation noise on CIFAR-10N. 
Across these settings, we observe consistent inflation--collapse dynamics, strong corrupted-example enrichment at peak-rank checkpoints, and monotonic growth of peak Fisher effective rank with corruption severity.
\end{itemize}

\section{Related Work}

\paragraph{Memorization and generalization in deep networks.}
Deep networks are known to fit even randomly labeled data, challenging classical accounts of generalization based purely on capacity control~\citep{zhang2017understandingdeeplearningrequires}. Subsequent work showed that memorization is not uniform throughout training. Networks tend to learn simple, structured patterns before fitting noisy or exceptional examples~\citep{pmlr-v70-arpit17a,rolnick2018deeplearningrobustmassive,toneva2019empiricalstudyexampleforgetting}. Memorization has also been connected to long-tailed structure in natural data distributions~\citep{10.1145/3357713.3384290,NEURIPS2020_1e14bfe2}. Related phenomena such as double descent further show that interpolation and generalization can interact in non-classical ways~\citep{Belkin2019Reconciling,Nakkiran_2021}. Our work studies memorization from a different perspective, focusing not on loss, accuracy, or example forgetting, but on how memorization changes the spectrum of the centered covariance of per-example last-layer gradients.

\paragraph{Learning with noisy labels.}
A large body of work studies robust training under label noise. Many approaches exploit the empirical observation that clean examples are learned earlier than corrupted ones. Co-teaching and Co-teaching+ select small-loss examples across peer networks~\citep{Han2018CoTeaching,pmlr-v97-yu19b}, while related methods use agreement, co-regularization, semi-supervised learning, or early-learning regularization to prevent noisy-label memorization~\citep{Wei_2020_CVPR,li2020dividemixlearningnoisylabels,Liu2020EarlyLearningRegularization}. Other approaches modify losses or estimate label-noise transition structure, including loss correction, generalized cross entropy, symmetric cross entropy, bootstrapping, and mixture-based label-noise modeling~\citep{Patrini_2017_CVPR,Zhang2018GeneralizedCrossEntropy,Wang_2019_ICCV,reed2015trainingdeepneuralnetworks,pmlr-v97-arazo19a}. Dataset-level label-quality methods such as Confident Learning, AUM, and CleanNet aim to identify or mitigate mislabeled samples~\citep{Northcutt2021ConfidentLearning,Pleiss2020IdentifyingMislabeledData,Lee_2018_CVPR}. In contrast to these robust-training methods, our goal is not to propose a new noise-robust optimizer or loss. Instead, we identify a mechanism by which corrupted labels reshape Fisher-gradient scatter during memorization.

\paragraph{Fisher information, curvature, and spectral structure.}
Fisher information and natural-gradient geometry have long been used to study learning dynamics and optimization~\citep{6790500}. Practical approximations such as K-FAC and Kronecker-factored Fisher methods exploit structure in the Fisher matrix for scalable second-order optimization~\citep{pmlr-v37-martens15,pmlr-v48-grosse16,George2018FastApproximateNaturalGradient}. Several works have analyzed spectral properties of Fisher or curvature matrices in neural networks, including Fisher eigenvalue statistics, random-matrix analyses of Fisher spectra, and Hessian spectra in overparameterized models~\citep{pmlr-v89-karakida19a,Pennington2018SpectrumFisher,sagun2018empiricalanalysishessianoverparametrized,papyan2019spectrumdeepnethessiansscale}. Related spectral perspectives study heavy-tailed self-regularization, loss-landscape geometry, and terminal representation collapse~\citep{pmlr-v97-mahoney19a,fort2019emergentpropertieslocalgeometry,Papyan2020NeuralCollapse}. Our work differs by studying the training-time dynamics of the centered covariance of per-example last-layer gradients under label corruption. We show that memorization induces a transient effective-rank expansion of this gradient spectrum, followed by collapse after corrupted labels are fit.

\paragraph{Data attribution and influence.}
Example-level attribution methods seek to identify which training examples are responsible for model behavior. Influence functions trace predictions back to training data through local sensitivity of the learned model~\citep{pmlr-v70-koh17a}. Subsequent work developed scalable or alternative attribution methods such as TracIn, representer points, Data Shapley, datamodels, TRAK, and FastIF~\citep{Pruthi2020EstimatingTrainingDataInfluence,Yeh2018RepresenterPointSelection,pmlr-v97-ghorbani19c,ilyas2022datamodelspredictingpredictionstraining,park2023trakattributingmodelbehavior,guo-etal-2021-fastif}. Other work has highlighted fragility and limitations of influence estimates in deep networks~\citep{basu2021influencefunctionsdeeplearning}. Our leave-one-out analysis asks a different question. Rather than attributing a prediction or loss to a training example, it attributes changes in the effective rank of the Fisher-gradient scatter, yielding a first-order spectral attribution of each example's contribution to Fisher Rank Inflation.

\paragraph{Spectral bias and training dynamics.}
Our work is also related to analyses of spectral bias and overparameterized training dynamics. Neural networks often learn low-frequency or simple functions before more complex patterns~\citep{pmlr-v97-rahaman19a}, and theoretical analyses based on the neural tangent kernel and overparameterization have clarified aspects of optimization and interpolation~\citep{Jacot2018NeuralTangentKernel,pmlr-v97-allen-zhu19a}. These works characterize biases in function-space or parameter-space training dynamics. Rank Inflation instead reflects a transient spectral reorganization of the gradient covariance. Corrupted labels inject covariance mass into weak or previously unused eigendirections, increasing Fisher effective rank during memorization and producing a peak-time attribution signal for corrupted examples.

\section{A Spectral Mechanism for Fisher-Rank Inflation}
\label{sec:theory}

We explain Fisher-rank inflation under label noise via a spectral mechanism. Specifically, we formalize the following empirical observations:
\begin{enumerate}[label=(\roman*)]
    \item Label corruption increases the effective rank of the last-layer gradient covariance.
    \item Corrupted samples have larger leave-one-out rank contributions.
    \item This attribution signal weakens after memorization.
\end{enumerate}

The results in this section should be interpreted as local spectral statements
at a fixed training checkpoint. They identify
spectral conditions under which corrupted-label gradients increase effective
rank, receive larger leave-one-out rank attribution, and eventually lose
individual attribution strength once the normalized gradient spectrum becomes
stable. The training-time inflation--collapse trajectory studied in Section~4
is therefore an empirical phenomenon explained by, and consistent with, these
local spectral mechanisms. All the proofs have been deferred to the appendix.

\subsection{Setup}

Let \(f_\theta(x)\in\mathbb{R}^K\) denote the logits of a \(K\)-class neural network and let
\(\hat p_i=\mathrm{softmax}(f_\theta(x_i))\) be the predicted class-probability vector for training example
\((x_i,y_i)\). We focus on the final affine classification layer. Let \(z_i\in\mathbb{R}^m\) denote the
penultimate representation of \(x_i\). To include both final-layer weights and biases in a single notation,
define the augmented representation
\[
\tilde z_i =
\begin{bmatrix}
z_i \\
1
\end{bmatrix}
\in\mathbb{R}^{m+1}.
\]
Under softmax cross-entropy, the per-example gradient with respect to the final affine layer, including
both weights and biases, can be written as
\[
g_i = (\hat p_i - e_{y_i})\otimes \tilde z_i
\in \mathbb{R}^{K(m+1)}.
\]
Thus \(g_i\) is the vectorized final-layer gradient used throughout both the theory and experiments. In
particular, the bias-gradient component is represented by the final coordinate of \(\tilde z_i\). We set
\[
d = K(m+1),
\]
and let \(G\in\mathbb{R}^{n\times d}\) denote the matrix whose rows are the per-example gradients
\(g_i^\top\).

Following the experiments, we work with centered gradients
\[
\bar g_i = g_i - \mu,
\qquad
\mu = \frac{1}{n}\sum_{i=1}^n g_i,
\]
and define the centered empirical scatter matrix
\[
S = \sum_{i=1}^n \bar g_i \bar g_i^\top.
\]

We refer to \(S\) as the centered Fisher-gradient scatter matrix. This
object is closely related to the empirical Fisher because it is constructed
from per-example loss gradients. However, it is not the conventional
uncentered empirical Fisher matrix, which would take the form
\(\sum_i g_i g_i^\top\), or its normalized analogue. The centering step is
deliberate. Our goal is to study the spectral dispersion of gradient
fluctuations across examples, rather than the total second moment of the
gradients. Since entropy effective rank is invariant to positive scalar
rescaling, the distinction between using the scatter \(S\), the covariance
\(S/n\), or the equivalent centered Gram matrix does not affect the
effective-rank values studied below.

We use the term ``scatter matrix'' throughout. Equivalently, \(S/n\) is the empirical covariance. Since
effective rank is invariant to positive rescaling, using \(S\), \(S/n\), or \(\bar G^\top \bar G\) gives the same
effective-rank value.

Let \(C\) and \(N\) denote the clean-label and corrupted-label subsets. For the additive
decomposition of the full centered scatter, we use globally centered clean and corrupted scatters:
\[
S_C^{\mathrm{glob}}
=
\sum_{i\in C}(g_i-\mu)(g_i-\mu)^\top,
\qquad
S_N^{\mathrm{glob}}
=
\sum_{i\in N}(g_i-\mu)(g_i-\mu)^\top.
\]
Then
\[
S = S_C^{\mathrm{glob}} + S_N^{\mathrm{glob}}.
\]

For the noisy-to-clean effective-rank diagnostic used in the experiments, however, we center each subset
separately. Define
\[
\mu_C = \frac{1}{|C|}\sum_{i\in C}g_i,
\qquad
\mu_N = \frac{1}{|N|}\sum_{i\in N}g_i,
\]
and
\[
\widetilde S_C
=
\sum_{i\in C}(g_i-\mu_C)(g_i-\mu_C)^\top,
\qquad
\widetilde S_N
=
\sum_{i\in N}(g_i-\mu_N)(g_i-\mu_N)^\top.
\]
The subset-centered scatters \(\widetilde S_C\) and \(\widetilde S_N\) measure within-group spectral
dispersion and are distinct from the globally centered additive decomposition
\(S=S_C^{\mathrm{glob}}+S_N^{\mathrm{glob}}\).

Let \(\lambda_1(S),\ldots,\lambda_r(S)\) be the positive eigenvalues of \(S\), where \(r=\mathrm{rank}(S)\).
The normalized positive spectrum is
\[
p_k(S)=\frac{\lambda_k(S)}{\mathrm{tr}(S)}.
\]
The entropy effective rank is
\[
\mathrm{er}(S)
=
\exp\left(
-\sum_{k=1}^r p_k(S)\log p_k(S)
\right).
\]
Throughout, we assume that every scatter matrix whose effective rank is evaluated has positive trace.

\subsection{Sensitivity of Effective Rank}

\begin{lemma}[Effective-rank derivative]
\label{lem:effective-rank-derivative}
Let \(S\) have positive simple eigenvalues
\(\lambda_1,\ldots,\lambda_r\), trace \(T=\operatorname{tr}(S)\), normalized spectrum
\(p_k=\lambda_k/T\), entropy
\[
H(S) = -\sum_{k=1}^r p_k\log p_k,
\]
and effective rank \(\operatorname{er}(S)=e^{H(S)}\). Then
\[
\frac{\partial \operatorname{er}(S)}{\partial \lambda_k}
=
-\frac{\operatorname{er}(S)}{T}
\bigl(\log p_k + H(S)\bigr).
\]
Consequently,
\[
\frac{\partial \operatorname{er}(S)}{\partial \lambda_k}>0
\quad \Longleftrightarrow \quad
p_k < \frac{1}{\operatorname{er}(S)}.
\]
\end{lemma}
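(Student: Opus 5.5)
The plan is a direct calculus computation. We treat the entropy as a function of the eigenvalues \(\lambda_1,\ldots,\lambda_r\), with \(T=\sum_j\lambda_j\) depending on every \(\lambda_j\), and differentiate through the normalization. The simplicity and positivity hypotheses ensure that each \(\lambda_k\) can be varied independently and smoothly. They also ensure that \(\log p_k\) is finite, so the map \((\lambda_1,\ldots,\lambda_r)\mapsto \operatorname{er}(S)\) is differentiable.

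The first step computes \(\partial p_j/\partial\lambda_k=(\delta_{jk}-p_j)/T\). The second step applies the chain rule to \(H=-\sum_j p_j\log p_j\), using \(\frac{d}{dp}(p\log p)=\log p+1\). This gives
\[
\frac{\partial H}{\partial\lambda_k}
=-\sum_j(\log p_j+1)\frac{\delta_{jk}-p_j}{T}
=-\frac{1}{T}\Bigl(\log p_k+1-\sum_j p_j\log p_j-\sum_j p_j\Bigr).
\]
Since \(\sum_j p_j=1\) and \(-\sum_j p_j\log p_j=H\), this simplifies to \(-\frac{1}{T}(\log p_k+H)\). The third step uses \(\operatorname{er}=e^H\), so \(\partial\operatorname{er}/\partial\lambda_k=\operatorname{er}\cdot\partial H/\partial\lambda_k\). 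This is exactly the claimed formula.

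For the equivalence, note that \(\operatorname{er}(S)>0\) and \(T>0\), so the derivative is positive if and only if \(\log p_k+H<0\). That condition reads \(\log p_k<-H=-\log\operatorname{er}(S)\), or equivalently \(p_k<1/\operatorname{er}(S)\), by monotonicity of \(\log\).

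The only point needing care, which I expect to be the main (minor) obstacle, is not forgetting the dependence of \(T\) on \(\lambda_k\). It is this dependence that produces the \(+H\) term through the cancellation \(\sum_j p_j=1\). A secondary point is to state clearly that the derivative is taken with respect to the eigenvalues as independent coordinates, with the other eigenvalues held fixed. Simplicity of the spectrum justifies this if one later wants to relate it to matrix perturbations.
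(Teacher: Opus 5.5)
Your proposal is correct and follows the paper's proof essentially line for line. Both compute \(\partial p_j/\partial\lambda_k=(\delta_{jk}-p_j)/T\), apply the chain rule to \(H\) with the cancellation \(\sum_j p_j=1\) to obtain \(-\frac{1}{T}(\log p_k+H)\), exponentiate, and read off the sign condition from \(\log p_k+H<0 \Longleftrightarrow p_k<e^{-H}=1/\operatorname{er}(S)\).
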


Define the spectral sensitivity coefficient
\[
\beta_k(S)
=
-\frac{\operatorname{er}(S)}{\operatorname{tr}(S)}
\bigl(\log p_k(S)+H(S)\bigr).
\]
Eigenvalue-coordinate directions \(u_k u_k^\top\) with \(\beta_k(S)>0\)
are infinitesimal effective-rank-increasing directions.

\subsection{Centered Leave-One-Out Attribution}
\label{subsec:centered-loo}

For a training example \(i\), define the leave-one-out scatter matrix by removing
example \(i\) and re-centering the remaining gradients. Specifically, let
\[
\mu_{-i}
=
\frac{1}{n-1}\sum_{j\neq i} g_j,
\]
and define
\[
S_{-i}
=
\sum_{j\neq i}
(g_j-\mu_{-i})(g_j-\mu_{-i})^\top.
\]
The leave-one-out Fisher-rank contribution of example \(i\) is
\[
\Delta_i
=
\operatorname{er}(S)-\operatorname{er}(S_{-i}).
\]
Thus \(\Delta_i>0\) means that removing example \(i\) decreases effective rank,
so example \(i\) contributes positively to Fisher-rank inflation.

\begin{lemma}[Centered deletion identity]
\label{lem:centered-deletion}
Let
\[
S
=
\sum_{j=1}^n \bar g_j\bar g_j^\top,
\qquad
\bar g_j
=
g_j-\mu,
\qquad
\mu
=
\frac{1}{n}\sum_{j=1}^n g_j.
\]
Let \(S_{-i}\) be the centered scatter matrix obtained by removing example \(i\)
and re-centering the remaining \(n-1\) gradients. Then
\[
S_{-i}
=
S-\frac{n}{n-1}\bar g_i\bar g_i^\top.
\]
\end{lemma}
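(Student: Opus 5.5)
The plan is to express everything in terms of the globally centered vectors \(\bar g_j\) and use the standard decomposition of a sum of outer products around an arbitrary center. First, I will compute the leave-one-out mean in terms of the full mean. Since \(\sum_{j\neq i} g_j = n\mu - g_i\),
\[
\mu_{-i}=\frac{n\mu-g_i}{n-1}=\mu-\frac{1}{n-1}\bar g_i .
\]
Consequently, for every \(j\neq i\),
\[
g_j-\mu_{-i}=\bar g_j+\tfrac{1}{n-1}\bar g_i .
\]
Writing \(c=\bar g_i/(n-1)\), the leave-one-out scatter becomes \(S_{-i}=\sum_{j\neq i}(\bar g_j+c)(\bar g_j+c)^\top\).

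Second, I will expand this sum into three pieces: \(\sum_{j\neq i}\bar g_j\bar g_j^\top\), the cross terms \(\bigl(\sum_{j\neq i}\bar g_j\bigr)c^\top + c\bigl(\sum_{j\neq i}\bar g_j\bigr)^\top\), and \((n-1)cc^\top\). The centering identity \(\sum_{j=1}^n\bar g_j=0\) gives \(\sum_{j\neq i}\bar g_j=-\bar g_i=-(n-1)c\). Substituting:
\begin{itemize}
\item the first piece equals \(S-\bar g_i\bar g_i^\top\);
\item the cross terms together equal \(-2(n-1)cc^\top = -\tfrac{2}{n-1}\bar g_i\bar g_i^\top\);
\item the last piece equals \(\tfrac{1}{n-1}\bar g_i\bar g_i^\top\).
\end{itemize}

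Finally, I will collect the coefficients of \(\bar g_i\bar g_i^\top\):
\[
-1-\tfrac{2}{n-1}+\tfrac{1}{n-1}=-\tfrac{n}{n-1},
\]
which yields \(S_{-i}=S-\tfrac{n}{n-1}\bar g_i\bar g_i^\top\), as claimed.

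The argument is purely algebraic and needs only \(n\ge 2\). The only real obstacle is bookkeeping: getting the sign and the \(1/(n-1)\) scaling of the mean shift right. For this reason I will state the identity \(\sum_{j\neq i}\bar g_j=-\bar g_i\) explicitly before expanding. As a sanity check, I will verify the case \(n=2\), where \(S_{-i}=0\) and \(S=2\bar g_i\bar g_i^\top\), so the formula holds.
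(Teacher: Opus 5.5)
Your proposal is correct and follows essentially the same argument as the paper. Both express the leave-one-out mean shift as \(\bar g_i/(n-1)\), expand \(\sum_{j\neq i}(\bar g_j+\bar g_i/(n-1))(\bar g_j+\bar g_i/(n-1))^\top\), use \(\sum_{j\neq i}\bar g_j=-\bar g_i\), and collect the coefficients \(-1-\tfrac{2}{n-1}+\tfrac{1}{n-1}=-\tfrac{n}{n-1}\).
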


\begin{lemma}[First-order LOO expansion]
\label{lem:first-order-loo}
Assume the positive eigenvalues of \(S\) are simple. Let
\[
S
=
\sum_{k=1}^r \lambda_k u_k u_k^\top,
\qquad
\lambda_1(S)>\lambda_2(S)>\cdots>\lambda_r(S)>0.
\]
Define the local spectral gap
\[
\delta(S)
=
\min\left\{
\lambda_r(S),
\min_{1\le j<k\le r}|\lambda_j(S)-\lambda_k(S)|
\right\}.
\]
When \(r=1\), the second minimum is interpreted as \(+\infty\), so
\[
\delta(S)=\lambda_1(S).
\]

Since \(S=\sum_j \bar g_j\bar g_j^\top\), each centered gradient \(\bar g_i\)
lies in \(\operatorname{range}(S)\). Let
\[
a=\frac{n}{n-1}.
\]
Assume the deletion perturbation is sufficiently small relative to the local
spectral gap:
\[
a\|\bar g_i\|^2 \le c\delta(S)
\]
for a sufficiently small universal constant \(c>0\). Then
\[
\Delta_i
=
a\sum_{k=1}^r
\beta_k(S)(u_k^\top \bar g_i)^2
+
R_i,
\]
where
\[
|R_i|\le C(S)\|\bar g_i\|^4,
\]
for a constant \(C(S)\) depending on a local spectral neighborhood controlled by
\(\delta(S)\).
\end{lemma}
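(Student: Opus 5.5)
We will reduce the statement to a one-parameter perturbation along a single rank-one direction. By Lemma~\ref{lem:centered-deletion}, \(S_{-i}=S-aE\) with \(E=\bar g_i\bar g_i^\top\). Set \(S(t)=S-tE\) for \(t\in[0,a]\) and \(\phi(t)=\operatorname{er}(S(t))\), so that \(\Delta_i=\phi(0)-\phi(a)\). Because \(\bar g_i\in\operatorname{range}(S)\), every \(S(t)\) has range contained in \(\operatorname{range}(S)\), an \(r\)-dimensional space. We may therefore restrict all operators to this \(r\)-dimensional space and never have to track eigenvalues that leave zero. There is one subtlety: \(S(t)\) could lose rank, since an eigenvalue could reach \(0\). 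The hypothesis \(a\|\bar g_i\|^2\le c\,\delta(S)\le c\lambda_r\) excludes this. By Weyl's inequality, \(\|tE\|\le c\lambda_r\) keeps every eigenvalue of \(S(t)\) restricted to \(\operatorname{range}(S)\) inside \([\lambda_k-c\delta,\lambda_k+c\delta]\). With \(c<1/2\), the eigenvalues stay positive, remain simple, and stay separated by at least \((1-2c)\delta\). Consequently the eigenvalues \(\lambda_k(t)\) and eigenvectors \(u_k(t)\) are real-analytic in \(t\) on \([0,a]\), and \(\phi\) is smooth there.

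Next we compute the derivative of \(\phi\) at zero. By the chain rule and Lemma~\ref{lem:effective-rank-derivative}, \(\phi'(t)=\sum_k \beta_k(S(t))\,\lambda_k'(t)\). The Hellmann--Feynman formula gives \(\lambda_k'(t)=-u_k(t)^\top E\,u_k(t)=-(u_k(t)^\top\bar g_i)^2\). At \(t=0\) this yields \(\phi'(0)=-\sum_k\beta_k(S)(u_k^\top\bar g_i)^2\). Taylor's theorem then gives
\[
\Delta_i=\phi(0)-\phi(a)=-a\phi'(0)-\tfrac{a^2}{2}\phi''(\xi)=a\sum_{k=1}^r\beta_k(S)(u_k^\top\bar g_i)^2+R_i,
\]
with \(R_i=-\tfrac{a^2}{2}\phi''(\xi)\) for some \(\xi\in(0,a)\). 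The principal term is exactly the one claimed in the statement.

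The main work is bounding \(\phi''\) uniformly on \([0,a]\) by a constant times \(\|E\|^2=\|\bar g_i\|^4\). Differentiating once more gives
\[
\phi''=\sum_{k,l}\partial_{\lambda_k}\partial_{\lambda_l}\operatorname{er}\;\lambda_k'\lambda_l'+\sum_k\beta_k\lambda_k''.
\]
We bound the two sums separately.
\begin{itemize}
\item For the first sum, we use \(|\lambda_k'|\le\|\bar g_i\|^2\). The Hessian of \(\operatorname{er}\) in the eigenvalues involves terms such as \(1/(T\lambda_k)\), \(\log p_k/T\) and \(\operatorname{er}/T^2\). On the neighborhood these are bounded because \(\lambda_k(t)\ge(1-c)\lambda_r\) and \(T(t)\ge T-c\delta\).
\item For the second sum, standard second-order perturbation theory gives
\[
\lambda_k''=2\sum_{l\ne k}\frac{(u_l^\top Eu_k)^2}{\lambda_k-\lambda_l},
\]
so \(|\lambda_k''|\le 2r\|\bar g_i\|^4/((1-2c)\delta)\). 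Here the gap control is essential. The \(\beta_k(S(t))\) are bounded for the same reasons as the Hessian entries.
\end{itemize}
Collecting the two bounds gives \(|\phi''|\le C'(S)\|\bar g_i\|^4\) on \([0,a]\), where \(C'\) depends only on \(r\), \(T\), \(\lambda_r\) and \(\delta\). Since \(a\le 2\), this yields \(|R_i|\le C(S)\|\bar g_i\|^4\). The obstacle is making these constants explicit while keeping the eigenvalues simple and positive along the whole path. The choice \(c<1/2\) via Weyl's inequality is what makes this work.
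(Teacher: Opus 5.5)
Your proposal is correct and follows essentially the same route as the paper. You use the centered deletion identity to reduce to the rank-one path $S-t\bar g_i\bar g_i^\top$, use the gap condition to keep the positive spectrum simple and away from zero along that path, and combine first-order eigenvalue perturbation with a Taylor expansion of $\operatorname{er}$ in the eigenvalues. Your version is more complete than the paper's: the Weyl argument, the Hellmann--Feynman derivative, and the uniform bound on $\phi''$ spell out the $O(\|\bar g_i\|^4)$ remainder control that the paper only attributes to standard perturbation theory.
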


\subsection{Entropy Gain from New Spectral Directions}

\begin{lemma}[Entropy gain from new spectral directions]
\label{lem:new-directions-entropy}
Let \(A\succeq 0\) have positive trace and rank \(r\), and let \(B\succeq 0\).
Let \(P_A\) be the orthogonal projector onto \(\operatorname{range}(A)\), and
\(Q_A=I-P_A\). If
\[
\operatorname{tr}(Q_A B Q_A)>0,
\]
then there exists \(\alpha_0>0\) such that for all \(\alpha\in(0,\alpha_0)\),
\[
\operatorname{er}(A+\alpha B)>\operatorname{er}(A).
\]
\end{lemma}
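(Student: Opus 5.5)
The plan is to show that adding \(\alpha B\) creates new positive eigenvalues of order \(\alpha\) outside \(\operatorname{range}(A)\), and that these contribute entropy of order \(\alpha\log(1/\alpha)\). This dominates the \(O(\alpha)\) distortion of the existing spectrum. Write \(T=\operatorname{tr}(A)\), \(P=P_A\), \(Q=Q_A\), \(t=\operatorname{tr}(QBQ)>0\), and \(M_\alpha=A+\alpha B\).

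\textbf{Step 1: spectral separation.} By Weyl's inequality, the eigenvalues of \(M_\alpha\) lie within \(\alpha\|B\|\) of those of \(A\). The eigenvalues of \(A\) are \(\lambda_1\ge\cdots\ge\lambda_r>0\) followed by zeros. So for \(\alpha\) small, the top \(r\) eigenvalues \(\mu_1(\alpha),\dots,\mu_r(\alpha)\) of \(M_\alpha\) stay above \(\lambda_r/2\), and the remaining ones lie in \([0,\alpha\|B\|]\). The top \(r\) eigenvalues form an isolated cluster.

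\textbf{Step 2: mass of the new directions.} Let \(\sigma(\alpha)=\operatorname{tr}(M_\alpha)-\sum_{k\le r}\mu_k(\alpha)\) be the tail mass. For an isolated cluster, first-order perturbation theory gives
\[
\sum_{k\le r}\mu_k(\alpha)=\operatorname{tr}(PM_\alpha P)+O(\alpha^2)=T+\alpha\operatorname{tr}(PBP)+O(\alpha^2).
\]
Equivalently, one can use the Schur complement. The small eigenvalues are those of \(\alpha QBQ-\alpha^2 QBP(A-\mu)^{-1}PBQ\) up to higher order.

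Since \(\operatorname{tr}(B)=\operatorname{tr}(PBP)+\operatorname{tr}(QBQ)\), this gives \(\sigma(\alpha)=\alpha t+O(\alpha^2)\ge \alpha t/2\) for small \(\alpha\). Ky Fan's inequality \(\sum_{k\le r}\mu_k\ge\operatorname{tr}(PM_\alpha P)\) supplies the matching upper bound \(\sigma(\alpha)\le\alpha t\), though only the lower bound is needed. Consequently the normalized tail mass \(s(\alpha)=\sigma(\alpha)/\operatorname{tr}(M_\alpha)\) satisfies \(c_1\alpha\le s(\alpha)\le c_2\alpha\) for positive constants \(c_1,c_2\).

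\textbf{Step 3: entropy via grouping.} Apply the grouping identity for Shannon entropy to the normalized spectrum of \(M_\alpha\), split into the top block and the tail:
\[
H(M_\alpha)=h(s)+(1-s)H_{\mathrm{top}}(\alpha)+sH_{\mathrm{tail}}(\alpha)\ge -s\log s+(1-s)H_{\mathrm{top}}(\alpha).
\]
Here \(h\) is the binary entropy, and we use \(h(s)\ge -s\log s\) and \(H_{\mathrm{tail}}\ge0\). The renormalized top distribution \(\mu_k(\alpha)/\sum_{j\le r}\mu_j(\alpha)\) is a Lipschitz function of \(\alpha\) near \(0\), since the cluster eigenvalues are bounded away from zero. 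It converges to \(\lambda_k/T\) as \(\alpha\to0\). Because entropy is smooth on the open simplex, this gives \(H_{\mathrm{top}}(\alpha)\ge H(A)-K\alpha\). Therefore
\[
H(M_\alpha)-H(A)\ge s\log(1/s)-sH(A)-K\alpha\ge c_1\alpha\log\frac{1}{c_2\alpha}-(c_2H(A)+K)\alpha,
\]
where we used that \(s\log(1/s)\) is increasing for small \(s\). The right-hand side is positive for all sufficiently small \(\alpha\), which defines \(\alpha_0\). Exponentiating gives \(\operatorname{er}(M_\alpha)>\operatorname{er}(A)\).

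\textbf{Main obstacle.} The delicate point is Step 2, the lower bound \(\sigma(\alpha)\ge c_1\alpha\). It requires controlling the cluster sum to second order, not just through Weyl's inequality. I would first try the Riesz-projector argument: the spectral projector onto the isolated cluster is analytic in \(\alpha\), so its trace against \(M_\alpha\) is analytic with derivative \(\operatorname{tr}(PB)\) at \(0\). Should that prove cumbersome, I would fall back on an explicit Schur-complement bound. Notably, the lemma requires neither simple eigenvalues nor a small-perturbation argument on individual eigenvectors, only on cluster sums.
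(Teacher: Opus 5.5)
Your proof is correct and follows the same route as the paper: the spectral mass emerging from $\ker A$ is $\alpha\operatorname{tr}(Q_ABQ_A)+O(\alpha^2)$. After you discard the nonnegative within-tail entropy, its entropy contribution is at least $-\varepsilon\log\varepsilon\sim\alpha\log(1/\alpha)$, and this dominates the $O(\alpha)$ change coming from the perturbed original spectrum. Your grouping identity and your Riesz-projector justification of the tail-mass expansion make rigorous two steps that the paper only asserts.
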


\begin{lemma}[Projected corrupted scatter under random corruption]
\label{lem:random-corruption-projected-scatter}
Condition on a fixed training checkpoint and fixed prediction-representation pairs
\[
\{(\hat p_i,\tilde z_i)\}_{i=1}^n.
\]
Let \(P_C\) be the orthogonal projector onto
\(\operatorname{range}(S_C^{\mathrm{glob}})\), and let \(Q_C=I-P_C\).
Suppose the corrupted labels are sampled randomly according to a corruption
distribution, producing corrupted gradients
\[
g_i(\tilde y_i)
=
(\hat p_i-e_{\tilde y_i})\otimes \tilde z_i,
\qquad i\in N.
\]
Let \(\mu\) denote the global gradient mean induced by the same corrupted-label
realization, and define
\[
S_N^{\mathrm{glob}}
=
\sum_{i\in N}
(g_i(\tilde y_i)-\mu)(g_i(\tilde y_i)-\mu)^\top.
\]

The expectation is only over the random replacement labels
\(\{\tilde y_i\}_{i\in N}\), conditional on the fixed checkpoint and fixed
prediction-representation pairs \(\{(\hat p_i,\tilde z_i)\}_{i=1}^n\). Then
\[
\mathbb{E}\left[
\operatorname{tr}(Q_C S_N^{\mathrm{glob}} Q_C)
\right]
=
\sum_{i\in N}
\mathbb{E}\left[
\|Q_C(g_i(\tilde y_i)-\mu)\|^2
\right].
\]
Consequently,
\[
\mathbb{E}\left[
\operatorname{tr}(Q_C S_N^{\mathrm{glob}} Q_C)
\right]>0
\]
whenever at least one corrupted centered gradient has a nonzero component
outside the globally centered clean-gradient span with positive probability.
\end{lemma}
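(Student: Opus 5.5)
The identity is a pointwise statement that survives expectation. For every fixed realization of the replacement labels $\{\tilde y_i\}_{i\in N}$, I will show that
\[
\operatorname{tr}(Q_C S_N^{\mathrm{glob}} Q_C)=\sum_{i\in N}\|Q_C(g_i(\tilde y_i)-\mu)\|^2 .
\]
Taking expectations over the replacement labels and using linearity (finitely many terms) then gives the displayed equality.

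The pointwise identity uses only that $Q_C$ is an orthogonal projector, so $Q_C^\top=Q_C$ and $Q_C^2=Q_C$. Write $v_i=g_i(\tilde y_i)-\mu$. Then
\[
Q_C S_N^{\mathrm{glob}}Q_C=\sum_{i\in N}(Q_Cv_i)(Q_Cv_i)^\top ,
\]
and $\operatorname{tr}(ww^\top)=\|w\|^2$ gives the sum of squared norms.

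There is one subtlety to address. The projector $Q_C$ depends on $S_C^{\mathrm{glob}}$, which is centered at the global mean $\mu$. Since $\mu$ depends on the corrupted labels, $Q_C$ is itself random. This does not affect the argument: the identity holds for each realization with whatever $Q_C$ and $\mu$ that realization induces. Hence the expectation on both sides is taken of the same realization-wise quantities, and no independence between $Q_C$ and $v_i$ is needed.

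For the positivity claim, each summand $\|Q_Cv_i\|^2$ is a nonnegative random variable, and it takes finitely many values since the labels are discrete. By hypothesis, some $i_0\in N$ has $\Pr(Q_Cv_{i_0}\neq 0)=q>0$. On that event $\|Q_Cv_{i_0}\|^2\ge m_0>0$, where $m_0$ is the minimum over the finitely many realizations in which it is nonzero. Therefore
\[
\mathbb{E}\|Q_Cv_{i_0}\|^2\ge q\,m_0>0,
\]
and adding the other nonnegative expectations preserves strict positivity.

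The main obstacle is only bookkeeping: being explicit that $Q_C$ and $\mu$ are realization-dependent, so that the expectation is interpreted correctly. The finiteness of the label space makes all expectations well defined and gives the strict lower bound without any measure-theoretic care.
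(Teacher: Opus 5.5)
Your proposal is correct and matches the paper's proof: the pointwise identity $\operatorname{tr}(Q_C S_N^{\mathrm{glob}} Q_C)=\sum_{i\in N}\|Q_C(g_i(\tilde y_i)-\mu)\|^2$ from linearity of trace and $\operatorname{tr}(ww^\top)=\|w\|^2$, then taking expectations and using nonnegativity of each term for the positivity claim. You also note that $Q_C$ depends on the realization through $\mu$; the paper leaves this implicit, and your realization-wise argument handles it correctly.
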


\subsection{Rank Inflation Under Label Corruption}

To analyze how corrupted gradients perturb the full Fisher-gradient scatter, we
use the globally centered decomposition
\[
S=S_C^{\mathrm{glob}}+S_N^{\mathrm{glob}}.
\]
Consider the interpolation
\[
S_\alpha
=
S_C^{\mathrm{glob}}+\alpha S_N^{\mathrm{glob}},
\qquad
\alpha\ge 0.
\]

Lemma~\ref{lem:random-corruption-projected-scatter} provides an expected-case
justification for the Case 1 condition in Theorem~\ref{thm:local-rank-inflation}.
It shows that
\[
\mathbb{E}\left[
\operatorname{tr}(Q_C S_N^{\mathrm{glob}} Q_C)
\right]>0
\]
whenever random label corruption produces corrupted centered gradients with
nonzero projection outside the globally centered clean-gradient span. In our
experimental setting, corrupted labels are drawn uniformly from
\([K]\setminus\{y_i\}\). Since the augmented representations \(\tilde z_i\) are high-dimensional
and the corrupted label directions vary across \(K-1\) incorrect classes, this
nondegeneracy condition is expected to hold generically. Theorem~\ref{thm:local-rank-inflation}
is stated deterministically for the realized scatter matrices.

\begin{theorem}[Local Fisher-rank inflation]
\label{thm:local-rank-inflation}
Let \(P_C\) be the orthogonal projector onto
\(\operatorname{range}(S_C^{\mathrm{glob}})\), and let \(Q_C=I-P_C\).

There are two cases.

\textbf{Case 1: corrupted gradients create new spectral directions.}
If
\[
\operatorname{tr}(Q_C S_N^{\mathrm{glob}} Q_C)>0,
\]
then there exists \(\alpha_0>0\) such that for all \(\alpha\in(0,\alpha_0)\),
\[
\operatorname{er}(S_C^{\mathrm{glob}}+\alpha S_N^{\mathrm{glob}})
>
\operatorname{er}(S_C^{\mathrm{glob}}).
\]

\textbf{Case 2: corrupted gradients lie in the clean-gradient span.}
Assume additionally that \(S_C^{\mathrm{glob}}\) has simple positive eigenvalues
\[
S_C^{\mathrm{glob}}
=
\sum_{k=1}^r \lambda_k u_k u_k^\top .
\]
If
\[
\operatorname{range}(S_N^{\mathrm{glob}})
\subseteq
\operatorname{range}(S_C^{\mathrm{glob}})
\]
and
\[
\sum_{k=1}^r
\beta_k(S_C^{\mathrm{glob}})
u_k^\top S_N^{\mathrm{glob}} u_k
>0,
\]
then there exists \(\alpha_0>0\) such that for all \(\alpha\in(0,\alpha_0)\),
\[
\operatorname{er}(S_C^{\mathrm{glob}}+\alpha S_N^{\mathrm{glob}})
>
\operatorname{er}(S_C^{\mathrm{glob}}).
\]
\end{theorem}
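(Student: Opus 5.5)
Case~1 follows directly from Lemma~\ref{lem:new-directions-entropy}. I will apply that lemma with \(A=S_C^{\mathrm{glob}}\) and \(B=S_N^{\mathrm{glob}}\). Both are sums of outer products, so \(A\succeq 0\) and \(B\succeq 0\). The standing assumption gives \(A\) positive trace. The projector onto \(\operatorname{range}(A)\) is \(P_C\), so \(Q_A=Q_C\), and the hypothesis \(\operatorname{tr}(Q_C S_N^{\mathrm{glob}}Q_C)>0\) is exactly the lemma's hypothesis. Its conclusion is the claimed inequality for \(\alpha\in(0,\alpha_0)\). No further work is needed beyond checking these identifications.

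For Case~2, I will differentiate \(\alpha\mapsto \operatorname{er}(S_\alpha)\) at \(\alpha=0^+\) and show the right derivative is positive. Since \(\operatorname{range}(S_N^{\mathrm{glob}})\subseteq\operatorname{range}(S_C^{\mathrm{glob}})=:U\), every \(S_\alpha\) maps \(U\) into \(U\) and vanishes on \(U^\perp\). Both matrices are symmetric, so \(U^\perp\) lies in their kernel. Hence I can restrict to the \(r\)-dimensional space \(U\). There \(S_\alpha|_U\) is a perturbation of a positive definite operator with simple eigenvalues \(\lambda_1>\dots>\lambda_r>0\). Because the zero eigenvalues on \(U^\perp\) stay exactly zero, no new positive eigenvalues appear. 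This is the key point that avoids the nonsmoothness of \(p\log p\) at \(0\).

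By standard analytic perturbation theory for simple eigenvalues (Rellich/Kato), the positive eigenvalues \(\lambda_k(\alpha)\) are smooth near \(\alpha=0\). They stay positive and simple for small \(\alpha\), and their derivatives are \(\lambda_k'(0)=u_k^\top S_N^{\mathrm{glob}}u_k\) (Hellmann--Feynman). The effective rank \(\operatorname{er}\) is a smooth function of \((\lambda_1,\dots,\lambda_r)\) on the open positive orthant. The chain rule together with Lemma~\ref{lem:effective-rank-derivative} then gives
\[
\frac{d}{d\alpha}\operatorname{er}(S_\alpha)\Big|_{\alpha=0}
=\sum_{k=1}^r \frac{\partial\operatorname{er}}{\partial\lambda_k}\,\lambda_k'(0)
=\sum_{k=1}^r \beta_k(S_C^{\mathrm{glob}})\,u_k^\top S_N^{\mathrm{glob}}u_k>0 .
\]
Lemma~\ref{lem:effective-rank-derivative} is stated in terms of eigenvalues with the trace varying, and that is the right form here, since \(T(\alpha)=\sum_k\lambda_k(\alpha)\).

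Since \(\alpha\mapsto\operatorname{er}(S_\alpha)\) is \(C^1\) near \(0\) with positive derivative at \(0\), its derivative stays positive on some \([0,\alpha_0)\), and the strict inequality follows. The main obstacle is justifying smoothness of the eigenvalue branches and ruling out eigenvalues emerging from zero. The range-inclusion hypothesis handles the second issue by the invariant-subspace reduction. Simplicity handles the first, since the gap \(\delta(S_C^{\mathrm{glob}})>0\) keeps the eigenvalues separated for \(\alpha\|S_N^{\mathrm{glob}}\|<\delta/2\), by Weyl's inequality. That gives an explicit admissible \(\alpha_0\) before shrinking it further for the sign argument.
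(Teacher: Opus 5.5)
Your proposal is correct and follows the paper's proof essentially step for step. Case~1 is the direct application of Lemma~\ref{lem:new-directions-entropy} with $A=S_C^{\mathrm{glob}}$ and $B=S_N^{\mathrm{glob}}$, and Case~2 combines the first-order perturbation $\lambda_k'(0)=u_k^\top S_N^{\mathrm{glob}}u_k$ with Lemma~\ref{lem:effective-rank-derivative} and a continuity argument; your invariant-subspace reduction and Weyl-inequality choice of $\alpha_0$ simply make explicit the paper's brief remark that no new positive eigenvalues are created for small $\alpha$.
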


Theorem~\ref{thm:local-rank-inflation} shows that label corruption can inflate
Fisher effective rank in two ways. It can either create new spectral directions
outside the globally centered clean-gradient span, or it can place sufficient
covariance mass inside existing low-energy directions to which effective rank is
positively sensitive.

\subsection{Spectral Condition for Larger Corrupted-Sample LOO Contributions}
\label{subsec:loo-spectral-condition}

Lemma~\ref{lem:first-order-loo} shows that, to first order, the LOO contribution
of sample \(i\) is controlled by a quadratic form in its centered gradient. Define
the effective-rank sensitivity matrix
\[
B_S
=
\sum_{k=1}^r \beta_k(S)u_k u_k^\top.
\]
Then the first-order attribution score of sample \(i\) is
\[
\mathcal A_i(S)
=
\bar g_i^\top B_S \bar g_i
=
\sum_{k=1}^r \beta_k(S)(u_k^\top \bar g_i)^2.
\]
The matrix \(B_S\) has positive eigenvalues along infinitesimal
effective-rank-increasing eigendirections, negative eigenvalues along
effective-rank-decreasing eigendirections, and zero eigenvalues on
\(\ker(S)\).

Unless otherwise stated, \(\mathbb{E}[\cdot \mid i \in A]\) denotes the empirical
expectation over a uniformly chosen index from group \(A\).

For \(A\in\{C,N\}\), define the group second-moment matrix
\[
\Sigma_A
=
\mathbb{E}\left[\bar g_i\bar g_i^\top \mid i\in A\right].
\]

\begin{theorem}[Spectral condition for corrupted-sample LOO enrichment]
\label{thm:corrupted-larger-loo}
Assume the first-order LOO expansion of Lemma~\ref{lem:first-order-loo}
applies uniformly to all samples under consideration. One sufficient condition is that the deletion perturbations satisfy
\[
\frac{n}{n-1}\|\bar g_i\|^2 \le c\delta(S)
\]
for all \(i\) in the clean and corrupted groups. Let
\[
a=\frac{n}{n-1},
\qquad
\eta=\max_i\|\bar g_i\|^2.
\]
Here \(\|\cdot\|\) denotes the Euclidean norm, so the remainder bound satisfies
\[
|R_i|\le C(S)\|\bar g_i\|^4\le C(S)\eta^2.
\]
If
\[
\operatorname{tr}\left(B_S(\Sigma_N-\Sigma_C)\right)
>
\frac{2C(S)}{a}\eta^2,
\]
then
\[
\mathbb{E}[\Delta_i\mid i\in N]
>
\mathbb{E}[\Delta_i\mid i\in C].
\]
\end{theorem}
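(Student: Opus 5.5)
The argument applies Lemma~\ref{lem:first-order-loo} to each sample, averages over each group, and bounds the remainders uniformly. By the uniform applicability assumption, for every $i$ in $C\cup N$ we have
\[
\Delta_i = a\,\bar g_i^\top B_S\bar g_i + R_i = a\,\mathcal A_i(S)+R_i,
\qquad |R_i|\le C(S)\|\bar g_i\|^4\le C(S)\eta^2,
\]
where the last inequality uses $\|\bar g_i\|^2\le\eta$.

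First, rewrite the quadratic form as a trace: $\bar g_i^\top B_S\bar g_i=\operatorname{tr}(B_S\bar g_i\bar g_i^\top)$. Averaging uniformly over $i\in A$ for $A\in\{C,N\}$ and using linearity of trace and of the empirical expectation gives
\[
\mathbb{E}[\Delta_i\mid i\in A]
= a\operatorname{tr}(B_S\Sigma_A)+\mathbb{E}[R_i\mid i\in A].
\]
The averaged remainder satisfies $\bigl|\mathbb{E}[R_i\mid i\in A]\bigr|\le C(S)\eta^2$ by the triangle inequality.

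Next, subtract the clean expression from the corrupted one:
\[
\mathbb{E}[\Delta_i\mid i\in N]-\mathbb{E}[\Delta_i\mid i\in C]
\ge a\operatorname{tr}\bigl(B_S(\Sigma_N-\Sigma_C)\bigr)-2C(S)\eta^2 .
\]
Since $a=n/(n-1)>0$, the hypothesis $\operatorname{tr}(B_S(\Sigma_N-\Sigma_C))>2C(S)\eta^2/a$ makes the right-hand side strictly positive, which is the claim.

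The only step needing care is that the constant $C(S)$ must be the same for all samples, so the remainder bounds can be pooled. Lemma~\ref{lem:first-order-loo} makes $C(S)$ depend only on the local spectral neighborhood of $S$ controlled by $\delta(S)$, not on $i$. The sufficient condition $a\|\bar g_i\|^2\le c\delta(S)$ for all $i$ places every deletion perturbation inside that same neighborhood, so a single $C(S)$ serves every sample. Beyond this, the argument is bookkeeping.
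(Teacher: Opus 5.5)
Your proposal is correct and follows the same route as the paper's proof. You apply Lemma~\ref{lem:first-order-loo} to each sample, rewrite the group average of $\bar g_i^\top B_S\bar g_i$ as $\operatorname{tr}(B_S\Sigma_A)$, bound each averaged remainder by $C(S)\eta^2$, and subtract to obtain $a\operatorname{tr}(B_S(\Sigma_N-\Sigma_C))-2C(S)\eta^2>0$; your extra remark that a single constant $C(S)$ must serve every sample makes explicit what the paper leaves implicit in its uniform-applicability assumption.
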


Theorem~\ref{thm:corrupted-larger-loo} does not claim that corrupted samples
always have larger LOO contribution. Rather, it identifies the spectral condition
under which the empirical enrichment occurs: corrupted gradients must have
larger covariance, relative to clean gradients, in directions weighted positively
by the effective-rank sensitivity matrix \(B_S\).

\begin{corollary}[Rank-increasing subspace sufficient condition]
\label{cor:rank-increasing-subspace}
Let
\[
K_+(S)=\{k:\beta_k(S)>0\}.
\]
Assume
\[
K_+(S)\neq\varnothing.
\]
Define
\[
U_+=\sum_{k\in K_+(S)}u_k u_k^\top,
\]
and let
\[
b_+=\min_{k\in K_+(S)}\beta_k(S)>0.
\]
Let
\[
a=\frac{n}{n-1},
\qquad
\eta=\max_i\|\bar g_i\|^2.
\]
Assume the perturbative condition
\[
\frac{n}{n-1}\|\bar g_i\|^2 \le c\delta(S)
\]
holds for all samples under consideration.

Suppose there exists \(\delta_+>0\) such that
\[
\mathbb{E}\left[\|U_+\bar g_i\|^2\mid i\in N\right]
-
\mathbb{E}\left[\|U_+\bar g_i\|^2\mid i\in C\right]
\ge
\delta_+.
\]
Suppose also that the contribution from rank-decreasing directions is bounded
below by \(-B_-\):
\[
\sum_{k\notin K_+(S)}
\beta_k(S)
\left(
\mathbb{E}\left[(u_k^\top \bar g_i)^2\mid i\in N\right]
-
\mathbb{E}\left[(u_k^\top \bar g_i)^2\mid i\in C\right]
\right)
\ge
-B_-.
\]
If
\[
b_+\delta_+ - B_- > \frac{2C(S)}{a}\eta^2,
\]
then
\[
\mathbb{E}[\Delta_i\mid i\in N]
>
\mathbb{E}[\Delta_i\mid i\in C].
\]
\end{corollary}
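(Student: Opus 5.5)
The plan is to reduce the corollary directly to Theorem~\ref{thm:corrupted-larger-loo} by showing that its hypotheses imply
\[
\operatorname{tr}\bigl(B_S(\Sigma_N-\Sigma_C)\bigr) \;\ge\; b_+\delta_+ - B_- .
\]
The strict inequality \(b_+\delta_+ - B_- > 2C(S)\eta^2/a\) then gives the condition required by the theorem. The perturbative assumption \(\frac{n}{n-1}\|\bar g_i\|^2\le c\delta(S)\) for all samples is exactly the sufficient condition stated in Theorem~\ref{thm:corrupted-larger-loo}. Hence Lemma~\ref{lem:first-order-loo} applies uniformly, and the conclusion \(\mathbb{E}[\Delta_i\mid i\in N]>\mathbb{E}[\Delta_i\mid i\in C]\) follows.

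First I will expand the trace in the eigenbasis of \(S\). Since \(B_S=\sum_{k=1}^r\beta_k u_ku_k^\top\) and \(\Sigma_A=\mathbb{E}[\bar g_i\bar g_i^\top\mid i\in A]\), linearity of the (empirical) expectation gives
\[
\operatorname{tr}\bigl(B_S(\Sigma_N-\Sigma_C)\bigr)=\sum_{k=1}^r\beta_k\bigl(\mathbb{E}[(u_k^\top\bar g_i)^2\mid N]-\mathbb{E}[(u_k^\top\bar g_i)^2\mid C]\bigr).
\]
I then split the sum over \(k\in K_+(S)\) and \(k\notin K_+(S)\) (with \(k\le r\)). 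The second part is at least \(-B_-\) by assumption.

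For the first part, write \(D_k\) for the \(k\)-th bracketed difference. I need \(\sum_{k\in K_+}\beta_k D_k\ge b_+\sum_{k\in K_+}D_k=b_+\delta_+\), using orthonormality: \(\sum_{k\in K_+}D_k=\mathbb{E}[\|U_+\bar g_i\|^2\mid N]-\mathbb{E}[\|U_+\bar g_i\|^2\mid C]\ge\delta_+\).

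This is the main obstacle. The bound \(\beta_k D_k\ge b_+D_k\) holds termwise only when \(D_k\ge 0\), while the hypothesis controls only the aggregate \(\sum D_k\). My approach is to decompose \(\sum_{k\in K_+}\beta_kD_k=b_+\sum_{k\in K_+}D_k+\sum_{k\in K_+}(\beta_k-b_+)D_k\). The first piece is at least \(b_+\delta_+\). The residual \(\sum(\beta_k-b_+)D_k\) has nonnegative weights, so it is nonnegative whenever each \(D_k\ge0\) on \(K_+\).

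To close this gap, I will either interpret the hypotheses so that the rank-decreasing bound \(B_-\) absorbs any negative residual, or note the needed mild assumption (componentwise enrichment on \(K_+\), or \(\beta_k\) constant there). If the intended reading is termwise, I will state it explicitly. Otherwise I will redefine \(B_-\) to bound \(\sum_{k}(\beta_k-b_+\mathbf 1_{K_+})D_k\) from below, which makes the chain of inequalities exact.

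Combining the two parts gives \(\operatorname{tr}(B_S(\Sigma_N-\Sigma_C))\ge b_+\delta_+-B_->2C(S)\eta^2/a\). Invoking Theorem~\ref{thm:corrupted-larger-loo} completes the proof.
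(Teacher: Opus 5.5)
Your reduction is the paper's own proof. You expand $\operatorname{tr}\bigl(B_S(\Sigma_N-\Sigma_C)\bigr)=\sum_{k\le r}\beta_kD_k$, with $D_k$ the group difference of $\mathbb{E}[(u_k^\top\bar g_i)^2]$ for every $k$. You then split over $K_+(S)$ and its complement and invoke Theorem~\ref{thm:corrupted-larger-loo}. At the step you single out, the paper simply asserts $\sum_{k\in K_+(S)}\beta_kD_k\ge b_+\delta_+$ with no argument.

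Your objection is correct, and it cannot be answered from the stated hypotheses: both the inequality and the corollary fail in general. At fixed $S$, $\beta_k$ is strictly decreasing in $\lambda_k$. So the simple spectrum required by Lemma~\ref{lem:first-order-loo} gives, whenever $|K_+(S)|\ge 2$, indices $k,\ell\in K_+(S)$ with $\beta_k=b_+$ and $\beta_\ell=\gamma b_+$ for some $\gamma>1$. Take $D_k=x$ and $D_\ell=-y$ with $y<x\le\tfrac{\gamma+1}{2}y$, and $D_j=0$ for all other $j$. Then $\delta_+=x-y$ and $B_-=0$ are admissible. However, $\operatorname{tr}\bigl(B_S(\Sigma_N-\Sigma_C)\bigr)=b_+(x-\gamma y)\le -b_+\delta_+$. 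Hence, whenever the corollary's hypothesis $ab_+\delta_+>2C(S)\eta^2$ holds, Lemma~\ref{lem:first-order-loo} yields
\[
\mathbb{E}[\Delta_i\mid i\in N]-\mathbb{E}[\Delta_i\mid i\in C]\le -ab_+\delta_+ + 2C(S)\eta^2<0 .
\]
This is the reverse of the claimed conclusion. Such configurations are easy to build: use symmetric $\pm$ pairs of gradients along the $u_j$, and split each $\lambda_j$ between the two groups as desired. Replicating the configuration $R$ times scales $b_+$ by $1/R$ and $\delta(S)$ by $R$, while the deletion remainder is $O(1/R^2)$. So for large $R$ both the perturbative condition and the corollary's hypothesis are met. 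The error in the paper's proof is exactly at the point you flagged.

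So do not try to interpret your way out; commit to an explicit extra hypothesis. The cleanest is componentwise enrichment, $D_k\ge 0$ for all $k\in K_+(S)$, which is implied for instance by $U_+(\Sigma_N-\Sigma_C)U_+\succeq 0$. It makes $\sum_{k\in K_+(S)}(\beta_k-b_+)D_k\ge 0$, and your chain closes. Alternatively, replace the $B_-$ hypothesis by $\sum_{k\le r}\bigl(\beta_k-b_+\mathbf{1}_{K_+(S)}(k)\bigr)D_k\ge -B_-$, which is exactly what the argument uses.

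Two of your other options should be dropped:
\begin{itemize}
\item Reading the stated $B_-$ as absorbing the residual is not available, since that sum ranges only over $k\notin K_+(S)$.
\item Requiring $\beta_k$ constant on $K_+(S)$ forces $|K_+(S)|=1$ under simple eigenvalues, which is the trivial case where the corollary already holds as stated.
\end{itemize}
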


Corollary~\ref{cor:rank-increasing-subspace} gives an interpretable sufficient
condition for the spectral criterion in Theorem~\ref{thm:corrupted-larger-loo}.
It says that corrupted examples have larger expected LOO contribution when they
carry more energy in rank-increasing eigendirections, and when the contribution
from rank-decreasing directions does not dominate.

\subsection{Noisy-to-Clean Effective-Rank Ratio}

The noisy-to-clean effective-rank ratio used in the experiments is computed from
subset-centered clean and corrupted gradient scatter matrices:
\[
NIR
=
\frac{\operatorname{er}(\widetilde S_N)}
{\operatorname{er}(\widetilde S_C)}.
\]
Here
\[
\mu_C
=
\frac{1}{|C|}\sum_{i\in C} g_i,
\qquad
\mu_N
=
\frac{1}{|N|}\sum_{i\in N} g_i,
\]
and
\[
\widetilde S_C
=
\sum_{i\in C}(g_i-\mu_C)(g_i-\mu_C)^\top,
\qquad
\widetilde S_N
=
\sum_{i\in N}(g_i-\mu_N)(g_i-\mu_N)^\top.
\]
Thus \(NIR\) measures the within-group spectral dispersion of corrupted
gradients relative to clean gradients after subset-specific centering. It is
distinct from the globally centered additive decomposition
\[
S=S_C^{\mathrm{glob}}+S_N^{\mathrm{glob}}
\]
used in Theorem~\ref{thm:local-rank-inflation}.

This distinction is important because globally centered group scatters can
contain between-group mean offsets, whereas \(\widetilde S_C\) and
\(\widetilde S_N\) remove the clean and corrupted group means before effective
rank is computed. Therefore \(NIR\) compares the intrinsic spectral spread of
the clean and corrupted gradient clouds rather than their separation in mean.

Thus Theorem~\ref{thm:local-rank-inflation} and
Proposition~\ref{prop:majorization-nir} address related but logically distinct
quantities: Theorem~\ref{thm:local-rank-inflation} concerns globally centered
additive rank inflation, whereas \(NIR\) measures subset-centered within-group
spectral dispersion.

\begin{proposition}[Majorization implies \(NIR\ge 1\)]
\label{prop:majorization-nir}
Let \(p(\widetilde S_N)\) and \(p(\widetilde S_C)\) be the normalized positive
spectra of the subset-centered corrupted and clean scatter matrices, padded with
zeros if necessary so that they have the same length. We use the convention
\(0\log 0 = 0\). If
\[
p(\widetilde S_N)\prec p(\widetilde S_C),
\]
then
\[
\operatorname{er}(\widetilde S_N)
\ge
\operatorname{er}(\widetilde S_C).
\]
Consequently,
\[
NIR
=
\frac{\operatorname{er}(\widetilde S_N)}
{\operatorname{er}(\widetilde S_C)}
\ge 1.
\]
If the majorization is strict and the spectra are not permutations of one another,
then \(NIR>1\).
\end{proposition}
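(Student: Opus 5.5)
The plan is to reduce the statement to Schur-concavity of Shannon entropy and then use monotonicity of the exponential. Write \(p=p(\widetilde S_N)\) and \(q=p(\widetilde S_C)\) for the zero-padded normalized spectra. Both lie in the probability simplex of a common length \(L\). Positive trace of both matrices guarantees they are well defined. The convention \(0\log 0=0\) makes the padding harmless: the entropy \(H(p)=-\sum_k p_k\log p_k\) computed on the padded vector equals the entropy of the positive spectrum. Hence \(\operatorname{er}(\widetilde S_N)=e^{H(p)}\) and \(\operatorname{er}(\widetilde S_C)=e^{H(q)}\), and since \(\exp\) is strictly increasing it suffices to show \(H(p)\ge H(q)\), with strict inequality in the strict case.

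For the inequality, I would invoke the Hardy--Littlewood--Pólya characterization. The relation \(p\prec q\) holds iff \(p=Dq\) for some doubly stochastic \(D\). Equivalently, for every continuous convex \(\phi\), we have \(\sum_k\phi(p_k)\le\sum_k\phi(q_k)\) (Karamata's inequality). Apply this with \(\phi(t)=t\log t\), which is continuous and strictly convex on \([0,1]\) under the convention \(\phi(0)=0\). This gives \(-H(p)\le -H(q)\), i.e.\ \(H(p)\ge H(q)\), and therefore \(\operatorname{er}(\widetilde S_N)\ge\operatorname{er}(\widetilde S_C)\). Because \(\operatorname{er}(\widetilde S_C)>0\), dividing yields \(NIR\ge 1\).

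The strict case is the step needing care. Karamata's inequality with a strictly convex \(\phi\) is strict unless \(p\) is a permutation of \(q\). I would prove this directly. Write \(p=Dq\) and, by Birkhoff's theorem, \(D=\sum_\ell \theta_\ell \Pi_\ell\) as a convex combination of permutation matrices. By convexity,
\[
\sum_k\phi(p_k)=\sum_k\phi\Bigl(\sum_\ell\theta_\ell(\Pi_\ell q)_k\Bigr)\le\sum_\ell\theta_\ell\sum_k\phi((\Pi_\ell q)_k)=\sum_k\phi(q_k).
\]
By strict convexity, equality forces, for each \(k\), all \((\Pi_\ell q)_k\) with \(\theta_\ell>0\) to coincide. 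Then \(p=\Pi_\ell q\) for any such \(\ell\), so \(p\) is a permutation of \(q\). This contradicts the hypothesis. Hence \(H(p)>H(q)\) and \(NIR>1\).

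The main obstacle is only bookkeeping. One point is that the zero-padding and the \(0\log 0\) convention keep \(\phi\) continuous at \(0\), so the majorization theorem applies on the closed simplex. The other is the equality analysis for strictness. Everything else is a direct citation of classical majorization theory.
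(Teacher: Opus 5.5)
Your proposal is correct and follows the same route as the paper: Schur-concavity of Shannon entropy under \(p(\widetilde S_N)\prec p(\widetilde S_C)\), followed by monotonicity of \(\exp\). The paper only cites (strict) Schur-concavity. You derive it from the Hardy--Littlewood--P\'olya/Karamata characterization with \(\phi(t)=t\log t\), and you justify the strict case via Birkhoff's theorem and the equality case of Jensen, which correctly fills in the step the paper cites.
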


We emphasize that \(NIR\) is an auxiliary diagnostic that quantifies relative
within-group spectral dispersion after subset-specific centering. Unlike the
Fisher Rank Inflation analyzed in Theorem~\ref{thm:local-rank-inflation},
\(NIR\) does not arise from the globally centered additive decomposition of the
full gradient scatter matrix.

\subsection{Post-Memorization Weakening of the LOO Signal}

Effective rank is scale-invariant:
\[
\operatorname{er}(cS)=\operatorname{er}(S),
\qquad c>0.
\]
Therefore, absolute gradient shrinkage alone cannot imply \(\Delta_i\to 0\).
The correct condition must be stated in terms of normalized spectra.

Here \(\|\cdot\|_1\) denotes the trace norm.

\begin{theorem}[Normalized spectral stability implies LOO collapse]
\label{thm:loo-collapse}
As training progresses through epochs $t$, let $S^{(t)}$ and $S_{-i}^{(t)}$
denote the corresponding full and leave one out centered scatter matrices. Assume that $S^{(t)}$ and $S_{-i}^{(t)}$ have positive trace for all $t$ and $i$
under consideration.
Define
\[
\rho^{(t)}
=
\frac{S^{(t)}}{\operatorname{tr}(S^{(t)})},
\qquad
\rho_{-i}^{(t)}
=
\frac{S_{-i}^{(t)}}{\operatorname{tr}(S_{-i}^{(t)})}.
\]
If
\[
\max_i
\left\|
\rho^{(t)}-\rho_{-i}^{(t)}
\right\|_1
\to 0
\qquad
\text{as } t\to\infty,
\]
then
\[
\max_i
\left|
\Delta_i^{(t)}
\right|
\to 0
\qquad
\text{as } t\to\infty.
\]

\paragraph{Large sample interpretation.}
Let $n$ denote the number of examples used to construct the scatter matrix.
Since
\[
\operatorname{tr}(S^{(t)})
=
\sum_{j=1}^{n}
\|\bar g_j^{(t)}\|^2,
\]
we have
\[
\max_i
\frac{\|\bar g_i^{(t)}\|^2}
{\operatorname{tr}(S^{(t)})}
\geq
\frac{1}{n}.
\]
Thus, this ratio cannot vanish as $t\to\infty$ when $n$ is fixed.
Instead, suppose that
\[
\max_i
\frac{\|\bar g_i^{(t)}\|^2}
{\operatorname{tr}(S^{(t)})}
=
O(1/n)
\]
uniformly over the epochs under consideration. This condition means that no
individual example contributes substantially more than the average scatter
mass. Then
\[
\max_i
\left\|
\rho^{(t)}-\rho_{-i}^{(t)}
\right\|_1
=
O(1/n).
\]
By continuity of entropy, the corresponding leave one out effective rank
contributions become small as $n$ grows.
\end{theorem}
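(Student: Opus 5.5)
The plan is to reduce the statement to uniform continuity of the entropy on the probability simplex of density matrices. The first step is scale invariance: \(\operatorname{er}(S)=\operatorname{er}(\rho)\) and \(\operatorname{er}(S_{-i})=\operatorname{er}(\rho_{-i})\), where \(\rho=S/\operatorname{tr}(S)\). The second observation is that the nonzero eigenvalues of \(\rho\) are exactly the normalized positive spectrum \(p(S)\), padded with zeros using \(0\log 0=0\). Hence \(\operatorname{er}(S)=\exp(H_{\mathrm{vN}}(\rho^{(t)}))\), where \(H_{\mathrm{vN}}(\rho)=-\operatorname{tr}(\rho\log\rho)\) is the von Neumann entropy of a unit-trace positive semidefinite matrix on \(\mathbb{R}^d\). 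All matrices live in the fixed dimension \(d=K(m+1)\), so \(\Delta_i^{(t)}=e^{H(\rho^{(t)})}-e^{H(\rho^{(t)}_{-i})}\).

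The key step is a Fannes--Audenaert type continuity bound. For density matrices \(\rho,\sigma\) on \(\mathbb{R}^d\) with \(\tau=\tfrac12\|\rho-\sigma\|_1\), it gives
\[
|H(\rho)-H(\sigma)|\le \tau\log(d-1)+h(\tau),
\]
where \(h\) is the binary entropy. If I want to avoid quoting it, I would derive a slightly weaker version directly. By Lidskii/Mirsky, the sorted eigenvalue vectors satisfy \(\|\lambda^\downarrow(\rho)-\lambda^\downarrow(\sigma)\|_1\le\|\rho-\sigma\|_1\). Then the classical Fannes bound for Shannon entropy of probability vectors of length \(d\) applies, using uniform continuity of \(x\mapsto -x\log x\) on \([0,1]\) (modulus \(\omega(\epsilon)\le -\epsilon\log\epsilon\) for \(\epsilon\le 1/e\)). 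Either route yields a modulus \(\omega_d(\epsilon)\to 0\) as \(\epsilon\to 0\) that depends only on \(d\), not on \(i\) or \(t\). This uniformity is what lets the maximum over \(i\) pass through.

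Next I would convert entropy differences into effective-rank differences. Since \(0\le H\le \log d\), the mean value theorem gives \(|e^{H_1}-e^{H_2}|\le d\,|H_1-H_2|\). Hence
\[
\max_i|\Delta_i^{(t)}|\le d\,\omega_d\Bigl(\max_i\|\rho^{(t)}-\rho^{(t)}_{-i}\|_1\Bigr)\to 0.
\]
This uses that \(\omega_d\) is monotone, which I can arrange by taking its nondecreasing envelope.

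For the large-sample interpretation, I would use Lemma~\ref{lem:centered-deletion}, which gives \(S_{-i}=S-a\bar g_i\bar g_i^\top\) and \(\operatorname{tr}(S_{-i})=T-a\|\bar g_i\|^2\). Writing \(\epsilon_i=a\|\bar g_i\|^2/T\), a direct computation gives
\[
\rho-\rho_{-i}=\frac{\epsilon_i}{1-\epsilon_i}\Bigl(\frac{\bar g_i\bar g_i^\top}{\|\bar g_i\|^2}-\rho\Bigr)\cdot(\text{unit-trace rescaling}).
\]
Its trace norm is therefore at most \(2\epsilon_i/(1-\epsilon_i)=O(1/n)\) under the stated hypothesis, and the same continuity bound then gives smallness in \(n\).

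The main obstacle I expect is bookkeeping for rank changes. Removing an example can drop the rank of \(S\), so the positive spectra have different lengths. I will handle this by always working with full \(d\)-dimensional eigenvalue vectors padded with zeros, which is exactly why I use the matrix/simplex continuity bound rather than the simple-eigenvalue derivative of Lemma~\ref{lem:effective-rank-derivative}.
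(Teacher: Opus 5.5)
Your proposal is correct and follows essentially the paper's route. Scale invariance reduces $\Delta_i^{(t)}$ to entropies of unit-trace PSD matrices in the fixed dimension $d$, a continuity modulus that is uniform in $i$ and $t$ lets the maximum pass through, and the large-sample bound comes from Lemma~\ref{lem:centered-deletion}.

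The differences are refinements rather than a new route. You replace the paper's compactness-based uniform continuity with an explicit Fannes--Audenaert modulus, which also gives a concrete rate in $n$. Your exact identity $\rho-\rho_{-i}=\tfrac{\epsilon_i}{1-\epsilon_i}\bigl(\bar g_i\bar g_i^\top/\|\bar g_i\|^2-\rho\bigr)$ needs no extra rescaling factor. It yields $2\epsilon_i/(1-\epsilon_i)$, which is slightly weaker than the paper's triangle-inequality bound $2\epsilon_i$ but equally $O(1/n)$.
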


\subsection{Interpretation}

The theory gives a spectral mechanism for Fisher-rank inflation under label noise.

First, Lemma~\ref{lem:effective-rank-derivative} identifies which directions increase
effective rank. Adding spectral mass to low-energy directions increases spectral
entropy. Theorem~\ref{thm:local-rank-inflation} then shows that corrupted-gradient
covariance inflates Fisher effective rank whenever it either creates new spectral
directions outside the clean-gradient span or places sufficient energy in existing
rank-increasing directions.

Second, Lemma~\ref{lem:first-order-loo} shows that the leave-one-out contribution
of a sample is governed, to first order, by a quadratic spectral attribution score
\(\bar g_i^\top B_S\bar g_i\). Theorem~\ref{thm:corrupted-larger-loo} identifies the spectral condition under
which corrupted samples have larger expected LOO rank contribution. The
corrupted-clean covariance difference must have positive net alignment with the
effective-rank sensitivity matrix \(B_S\).
Corollary~\ref{cor:rank-increasing-subspace} gives a simpler
sufficient condition in terms of extra corrupted-gradient energy in rank-increasing
eigendirections.

Third, Proposition~\ref{prop:majorization-nir} explains the noisy-to-clean
effective-rank diagnostic used in the experiments. For NIR, we compute the
effective ranks of the clean and corrupted gradient submatrices after centering
each subset separately. This diagnostic measures within-group spectral dispersion
and is distinct from the additive globally centered decomposition used in
Theorem~\ref{thm:local-rank-inflation}. When the subset-centered noisy spectrum
is more diffuse in the majorization sense, the noisy-to-clean rank ratio satisfies
\(NIR\ge 1\), and it is strictly larger than one under strict spectral spreading.

Finally, Theorem~\ref{thm:loo-collapse} explains why the LOO signal weakens after
memorization. Since effective rank is scale-invariant, the relevant condition is not
raw gradient shrinkage but normalized spectral stability. Removing any one example
must have vanishing effect on the normalized Fisher-gradient spectrum. This matches
the empirical observation that the top noisy fraction among high-LOO contributors
is largest during peak Fisher-rank inflation and decreases later in training.

The theory should be interpreted as a checkpoint-level spectral mechanism rather than a claim about the entire training trajectory.
The experiments below test whether these spectral conditions and attribution
patterns arise at the peak-rank checkpoints observed during noisy-label
training.

\section{Experimental Evaluation}

We investigate the empirical behavior of Fisher Rank Inflation across datasets,
architectures, and corruption levels. Our experiments focus on four questions:

\begin{enumerate}
    \item Does label corruption induce a characteristic inflation--collapse trajectory in Fisher effective rank during training?
    
    \item How does the magnitude of rank inflation vary with the corruption level?
    
    \item Which training examples contribute most strongly to Fisher Rank Inflation?
    
    \item Do the observed dynamics generalize across architectures and datasets?
\end{enumerate}

\subsection{Experimental Setup}

\paragraph{Datasets.}
We evaluate Fisher Rank Inflation on CIFAR-10 and CIFAR-100 under synthetic
symmetric label corruption. CIFAR-10 contains 50{,}000 training images and
10{,}000 test images across 10 classes, while CIFAR-100 contains the same
number of images distributed across 100 classes. To evaluate whether the
phenomenon extends beyond synthetic corruption, we additionally study
CIFAR-10N, which provides naturally occurring human annotation errors. In
particular, we use the \texttt{worse\_label} split for the experiments in
Section~4.6, while retaining the original CIFAR-10 labels as clean references
for evaluation and attribution analysis.

\paragraph{Label Corruption.}
We consider symmetric label corruption. For a corruption rate \(\rho\), each
training label is independently replaced with a uniformly sampled incorrect
class with probability \(\rho\). Unless otherwise specified, we use
\(\rho=0.5\) (50\% corruption). To study the effect of corruption severity, we
additionally perform noise-rate sweeps over
\[
\rho \in \{0,0.2,0.3,0.4,0.5,0.6\}.
\]

\paragraph{Architectures.}
Experiments are conducted using SmallCNN, ResNet18, and Vision Transformers
(ViT) on CIFAR-10. To evaluate behavior on a more challenging dataset, we
additionally train ResNet18 on CIFAR-100. For CIFAR-10N, we use the same CIFAR-adapted ResNet18 architecture as in the CIFAR-10 synthetic-noise experiments.

\paragraph{Fisher-gradient scatter.}
At each epoch, we compute per-example gradients with respect to the final affine
classification layer, including both weights and biases. Equivalently, for each
example we use
\[
g_i
=
(\hat p_i-e_{y_i})\otimes \tilde z_i,
\]
where \(\tilde z_i=[z_i^\top,1]^\top\) is the augmented penultimate
representation. We then construct the centered Fisher-gradient scatter matrix
\[
S
=
\sum_{i=1}^{n}\bar g_i\bar g_i^\top,
\qquad
\bar g_i
=
g_i-\frac{1}{n}\sum_{j=1}^{n}g_j.
\]
We track the entropy effective rank \(\operatorname{er}(S)\) throughout training
and identify the epoch at which effective rank attains its maximum value. We
refer to this checkpoint as the \emph{peak-rank epoch}.

Throughout the experiments, ``Fisher effective rank'' refers to the entropy
effective rank of the centered per-example final-layer gradient scatter.
Thus, the reported spectral quantities measure the dispersion of example-wise
gradient fluctuations around their empirical mean. We use the term
Fisher-gradient scatter to emphasize that the matrix is built from loss
gradients, while reserving the conventional empirical Fisher for the
corresponding uncentered second moment.

\paragraph{Noisy-to-clean inflation ratio.}
To compare the spectral dispersion of clean and corrupted gradients, we compute
\[
\mathrm{NIR}
=
\frac{\operatorname{er}(\widetilde S_N)}
     {\operatorname{er}(\widetilde S_C)},
\]
where \(\widetilde S_N\) and \(\widetilde S_C\) denote the subset-centered
corrupted and clean gradient scatter matrices, respectively. That is, clean and
corrupted gradients are centered separately before their effective ranks are
computed.

\paragraph{Leave-one-out attribution.}
For each training example, we compute the leave-one-out rank contribution
\[
\Delta_i
=
\operatorname{er}(S)-\operatorname{er}(S_{-i}).
\]
Here \(S_{-i}\) is obtained by removing example \(i\), re-centering the remaining
gradients, and recomputing the scatter matrix. Unless otherwise stated,
leave-one-out analyses are performed at the peak-rank epoch, where Fisher Rank
Inflation is strongest.

\paragraph{Evaluation metrics.}
We report peak effective rank, peak NIR, rank-onset and overfitting-onset epochs,
lead time between rank inflation and test degradation, top-100 noisy fraction,
enrichment ratio, AUROC, and AUPRC. For timing comparisons, rank onset is defined as the first evaluated epoch at which Fisher effective rank reaches 20 percent of its maximum increase over the completed training run. This retrospective threshold is used only to compare the timing of rank inflation and test degradation.

\paragraph{Subsampling protocol.}
All experiments use a fixed training subset of 20,000 examples sampled without
replacement from the original training set using the corresponding random seed.
Unless otherwise stated, Fisher effective-rank trajectories are computed on a
deterministic non-augmented Fisher-evaluation subset of 4,096 training examples
for CIFAR-10 experiments and 2,048 examples for CIFAR-100 experiments. Exact
leave-one-out attribution is computed on a deterministic subset of 2,048
examples at the seed-specific peak-rank checkpoint. The same subset ordering is
used when recomputing gradient matrices for direct spectral diagnostics, so the
leave-one-out contributions and diagnostic quantities are evaluated on matched
examples. For CIFAR-10N, we use the same 20{,}000-example training subset size as in
the CIFAR-10 ResNet18 experiments. Fisher trajectories are computed on the
first 4{,}096 examples of the deterministic evaluation ordering, and exact
leave-one-out attribution is computed on the first 2{,}048 examples of the
same ordering at the seed-specific peak-rank checkpoint.

\subsection{Fisher Rank Inflation During Memorization}

We first examine the evolution of the Fisher-gradient spectrum throughout training under symmetric label corruption. Figure~\ref{fig:dynamics} shows a representative training trajectory for ResNet18 trained on CIFAR-10 with \(50\%\) label corruption, including the effective rank of the centered Fisher-gradient scatter, the noisy-to-clean inflation ratio (NIR), and classification performance.

A clear inflation--collapse trajectory emerges. During the early stages of training, the effective rank increases steadily, indicating that gradient covariance becomes distributed across a progressively larger set of spectral directions. The expansion continues until a peak-rank epoch is reached, after which effective rank declines despite continued optimization. We refer to this transient expansion as \emph{Fisher Rank Inflation}.

The evolution of NIR provides complementary evidence for the same spectral transition. As training progresses, the corrupted-gradient scatter becomes more spectrally diffuse relative to the clean-gradient scatter, causing NIR to rise above one during the inflation phase. Near the high-rank portion of training, NIR remains elevated, indicating greater spectral dispersion among corrupted gradients than among clean gradients. This observation is consistent with the theoretical prediction that corrupted labels inject covariance mass into low-energy or previously underutilized eigendirections.

The inflation phase coincides with the transition from structure learning to memorization. Early in training, gradients are concentrated along a relatively small number of dominant directions associated with shared structure in the data. As corrupted labels begin to be fit, gradient covariance spreads into weaker directions, increasing both spectral entropy and effective rank. After memorization, the normalized gradient spectrum becomes progressively more stable, leading to a reduction in effective rank and the observed collapse phase.

In this representative run, rank inflation begins before observable test degradation and reaches its maximum near the transition from structure learning to memorization. This shows that the identified onset of Fisher Rank Inflation can occur before observable test degradation. The magnitude of this lead time varies across architectures and datasets, so we interpret it as a retrospective timing comparison rather than an online prediction rule.

Appendix~\ref{app:onset-threshold-sensitivity} shows that this lead-time
pattern is robust to a grid of rank-onset and test-drop thresholds for the
convolutional models, while remaining more threshold-dependent for the Vision
Transformer.

Taken together, these results identify Fisher Rank Inflation as a spectral signature of memorization under label noise. The existence of a distinct peak-rank epoch further motivates the sample-level analyses presented in the following subsection.

\begin{figure}[H]
    \centering
    \includegraphics[width=1\linewidth]{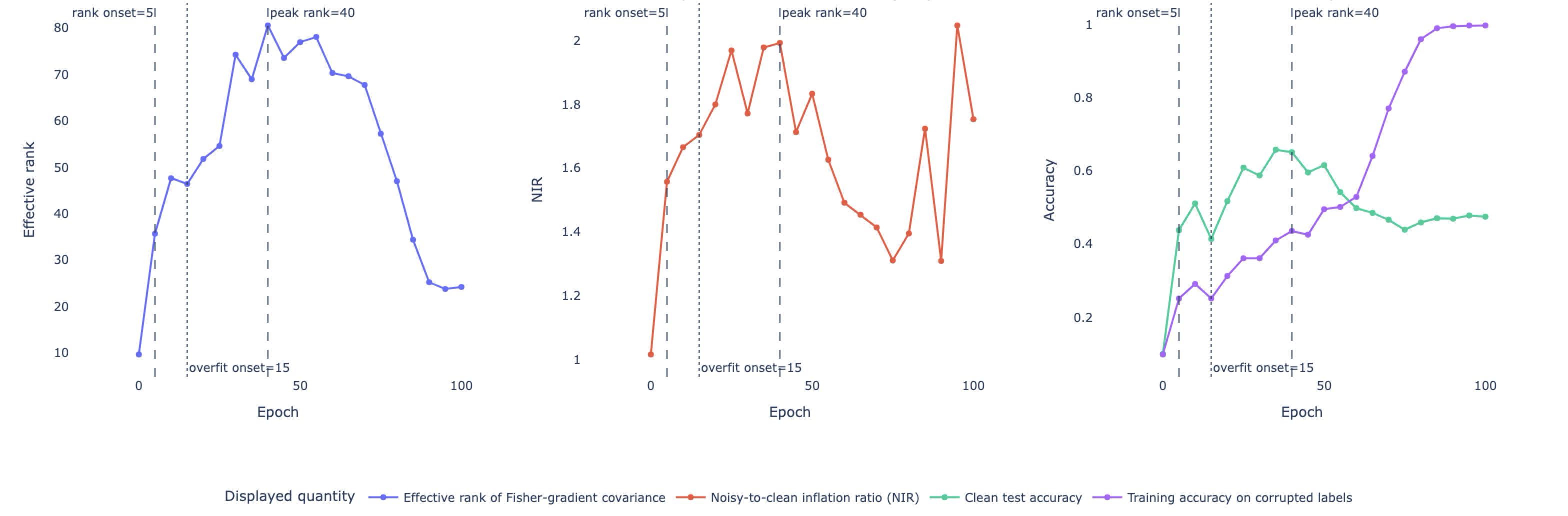}
    \caption{Representative training trajectory showing Fisher Rank Inflation for ResNet18 on CIFAR-10 with \(50\%\) symmetric label corruption. The effective rank of the Fisher-gradient scatter exhibits a characteristic inflation--collapse trajectory in this run, peaking at epoch 40. The noisy-to-clean inflation ratio (NIR) increases during the inflation phase, indicating growing spectral separation between clean and corrupted gradients. Rank inflation begins before observable test degradation and reaches its maximum near the transition from structure learning to memorization.
}
    \label{fig:dynamics}
\end{figure}

\subsection{Corrupted Examples Drive Fisher Rank Inflation}

The inflation--collapse dynamics observed in the previous subsection establish
that Fisher effective rank undergoes a characteristic expansion during
memorization. We now investigate which training examples are responsible for
this expansion.

To quantify the contribution of individual samples, we compute the leave-one-out
rank contribution
\[
\Delta_i
=
\operatorname{er}(S)-\operatorname{er}(S_{-i}),
\]
which measures the reduction in effective rank caused by removing example
\(i\). Positive values of \(\Delta_i\) indicate that the corresponding sample
supports the inflated Fisher spectrum, whereas values near zero indicate a
negligible contribution.

Figure~\ref{fig:loo_resnet18} shows a representative peak-rank leave-one-out
analysis for ResNet18 trained on CIFAR-10 with \(50\%\) symmetric label
corruption. A pronounced separation emerges between clean and corrupted
examples. The distribution of corrupted-example contributions is shifted toward
larger positive values and exhibits substantially greater variance. In contrast,
clean examples are concentrated near zero contribution.

This difference is reflected in both mean and median attribution scores in the
representative run. Corrupted examples achieve a mean contribution of
\(0.0125\), compared to \(-0.0056\) for clean examples. Similarly, the median
contribution increases from \(-0.0047\) for clean examples to \(0.0110\) for
corrupted examples. Thus, removing a corrupted example typically produces a
larger reduction in effective rank than removing a clean example.

To further quantify this effect, we rank all evaluated training examples
according to their leave-one-out contribution. In this representative run,
corrupted examples constitute \(49.4\%\) of the evaluated subset but account for
\(97\%\) of the top-100 rank-contributing samples. This corresponds to an
enrichment factor of approximately \(1.96\times\) relative to the background
corruption rate. Across five random seeds, the same ResNet18 CIFAR-10 setting
achieves a top-100 noisy fraction of \(95.0\%\pm1.4\%\) and an enrichment factor
of \(1.904\pm0.037\), as reported in Table~\ref{tab:cross_arch}. Thus, the
highest-contributing examples are consistently dominated by corrupted samples.

These findings provide sample-level evidence consistent with the spectral
mechanism developed in Section~\ref{sec:theory}. The theory identifies spectral conditions under which examples aligned with rank-increasing directions contribute disproportionately to Fisher Rank Inflation. Consistent with
this mechanism, corrupted examples exhibit systematically larger
leave-one-out rank contributions than clean examples and dominate the
highest-contributing samples across seeds. Fisher Rank Inflation is therefore not a uniform property of the training set. Instead, it is driven primarily by a subset of corrupted examples whose gradients inject variance into rank-expanding directions of the Fisher-gradient scatter.

\begin{figure}[H]
    \centering
    \begin{subfigure}{0.48\textwidth}
        \centering
        \includegraphics[width=\linewidth]{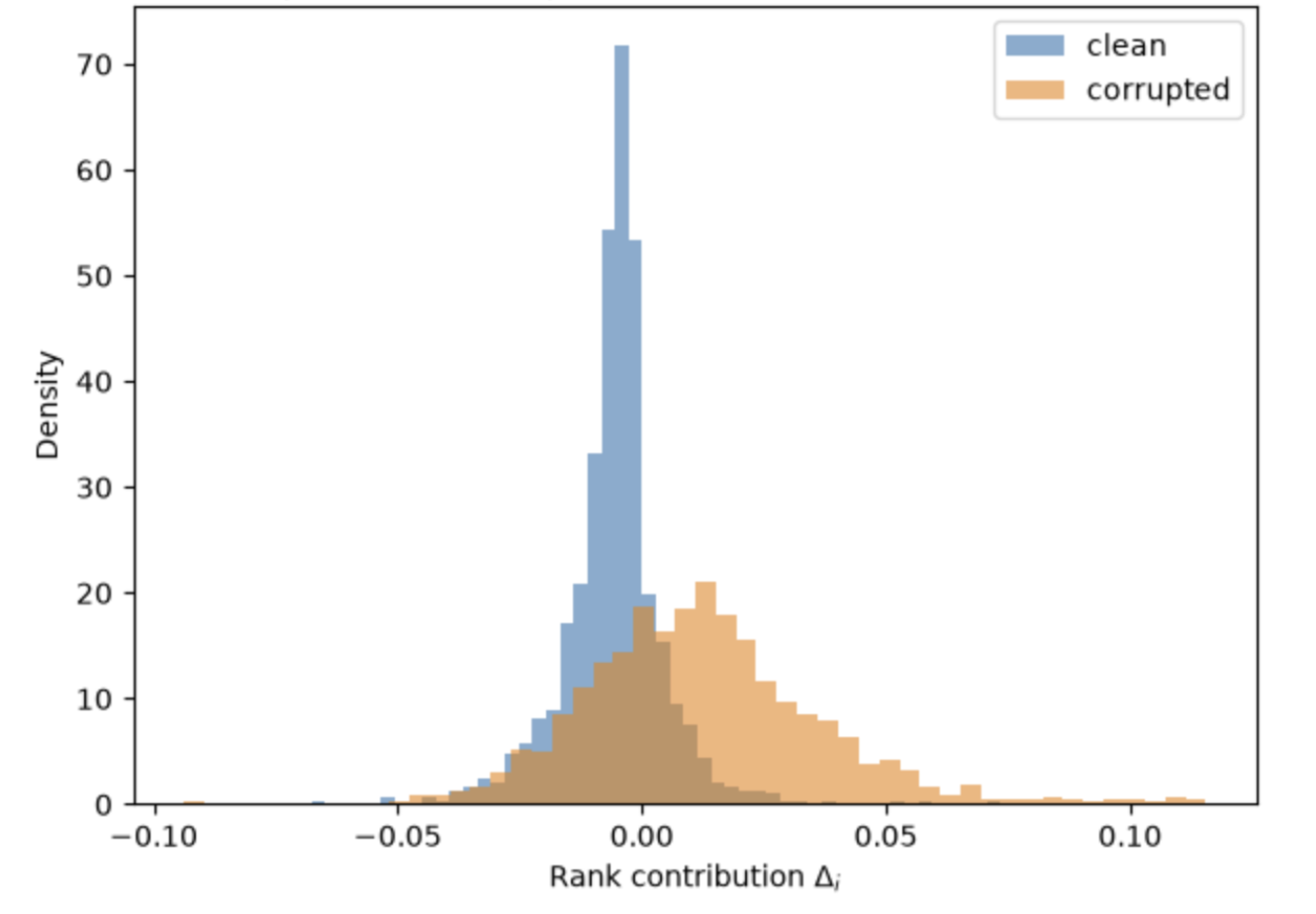}
    \end{subfigure}
    \hfill
    \begin{subfigure}{0.48\textwidth}
        \centering
        \includegraphics[width=\linewidth]{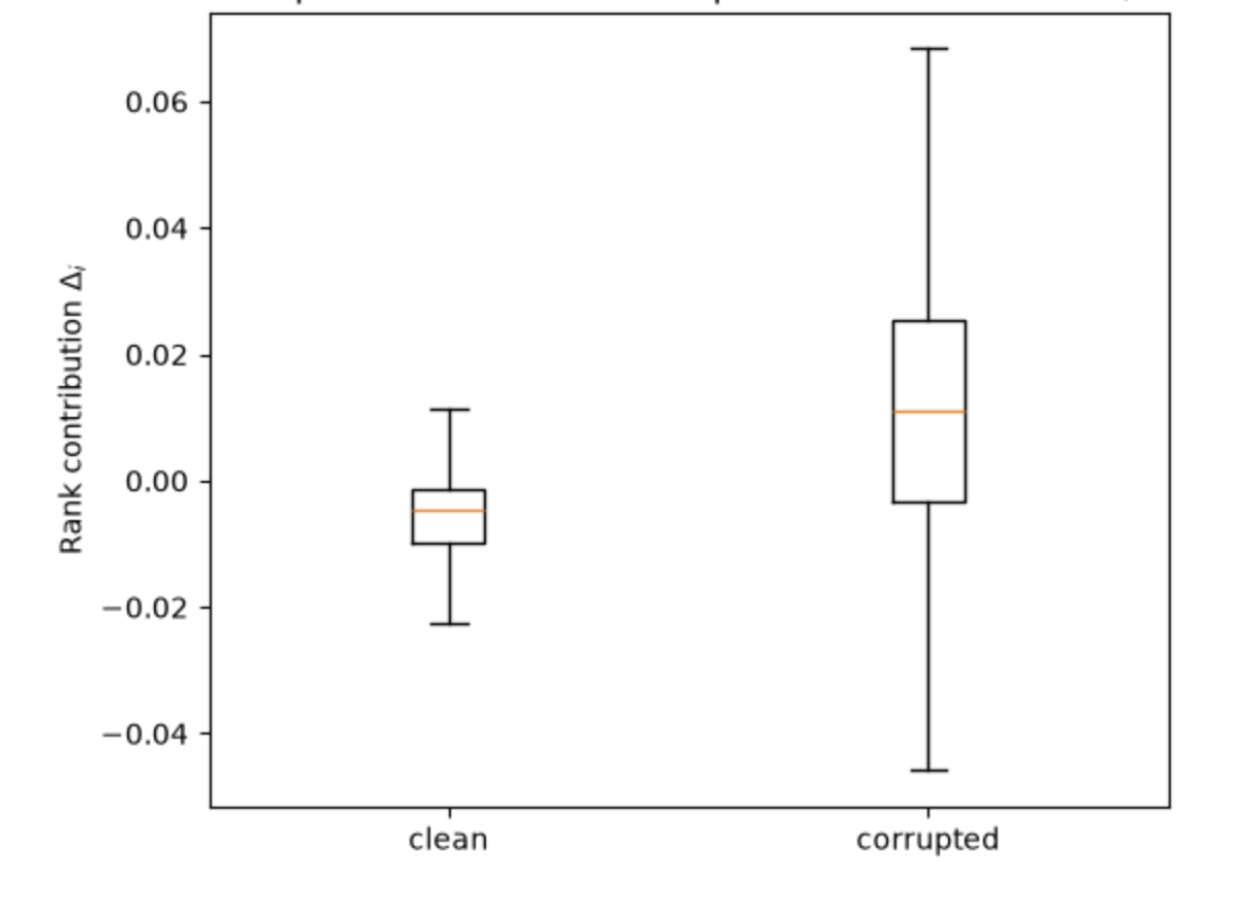}
    \end{subfigure}

    \caption{Representative leave-one-out rank contributions at the peak-rank checkpoint for ResNet18 on CIFAR-10 with \(50\%\) symmetric label corruption. Corrupted examples exhibit substantially larger rank contributions than clean examples, indicating that they disproportionately support the inflated Fisher-gradient spectrum. In this representative run, corrupted samples constitute roughly half of the evaluated training subset but account for \(97\%\) of the top-100 rank-contributing examples. Across five seeds, the corresponding top-100 noisy fraction is \(95.0\%\pm1.4\%\), as reported in Table~\ref{tab:cross_arch}.}

    \label{fig:loo_resnet18}
\end{figure}

\subsection{Clean-Difficulty Control}
\label{sec:clean_difficulty}

A natural alternative explanation for the leave-one-out Fisher-rank attribution results
is that they merely identify difficult training examples rather than corrupted labels.
To distinguish optimization difficulty from label corruption, we perform a controlled
experiment that explicitly separates these two factors.

Using ResNet18 on CIFAR-10 with 50\% symmetric label corruption, we partition the
training samples into three disjoint groups:

\begin{enumerate}
    \item \textbf{Normal clean}: correctly labeled examples with relatively small
    training loss.
    \item \textbf{High-loss clean}: correctly labeled examples having the largest
    training losses among all correctly labeled samples. These samples serve as a
    proxy for rare, ambiguous, or intrinsically difficult examples that remain
    correctly labeled despite being challenging to optimize.
    \item \textbf{Memorized corrupted}: examples whose observed training labels are
    incorrect but whose training losses with respect to those incorrect labels are
    already very small. These samples have therefore been successfully memorized by
    the network despite receiving incorrect supervision.
\end{enumerate}

The high-loss clean group controls for optimization difficulty, while the
memorized corrupted group isolates label corruption after memorization. If
Fisher Rank Inflation primarily reflected sample difficulty, the high-loss
clean group would be expected to exhibit the largest leave-one-out
Fisher-rank contributions. Conversely, if Fisher Rank Inflation is driven by
memorized label corruption, memorized corrupted examples should dominate
despite their low training loss. Figure~\ref{fig:clean_difficulty_boxplot}
provides a visual comparison of the leave-one-out Fisher-rank contribution
distributions for the three groups.

Table~\ref{tab:clean_difficulty_control} supports the latter hypothesis.
Memorized corrupted examples exhibit substantially larger positive leave-one-out
contributions than either category of clean examples, whereas high-loss clean
examples have a slightly negative mean contribution. Moreover, memorized corrupted
examples account for approximately $96\%$ of the top-100 Fisher-rank contributors
across three independent random seeds.

These results demonstrate that Fisher Rank Inflation is not simply identifying
difficult or rare training examples. Instead, it preferentially identifies
examples whose incorrect labels have already been memorized, providing additional
evidence that the observed Fisher-spectrum expansion is associated with memorized
label corruption rather than optimization difficulty alone.

\begin{table}[t]
\centering
\caption{Clean-difficulty control experiment averaged over three random seeds.
Results are evaluated at the seed-specific peak Fisher-rank checkpoint for
ResNet18 on CIFAR-10 with 50\% symmetric label corruption.}
\label{tab:clean_difficulty_control}
\begin{tabular}{lcc}
\toprule
Group & Mean $\Delta_i$ & Median $\Delta_i$\\
\midrule
High-loss clean & $-0.0137 \pm 0.0012$ & $0.00006 \pm 0.00003$\\
Normal clean & $0.0046 \pm 0.0005$ & $0.00006 \pm 0.00003$\\
Memorized corrupted & $\mathbf{0.0751 \pm 0.0027}$ & $\mathbf{0.0707 \pm 0.0058}$\\
\bottomrule
\end{tabular}
\end{table}

\begin{figure}[H]
    \centering
    \includegraphics[width=0.65\linewidth]{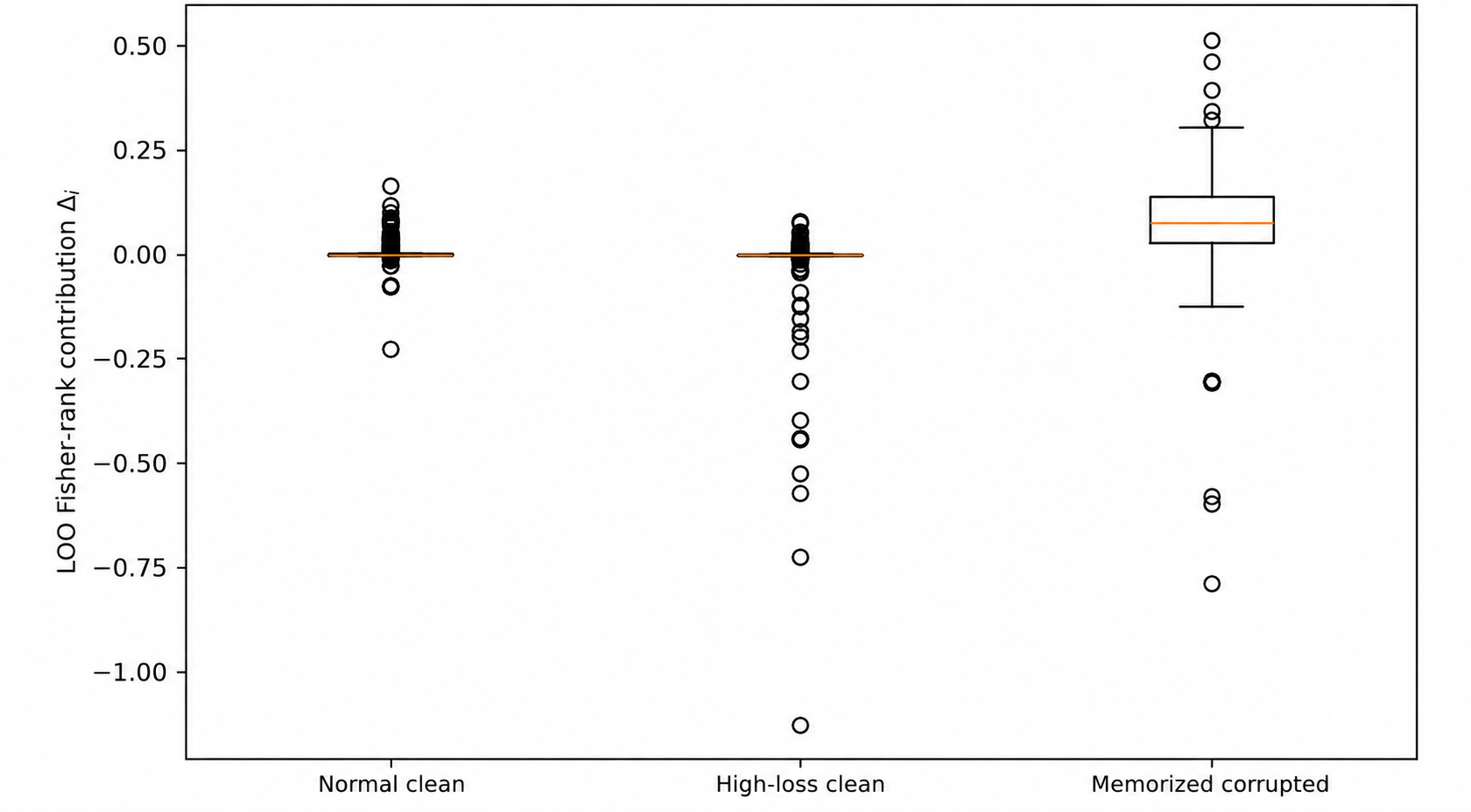}
    \caption{
Clean-difficulty control experiment for ResNet18 trained on CIFAR-10 with
50\% symmetric label corruption. Leave-one-out Fisher-rank contributions
($\Delta_i$) are shown for normal clean examples, high-loss clean examples,
and memorized corrupted examples at the seed-specific peak Fisher-rank
checkpoint. Although high-loss clean examples are difficult to optimize,
their contributions remain near zero, whereas memorized corrupted examples
exhibit substantially larger positive contributions. This indicates that
Fisher Rank Inflation preferentially identifies memorized label corruption
rather than optimization difficulty.
}
    \label{fig:clean_difficulty_boxplot}
\end{figure}

\subsection{Cross-Architecture and Cross-Dataset Validation}
\label{sec:cross-arch}

We next investigate whether the attribution patterns observed for ResNet18 on
CIFAR-10 generalize across architectures and datasets. Table~\ref{tab:cross_arch}
summarizes leave-one-out attribution statistics at the seed-specific peak-rank
checkpoint for all experimental settings considered in this work. For each
setting, we report mean and standard deviation across five random seeds.

Across all settings, corrupted examples are enriched among the highest
rank-contributing samples. Although corrupted examples constitute approximately
\(50\%\) of the training subset, they account for substantially more than half
of the top-100 rank-contributing examples in every architecture and dataset.
The top-100 noisy fraction ranges from \(69.2\%\) for ViT on CIFAR-10 to
\(96.2\%\) for SmallCNN on CIFAR-10. This enrichment persists across both
convolutional and transformer architectures, indicating that Fisher Rank
Inflation is not tied to a particular model family.

The strongest enrichment is observed for the convolutional CIFAR-10 models.
For SmallCNN, corrupted examples account for \(96.2\%\pm2.0\%\) of the top-100
rank-contributing samples, corresponding to an enrichment factor of
\(1.928\pm0.064\) relative to the background corruption rate. ResNet18 on
CIFAR-10 exhibits similarly strong enrichment. Corrupted examples account for
\(95.0\%\pm1.4\%\) of the top-100 rank-contributing samples, with enrichment
\(1.904\pm0.037\).

The effect is weaker but still present in the Vision Transformer. For ViT on
CIFAR-10, corrupted examples account for \(69.2\%\pm3.7\%\) of the top-100
rank-contributing samples, corresponding to an enrichment factor of
\(1.387\pm0.087\). Thus, while the attribution signal is attenuated in the
transformer architecture, the highest rank-contributing examples remain
enriched for corrupted labels relative to the background corruption rate.

The effect also extends to the more challenging CIFAR-100 setting. For
ResNet18 on CIFAR-100, corrupted examples account for \(92.2\%\pm1.6\%\) of
the top-100 rank-contributing samples, corresponding to an enrichment factor of
\(1.848\pm0.052\). This indicates that Fisher-rank attribution remains
effective even when the number of classes and the final-layer gradient
dimension are substantially larger.

Rank-contribution attribution provides above-chance, but architecture-dependent, separation between clean and corrupted examples. AUROC values range from
\(0.601\pm0.024\) for ViT on CIFAR-10 to \(0.806\pm0.011\) for ResNet18 on
CIFAR-100, while AUPRC values range from \(0.601\pm0.025\) to
\(0.798\pm0.010\). These values indicate that leave-one-out rank contribution
captures information about label corruption beyond a single architecture or
dataset. At the same time, we emphasize that rank contribution is intended as a
spectral attribution measure rather than a generic noisy-label detector. Loss
and confidence baselines can yield stronger global AUROC in some settings.

Taken together, these findings show that the relationship between corrupted examples and Fisher Rank
Inflation extends across multiple architectures and datasets. The consistent enrichment of corrupted examples
among the largest rank contributors supports the view that Fisher Rank Inflation is not an artifact of a
single experimental setting, although the strength of the attribution signal varies across model families.

\begin{table}[H]
\centering
\caption{Cross-architecture and cross-dataset attribution results at the peak-rank checkpoint.}
\label{tab:cross_arch}
\begin{tabular}{llccccc}
\toprule
Dataset & Model & Peak Epoch & Top-100 Noisy & Enrichment & AUROC & AUPRC \\
\midrule
CIFAR-10 & SmallCNN & 66.0±4.2  & 0.962±0.020 & 1.928±0.064 & 0.683±0.010 & 0.757±0.008 \\
CIFAR-10 & ResNet18 & 52.0±2.7  & 0.950±0.014 & 1.904±0.037 & 0.681±0.024 & 0.762±0.023 \\
CIFAR-10 & ViT      & 101.0±6.5 & 0.692±0.037 & 1.387±0.087 & 0.601±0.024 & 0.601±0.025 \\
CIFAR-100 & ResNet18 & 24.0±4.2 & 0.922±0.016 & 1.848±0.052 & 0.806±0.011 & 0.798±0.010 \\
\bottomrule
\end{tabular}
\end{table}

\paragraph{Direct spectral diagnostics.}
To test the spectral mechanism more directly, we evaluate the checkpoint-level
quantities appearing in Section~3 at the seed-specific peak-rank checkpoints.
For each setting, we compute the fraction of corrupted-gradient scatter outside
the globally centered clean-gradient span, the first-order sensitivity gap
\(\mathbb{E}[A_i(S)\mid i\in N]-\mathbb{E}[A_i(S)\mid i\in C]\), the
correlation between the first-order attribution score \(A_i(S)\) and the exact
leave-one-out contribution \(\Delta_i\), and the corrupted-example enrichment
among the top exact leave-one-out contributors.

\begin{table}[H]
\centering
\small
\caption{Direct diagnostics for the spectral quantities in Section~3,
evaluated at seed-specific peak-rank checkpoints under 50\% symmetric label
corruption. The new-direction fraction is
\(\operatorname{tr}(Q_C S_N^{\mathrm{glob}}Q_C)/
\operatorname{tr}(S_N^{\mathrm{glob}})\). The sensitivity gap is
\(\mathbb{E}[A_i(S)\mid i\in N]-\mathbb{E}[A_i(S)\mid i\in C]\), where
\(A_i(S)=\bar g_i^\top B_S\bar g_i\). Results are reported as mean
\(\pm\) standard deviation over five seeds.}
\label{tab:direct-spectral-diagnostics}
\begin{tabular}{llccccc}
\toprule
Dataset & Model
& New-dir. frac.
& Sens. gap
& Spearman \(A_i,\Delta_i\)
& \(\Delta_N-\Delta_C\)
& Top-100 noisy \(\Delta_i\) \\
\midrule
CIFAR-10 & SmallCNN
& \(0.0000\pm0.0000\)
& \(0.0058\pm0.0006\)
& \(0.9795\pm0.0038\)
& \(0.0082\pm0.0008\)
& \(0.962\pm0.020\) \\

CIFAR-10 & ResNet18
& \(0.0799\pm0.0062\)
& \(0.0079\pm0.0016\)
& \(0.9603\pm0.0081\)
& \(0.0136\pm0.0019\)
& \(0.950\pm0.014\) \\

CIFAR-10 & ViT
& \(0.1450\pm0.0080\)
& \(-0.0011\pm0.0011\)
& \(0.7036\pm0.0270\)
& \(0.0340\pm0.0037\)
& \(0.694\pm0.040\) \\

CIFAR-100 & ResNet18
& \(0.5879\pm0.0117\)
& \(0.0379\pm0.0223\)
& \(0.6514\pm0.0999\)
& \(0.1668\pm0.0118\)
& \(0.922\pm0.016\) \\
\bottomrule
\end{tabular}
\end{table}

The direct diagnostics provide checkpoint-level support for the spectral
mechanism, while also showing that different architectures realize the
mechanism in different ways. For CIFAR-10 SmallCNN and ResNet18, the
first-order attribution score \(A_i(S)=\bar g_i^\top B_S\bar g_i\) is highly
aligned with the exact leave-one-out contribution \(\Delta_i\), with Spearman
correlations above \(0.96\). In both cases, the corrupted group has a positive
sensitivity gap, supporting the sufficient condition in Theorem~2.

The two convolutional CIFAR-10 models differ in the role of new spectral
directions. For ResNet18, corrupted gradients place a nonzero fraction of their
globally centered scatter outside the clean-gradient span. For SmallCNN, the
new-direction fraction is numerically zero, while the sensitivity gap remains
positive and \(A_i(S)\) is strongly correlated with exact leave-one-out
attribution. This suggests that SmallCNN rank inflation is primarily explained
by corrupted-gradient covariance within existing rank-increasing directions,
corresponding to the in-span case of Theorem~1.

For CIFAR-100 ResNet18, corrupted gradients place a substantially larger
fraction of their scatter outside the clean-gradient span, and exact
leave-one-out attribution remains strongly enriched for corrupted examples.
The correlation between \(A_i(S)\) and \(\Delta_i\) is more moderate than in
CIFAR-10, suggesting that higher-order deletion effects are more pronounced in
the higher-cardinality output setting.

For the Vision Transformer, exact leave-one-out attribution still shows
corrupted-example enrichment, but the first-order sensitivity gap is slightly
negative. Thus, the sufficient condition in Theorem~2 is not uniformly
satisfied in the transformer setting. Since Theorem~2 gives a sufficient
rather than necessary condition, this does not contradict the theory. Instead,
it matches the weaker ViT attribution signal reported in Table~\ref{tab:cross_arch}.

\subsection{Rank Inflation Increases with Corruption Severity}
\label{sec:noise-sweep}

Section~\ref{sec:theory} suggests that Fisher Rank Inflation should become more pronounced as the amount of label corruption increases. Corrupted examples introduce gradient variability that can populate weak or previously underutilized eigendirections of the Fisher-gradient scatter, thereby increasing spectral entropy and effective rank. As the fraction of corrupted labels grows, a larger number of examples are expected to contribute to this rank-expanding mechanism.

To test this prediction, we perform a label-noise sweep on CIFAR-10 using ResNet18 under symmetric corruption rates
\[
\rho \in \{0.0,0.2,0.3,0.4,0.5,0.6\}.
\]
For each corruption level, we train three random seeds and record the maximum Fisher effective rank attained during training. We also compute the peak noisy-to-clean inflation ratio (NIR), which measures the relative spectral dispersion of corrupted gradients compared to clean gradients.

Figure~\ref{fig:noise-sweep} shows that the magnitude of Fisher Rank Inflation increases systematically with corruption severity. Under clean training, the peak Fisher effective rank is \(28.88 \pm 1.95\). As label corruption increases, the peak effective rank rises to \(62.69 \pm 2.32\), \(70.37 \pm 2.48\), \(78.98 \pm 0.76\), \(87.90 \pm 3.63\), and \(97.09 \pm 1.78\) for corruption rates of \(20\%\), \(30\%\), \(40\%\), \(50\%\), and \(60\%\), respectively. Thus, increasing corruption severity produces a larger expansion of the Fisher-gradient spectrum.

The NIR diagnostic remains consistently above one across noisy settings. Peak NIR values are \(2.00 \pm 0.10\), \(1.97 \pm 0.04\), \(2.01 \pm 0.00\), \(1.98 \pm 0.03\), and \(1.87 \pm 0.09\) for corruption rates of \(20\%\), \(30\%\), \(40\%\), \(50\%\), and \(60\%\), respectively. This indicates that corrupted-gradient subsets remain more spectrally dispersed than clean-gradient subsets throughout the noisy-label regimes. Unlike peak effective rank, NIR is not strictly monotonic in the corruption rate, but it remains elevated above one whenever corrupted labels are present.

These observations are consistent with the proposed mechanism. Increasing the number of corrupted labels increases the number of examples whose gradients inject covariance mass into weak eigendirections of the Fisher-gradient scatter. The resulting increase in spectral entropy produces larger effective-rank expansions and stronger Fisher Rank Inflation.

Taken together, these results provide empirical support for the central
mechanistic implication of our theory. Increasing label corruption introduces
more gradient covariance in rank-expanding directions and thereby produces
larger Fisher-rank expansion. This behavior further supports the view that Fisher Rank Inflation is a consequence of corruption-driven memorization rather than a generic artifact of deep-network training.

\begin{figure}[H]
    \centering
    \includegraphics[width=1\linewidth]{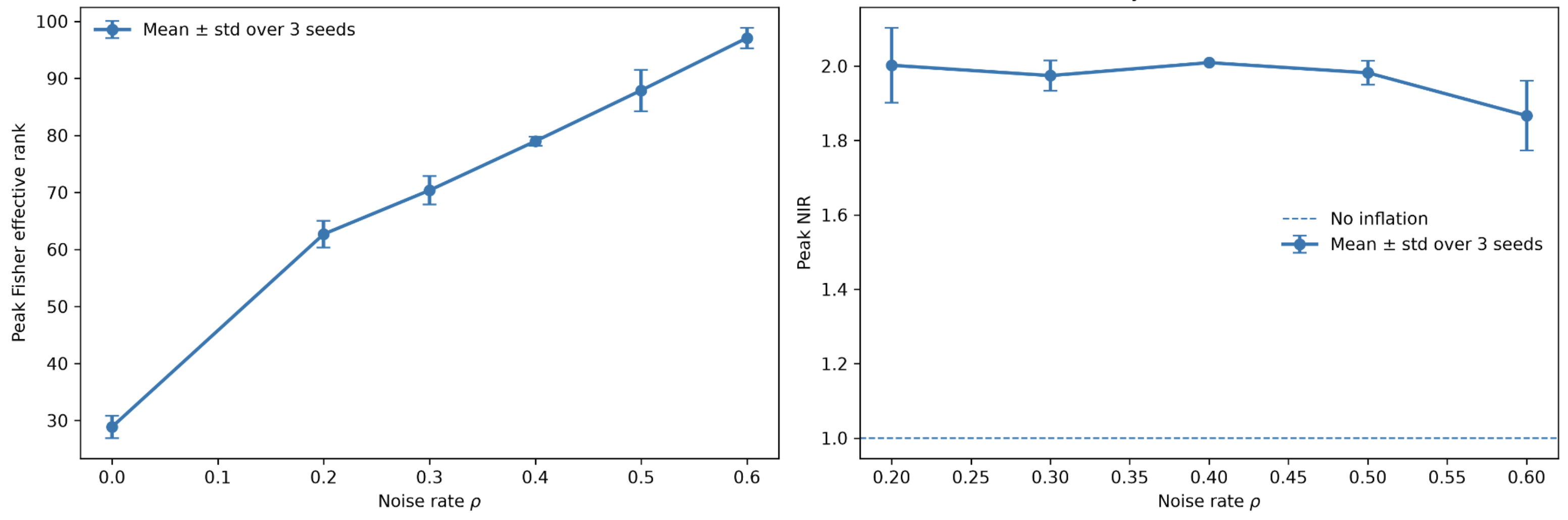}
    \caption{Dependence of Fisher Rank Inflation on corruption severity for ResNet18 trained on CIFAR-10 under symmetric label corruption. Results are averaged over three random seeds, with error bars denoting standard deviation. Peak Fisher effective rank increases monotonically with the corruption rate, rising from \(28.88 \pm 1.95\) under clean training to \(97.09 \pm 1.78\) at \(60\%\) corruption. Peak noisy-to-clean inflation ratio (NIR) remains consistently above one across noisy settings, indicating greater spectral dispersion among corrupted gradients than among clean gradients.}
    \label{fig:noise-sweep}
\end{figure}

\subsection{Fisher Rank Inflation under Human Annotation Noise}

The preceding experiments focus on synthetic symmetric label corruption. To evaluate whether Fisher Rank Inflation extends beyond artificially generated noise, we additionally study CIFAR-10N \citep{wei2022learningnoisylabelsrevisited}, which contains naturally occurring human annotation errors collected from multiple annotators. Unlike synthetic corruption, these label errors arise from human mistakes and ambiguity rather than random label replacement.

We evaluate ResNet18 on the CIFAR-10N \texttt{worse\_label} split using the same training protocol, Fisher-rank computation, and leave-one-out attribution procedure described in Section~4.1. Models are trained using the observed human labels while clean labels are retained for evaluation and attribution analysis. Results are averaged over five random seeds.

\begin{table}[H]
\centering
\caption{Fisher Rank Inflation under human annotation noise on CIFAR-10N. Results are reported as mean
$\pm$ standard deviation over five random seeds. The human-noisy fraction is
$39.7\% \pm 1.0\%$ of the evaluated subset.}
\label{tab:cifar10n}
\begin{tabular}{lccccc}
\toprule
Dataset & Noise Type & Top-100 Noisy & Enrichment & AUROC & AUPRC \\
\midrule
CIFAR-10N & \texttt{worse\_label}
& $0.944 \pm 0.019$
& $2.377 \pm 0.093$
& $0.608 \pm 0.010$
& $0.645 \pm 0.014$ \\
\bottomrule
\end{tabular}
\end{table}

Despite the substantially different source of label noise, human-noisy examples
are strongly enriched among the examples with the largest Fisher-rank
contributions. In particular, they account for $94.4\% \pm 1.9\%$ of the
top-100 rank-contributing samples while constituting only
$39.7\% \pm 1.0\%$ of the evaluated subset, corresponding to an enrichment
factor of $2.377 \pm 0.093$.

This result supports the view that Fisher-rank attribution captures examples that disproportionately support
Fisher-spectrum expansion during memorization. At the same time, rank contribution is not intended as a
generic noisy-label detector. Its global separation performance on CIFAR-10N is modest
(AUROC $0.608 \pm 0.010$, AUPRC $0.645 \pm 0.014$), even though its top-ranked examples are highly
enriched for human annotation errors. In addition, Fisher rank onset precedes the chosen overfitting onset criterion under human annotation noise, with a mean lead time of $29.0 \pm 13.1$ epochs
across five seeds.

These results provide direct evidence that Fisher Rank Inflation is not restricted to synthetic label corruption. The same enrichment phenomenon arises under naturally occurring human annotation errors, suggesting that the spectral mechanism identified in Section~\ref{sec:theory} captures a broader aspect of memorization than the specific corruption process used to generate noisy labels.

\section{Discussion}

Our results suggest that memorization under label noise is accompanied by a distinct spectral reorganization of the Fisher-gradient scatter. Rather than viewing memorization solely through prediction dynamics or loss trajectories, the Fisher-rank perspective shows how corrupted examples change the distribution of gradient variation across directions during training. In particular, Fisher Rank Inflation
emerges when corrupted examples inject covariance mass into weak or previously
underutilized eigendirections, increasing spectral entropy and expanding the
effective rank of the Fisher-gradient scatter.

A key observation of this work is that Fisher Rank Inflation is not driven
uniformly by all training examples. The leave-one-out attribution analysis
shows that corrupted examples contribute disproportionately to effective-rank
expansion and dominate the highest rank-contributing samples across datasets
and architectures. This observation provides a sample-level interpretation of
the inflation phenomenon and links global spectral dynamics to individual
training examples.

Our findings also clarify the relationship between memorization and
overfitting. While these concepts are often discussed interchangeably, they
refer to distinct phenomena. Memorization describes the process by which a
network fits corrupted or idiosyncratic training examples, whereas overfitting
refers to degradation in generalization performance. Fisher Rank Inflation
emerges during memorization and often precedes observable test degradation.
This suggests that reorganization of gradient covariance occurs before
generalization performance visibly deteriorates. At the same time, the Vision
Transformer experiments demonstrate that the magnitude of this lead time can
vary across architectures. Fisher Rank Inflation should therefore be viewed as
a spectral signature of memorization rather than a universal predictor of
future overfitting.

Although our analysis focuses on symmetric label corruption, the underlying
mechanism is not tied to a particular corruption model. The theory predicts
rank inflation whenever corrupted examples inject covariance mass into weak or
previously underrepresented eigendirections of the Fisher-gradient scatter.
Consequently, similar behavior may arise under asymmetric, class-dependent, or
instance-dependent corruption, provided that the resulting gradients introduce
sufficient spectral diversification. Investigating these settings remains an
important direction for future work.

Several limitations should be noted. First, our theoretical results are local checkpoint spectral statements. They identify conditions under which corrupted gradients increase effective rank and receive larger leave-one-out attribution. Second, broader validation across larger models, alternative modalities, and structured or instance-dependent corruption remains necessary. Third, the first-order attribution approximation is highly accurate for the convolutional CIFAR-10 models but weaker for the Vision Transformer and CIFAR-100. This indicates that higher-order deletion effects can be important. Fourth, the clean-difficulty control experiment relies on high-loss correctly labeled examples as a proxy for difficult, rare, or ambiguous clean samples. Training loss is an imperfect measure of intrinsic difficulty and may also reflect transient optimization effects, while some genuinely difficult examples may already have low loss. Finally, although Fisher Rank Inflation often precedes test degradation and strongly enriches corrupted examples among the largest contributors, the present results support it primarily as a spectral signature of memorization rather than as a universally validated online stopping rule or generic noisy-label detector.

Overall, Fisher Rank Inflation provides a new perspective on memorization in
deep networks. By connecting gradient-spectrum dynamics, sample-level
attribution, and label corruption through a common spectral framework, our
results suggest that effective-rank dynamics offer a useful lens for studying
how deep networks transition from learning structure to fitting noise.

\section{Conclusion}

We introduced \emph{Fisher Rank Inflation}, a phenomenon in the gradient spectrum that emerges during memorization under label noise. Across architectures and datasets, we
observed a characteristic inflation--collapse trajectory in the effective rank
of the Fisher-gradient scatter. Effective rank expands as corrupted labels
begin to be fit, reaches its maximum, and subsequently contracts after
memorization.

To explain this behavior, we link effective-rank to the redistribution of covariance mass across eigendirections of the
Fisher-gradient scatter. Corrupted examples introduce variance into weak or previously unused spectral directions. This raises the spectral entropy and, in turn, effective rank.
 We further derived a
leave-one-out attribution framework and identified conditions under which
corrupted examples contribute more strongly to rank inflation than clean
examples.

Empirically, we tested these mechanisms on CIFAR-10 and CIFAR-100 using
SmallCNN, ResNet18, and Vision Transformers under symmetric label corruption.
Fisher Rank Inflation consistently appeared during memorization, increased in
magnitude with corruption severity, and was driven disproportionately by
corrupted examples. At peak-rank checkpoints, corrupted samples
were strongly enriched among the highest rank-contributing examples, with
top-100 noisy fractions ranging from \(69.2\%\) to \(96.2\%\) across five-seed
experiments. Direct checkpoint-level diagnostics further linked the
first-order spectral attribution score to exact leave-one-out rank
contributions, with architecture-dependent strength. A seeded corruption sweep
showed that peak Fisher effective rank increases monotonically with corruption
severity, rising from \(28.88 \pm 1.95\) under clean training to
\(97.09 \pm 1.78\) at \(60\%\) corruption.

Taken together, these results establish Fisher Rank Inflation as a spectral
signature of memorization in deep networks. More broadly, our findings suggest
that effective-rank dynamics provide a useful perspective for studying how gradient
representations evolve during training and how corrupted examples reshape the
spectral structure of learning. Additional experiments on CIFAR-10N demonstrate that Fisher Rank Inflation persists under naturally occurring human annotation errors, indicating that the phenomenon is not an artifact of synthetic symmetric corruption.

\bibliographystyle{abbrv}
\bibliography{references}

\appendix
\section{Appendix}

\subsection{Proofs}
\subsubsection{Proof of Lemma~1}

\begin{proof}
Since \(p_j=\lambda_j/T\), we have
\[
\frac{\partial p_j}{\partial \lambda_k}
=
\frac{\mathbf{1}\{j=k\}-p_j}{T}.
\]
Therefore,
\[
\frac{\partial H}{\partial \lambda_k}
=
-\sum_j(\log p_j+1)\frac{\partial p_j}{\partial \lambda_k}
=
-\frac{1}{T}\bigl(\log p_k+H(S)\bigr).
\]
Applying the chain rule to \(\operatorname{er}(S)=e^{H(S)}\) gives
\[
\frac{\partial \operatorname{er}(S)}{\partial \lambda_k}
=
-\frac{\operatorname{er}(S)}{T}
\bigl(\log p_k+H(S)\bigr).
\]
The sign condition follows because
\[
\log p_k+H(S)<0
\quad \Longleftrightarrow \quad
p_k < e^{-H(S)} = \frac{1}{\operatorname{er}(S)}.
\]
\end{proof}

\subsubsection{Proof of Lemma~2}

\begin{proof}
Let
\[
\mu_{-i}
=
\frac{1}{n-1}\sum_{j\neq i}g_j
=
\frac{n\mu-g_i}{n-1}.
\]
Then
\[
\mu-\mu_{-i}
=
\frac{g_i-\mu}{n-1}
=
\frac{\bar g_i}{n-1}.
\]
Hence, for \(j\neq i\),
\[
g_j-\mu_{-i}
=
(g_j-\mu)+(\mu-\mu_{-i})
=
\bar g_j+\frac{\bar g_i}{n-1}.
\]
Therefore,
\[
S_{-i}
=
\sum_{j\neq i}
\left(
\bar g_j+\frac{\bar g_i}{n-1}
\right)
\left(
\bar g_j+\frac{\bar g_i}{n-1}
\right)^\top.
\]
Expanding and using \(\sum_{j=1}^n \bar g_j=0\), so that
\[
\sum_{j\neq i}\bar g_j = -\bar g_i,
\]
we obtain
\[
S_{-i}
=
S-\bar g_i\bar g_i^\top
-\frac{2}{n-1}\bar g_i\bar g_i^\top
+\frac{1}{n-1}\bar g_i\bar g_i^\top.
\]
Thus
\[
S_{-i}
=
S-\left(1+\frac{1}{n-1}\right)\bar g_i\bar g_i^\top
=
S-\frac{n}{n-1}\bar g_i\bar g_i^\top.
\]
\end{proof}

\subsubsection{Proof of Lemma~3}

\begin{proof}
By Lemma~\ref{lem:centered-deletion},
\[
S_{-i}=S-a\bar g_i\bar g_i^\top,
\qquad
a=\frac{n}{n-1}.
\]
The perturbation has operator norm \(a\|\bar g_i\|^2\). The condition
\[
a\|\bar g_i\|^2\le c\delta(S)
\]
ensures that the positive eigenvalues remain isolated from one another and
separated from zero along the deletion path
\[
S-t a\bar g_i\bar g_i^\top,
\qquad t\in[0,1].
\]
Hence the positive spectral support is locally stable, and standard first-order
eigenvalue perturbation gives
\[
\lambda_k(S_{-i})
=
\lambda_k(S)-a(u_k^\top \bar g_i)^2
+
O(\|\bar g_i\|^4).
\]

This follows from standard perturbation theory for symmetric matrices with
simple isolated eigenvalues. The second-order remainder is controlled by the
local spectral gap \(\delta(S)\).

Taylor expanding effective rank as a smooth function of the positive eigenvalues
inside this local spectral neighborhood gives
\[
\operatorname{er}(S_{-i})
=
\operatorname{er}(S)
-
a\sum_{k=1}^r
\beta_k(S)(u_k^\top \bar g_i)^2
+
O(\|\bar g_i\|^4).
\]
Rearranging proves the claim.
\end{proof}

\subsubsection{Proof of Lemma~4}

\begin{proof}
Let
\[
T=\operatorname{tr}(A)>0,
\qquad
A_\alpha=A+\alpha B.
\]
Let \(P_A\) be the orthogonal projector onto \(\operatorname{range}(A)\), and let
\[
Q_A=I-P_A.
\]
By assumption,
\[
\tau:=\operatorname{tr}(Q_A B Q_A)>0.
\]
Thus the perturbation \(B\) places positive trace mass in directions belonging
to the zero eigenspace of \(A\).

The original positive eigenvalues of \(A\) change by \(O(\alpha)\). Therefore,
their contribution to the normalized spectral entropy changes by \(O(\alpha)\).
Meanwhile, the total unnormalized spectral mass emerging from the zero
eigenspace is
\[
\alpha\tau+O(\alpha^2).
\]
After trace normalization, the total normalized mass assigned to the newly
activated eigenvalues is
\[
\varepsilon_\alpha
=
\frac{\alpha\tau+O(\alpha^2)}
{T+\alpha\operatorname{tr}(B)}
=
\frac{\tau}{T}\alpha+O(\alpha^2).
\]

Let the normalized masses assigned to the newly activated eigenvalues be
\[
q_1^{\mathrm{new}},\ldots,q_m^{\mathrm{new}},
\qquad
\sum_{\ell=1}^m q_\ell^{\mathrm{new}}
=
\varepsilon_\alpha.
\]
Write
\[
q_\ell^{\mathrm{new}}
=
\varepsilon_\alpha r_\ell,
\qquad
\sum_{\ell=1}^m r_\ell=1.
\]
Then the entropy contribution of the newly activated eigenvalues is
\[
-\sum_{\ell=1}^m
q_\ell^{\mathrm{new}}\log q_\ell^{\mathrm{new}}
=
-\sum_{\ell=1}^m
\varepsilon_\alpha r_\ell
\log(\varepsilon_\alpha r_\ell).
\]
Expanding this expression gives
\[
-\sum_{\ell=1}^m
q_\ell^{\mathrm{new}}\log q_\ell^{\mathrm{new}}
=
-\varepsilon_\alpha\log\varepsilon_\alpha
+
\varepsilon_\alpha
\left(
-\sum_{\ell=1}^m r_\ell\log r_\ell
\right).
\]
Since
\[
-\sum_{\ell=1}^m r_\ell\log r_\ell \ge 0,
\]
we obtain the lower bound
\[
-\sum_{\ell=1}^m
q_\ell^{\mathrm{new}}\log q_\ell^{\mathrm{new}}
\ge
-\varepsilon_\alpha\log\varepsilon_\alpha.
\]
Using
\[
\varepsilon_\alpha
=
\frac{\tau}{T}\alpha+O(\alpha^2),
\]
we have
\[
-\varepsilon_\alpha\log\varepsilon_\alpha
=
\frac{\tau}{T}\alpha\log(1/\alpha)+O(\alpha).
\]
Thus the newly activated zero-to-positive spectrum contributes at least a
positive \(\alpha\log(1/\alpha)\) term to the normalized spectral entropy.

The contribution from the pre-existing positive spectrum changes only by
\(O(\alpha)\). Since
\[
\alpha\log(1/\alpha)\gg \alpha
\qquad
\text{as } \alpha\downarrow 0,
\]
the positive contribution from the newly activated spectral directions dominates
the \(O(\alpha)\) change from the old positive spectrum. Hence, for all
sufficiently small \(\alpha>0\),
\[
H(A+\alpha B)>H(A).
\]
Therefore,
\[
\operatorname{er}(A+\alpha B)
=
\exp(H(A+\alpha B))
>
\exp(H(A))
=
\operatorname{er}(A).
\]
\end{proof}

\subsubsection{Proof of Lemma~5}

\begin{proof}

Throughout the proof, we condition on the fixed checkpoint and fixed
prediction-representation pairs \(\{(\hat p_i,\tilde z_i)\}_{i=1}^n\). The only
randomness is over the replacement labels \(\{\tilde y_i\}_{i\in N}\).

Since \(Q_C\) is an orthogonal projector,
\[
\operatorname{tr}(Q_C S_N^{\mathrm{glob}} Q_C)
=
\operatorname{tr}
\left(
Q_C
\sum_{i\in N}
(g_i(\tilde y_i)-\mu)(g_i(\tilde y_i)-\mu)^\top
Q_C
\right).
\]
Using linearity of trace,
\[
\operatorname{tr}(Q_C S_N^{\mathrm{glob}} Q_C)
=
\sum_{i\in N}
\operatorname{tr}
\left(
Q_C(g_i(\tilde y_i)-\mu)(g_i(\tilde y_i)-\mu)^\top Q_C
\right).
\]
For each term,
\[
\operatorname{tr}
\left(
Q_C(g_i(\tilde y_i)-\mu)(g_i(\tilde y_i)-\mu)^\top Q_C
\right)
=
\|Q_C(g_i(\tilde y_i)-\mu)\|^2.
\]
Therefore,
\[
\operatorname{tr}(Q_C S_N^{\mathrm{glob}} Q_C)
=
\sum_{i\in N}
\|Q_C(g_i(\tilde y_i)-\mu)\|^2.
\]
Taking expectation gives the claimed identity. Since each term is nonnegative,
the expectation is strictly positive whenever at least one corrupted centered
gradient has a nonzero \(Q_C\)-projection with positive probability.
\end{proof}

\subsubsection{Proof of Theorem~1}

\begin{proof}
For Case 1, apply Lemma~\ref{lem:new-directions-entropy} with
\(A=S_C^{\mathrm{glob}}\) and \(B=S_N^{\mathrm{glob}}\). Since
\[
\operatorname{tr}(Q_C S_N^{\mathrm{glob}} Q_C)>0,
\]
the lemma implies that there exists \(\alpha_0>0\) such that, for all
\(\alpha\in(0,\alpha_0)\),
\[
\operatorname{er}(S_C^{\mathrm{glob}}+\alpha S_N^{\mathrm{glob}})
>
\operatorname{er}(S_C^{\mathrm{glob}}).
\]

Now consider Case 2. Since the positive eigenvalues of
\(S_C^{\mathrm{glob}}\) are simple and \(S_N^{\mathrm{glob}}\) is supported inside
\(\operatorname{range}(S_C^{\mathrm{glob}})\), no new positive eigenvalues are
created for sufficiently small \(\alpha\). Standard first-order eigenvalue
perturbation gives
\[
\left.
\frac{d}{d\alpha}
\lambda_k(S_C^{\mathrm{glob}}+\alpha S_N^{\mathrm{glob}})
\right|_{\alpha=0}
=
u_k^\top S_N^{\mathrm{glob}} u_k.
\]
Using Lemma~\ref{lem:effective-rank-derivative},
\[
\left.
\frac{d}{d\alpha}
\operatorname{er}(S_C^{\mathrm{glob}}+\alpha S_N^{\mathrm{glob}})
\right|_{\alpha=0}
=
\sum_{k=1}^r
\beta_k(S_C^{\mathrm{glob}})
u_k^\top S_N^{\mathrm{glob}} u_k.
\]
This derivative is positive by assumption. Hence, by continuity, effective rank
increases for all sufficiently small positive \(\alpha\).
\end{proof}

\subsubsection{Proof of Theorem~2}

\begin{proof}
By Lemma~\ref{lem:first-order-loo},
\[
\Delta_i
=
a\bar g_i^\top B_S\bar g_i
+
R_i,
\]
where
\[
|R_i|
\le
C(S)\|\bar g_i\|^4
\le
C(S)\eta^2.
\]
Taking conditional expectations gives
\[
\mathbb{E}[\Delta_i\mid A]
=
a\mathbb{E}\left[\bar g_i^\top B_S\bar g_i\mid A\right]
+
\mathbb{E}[R_i\mid A],
\]
for \(A\in\{C,N\}\). Since
\[
\mathbb{E}\left[\bar g_i^\top B_S\bar g_i\mid A\right]
=
\operatorname{tr}(B_S\Sigma_A),
\]
we have
\[
\mathbb{E}[\Delta_i\mid A]
=
a\operatorname{tr}(B_S\Sigma_A)
+
\mathbb{E}[R_i\mid A].
\]
Therefore,
\[
\mathbb{E}[\Delta_i\mid N]
-
\mathbb{E}[\Delta_i\mid C]
=
a\operatorname{tr}\left(B_S(\Sigma_N-\Sigma_C)\right)
+
\mathbb{E}[R_i\mid N]
-
\mathbb{E}[R_i\mid C].
\]
The remainder difference is bounded below by
\[
\mathbb{E}[R_i\mid N]-\mathbb{E}[R_i\mid C]
\ge
-2C(S)\eta^2.
\]
Hence
\[
\mathbb{E}[\Delta_i\mid N]
-
\mathbb{E}[\Delta_i\mid C]
\ge
a\operatorname{tr}\left(B_S(\Sigma_N-\Sigma_C)\right)
-
2C(S)\eta^2.
\]
The assumed condition
\[
\operatorname{tr}\left(B_S(\Sigma_N-\Sigma_C)\right)
>
\frac{2C(S)}{a}\eta^2
\]
makes the right-hand side positive. Therefore,
\[
\mathbb{E}[\Delta_i\mid i\in N]
>
\mathbb{E}[\Delta_i\mid i\in C].
\]
\end{proof}

\subsubsection{Proof of Corollary~1}

\begin{proof}
The positive-sensitivity contribution satisfies
\[
\sum_{k\in K_+(S)}
\beta_k(S)
\left(
\mathbb{E}\left[(u_k^\top \bar g_i)^2\mid i\in N\right]
-
\mathbb{E}\left[(u_k^\top \bar g_i)^2\mid i\in C\right]
\right)
\ge
b_+\delta_+.
\]
The complementary contribution is at least \(-B_-\) by assumption. Therefore,
\[
\operatorname{tr}\left(B_S(\Sigma_N-\Sigma_C)\right)
\ge
b_+\delta_+ - B_-.
\]
If
\[
b_+\delta_+ - B_- > \frac{2C(S)}{a}\eta^2,
\]
then
\[
\operatorname{tr}\left(B_S(\Sigma_N-\Sigma_C)\right)
>
\frac{2C(S)}{a}\eta^2.
\]
The result follows from Theorem~\ref{thm:corrupted-larger-loo}.
\end{proof}

\subsubsection{Proof of Proposition~1}

\begin{proof}
Shannon entropy is Schur-concave. Therefore,
\[
p(\widetilde S_N)\prec p(\widetilde S_C)
\quad\Longrightarrow\quad
H(p(\widetilde S_N))\ge H(p(\widetilde S_C)).
\]
Exponentiating both sides gives
\[
\operatorname{er}(\widetilde S_N)
\ge
\operatorname{er}(\widetilde S_C).
\]
The claim for \(NIR\) follows immediately. Strict inequality follows from strict
Schur-concavity of entropy when the spectra are not permutations of one another.
\end{proof}

\subsubsection{Proof of Theorem~3}

\begin{proof}
Effective rank depends only on the eigenvalues of the normalized matrix
\(\rho^{(t)}\):
\[
\log \operatorname{er}(S^{(t)})=H(\rho^{(t)}).
\]
The trace-one positive semidefinite cone is compact in finite dimension, and
entropy is continuous on this cone, with the convention \(0\log 0=0\).
Therefore, entropy is uniformly continuous. Hence,
\[
\max_i\left\|\rho^{(t)}-\rho_{-i}^{(t)}\right\|_1\to 0
\quad\Longrightarrow\quad
\max_i
\left|H(\rho^{(t)})-H(\rho_{-i}^{(t)})\right|\to 0.
\]
Exponentiation is continuous on bounded intervals, so
\[
\max_i
\left|
\operatorname{er}(S^{(t)})
-
\operatorname{er}(S_{-i}^{(t)})
\right|
\to 0.
\]
Thus
\[
\max_i|\Delta_i^{(t)}|\to 0.
\]

It remains to prove the large sample bound. By Lemma~\ref{lem:centered-deletion},
\[
S^{(t)}-S_{-i}^{(t)}
=
\frac{n}{n-1}\bar g_i^{(t)}\bar g_i^{(t)\top}.
\]
Therefore,
\[
\left\|S^{(t)}-S_{-i}^{(t)}\right\|_1
=
\frac{n}{n-1}\left\|\bar g_i^{(t)}\right\|^2.
\]
Moreover,
\[
\operatorname{tr}(S^{(t)})-\operatorname{tr}(S_{-i}^{(t)})
=
\frac{n}{n-1}\left\|\bar g_i^{(t)}\right\|^2.
\]
Let
\[
\delta_i^{(t)}
=
\frac{n}{n-1}\left\|\bar g_i^{(t)}\right\|^2.
\]
Since \(S_{-i}^{(t)}\succeq 0\), we have
\[
\left\|
\frac{S^{(t)}}{\operatorname{tr}(S^{(t)})}
-
\frac{S_{-i}^{(t)}}{\operatorname{tr}(S_{-i}^{(t)})}
\right\|_1
\le
\frac{\left\|S^{(t)}-S_{-i}^{(t)}\right\|_1}
{\operatorname{tr}(S^{(t)})}
+
\left\|S_{-i}^{(t)}\right\|_1
\left|
\frac{1}{\operatorname{tr}(S^{(t)})}
-
\frac{1}{\operatorname{tr}(S_{-i}^{(t)})}
\right|.
\]
Because \(\left\|S_{-i}^{(t)}\right\|_1=\operatorname{tr}(S_{-i}^{(t)})\), this becomes
\[
\left\|
\frac{S^{(t)}}{\operatorname{tr}(S^{(t)})}
-
\frac{S_{-i}^{(t)}}{\operatorname{tr}(S_{-i}^{(t)})}
\right\|_1
\le
\frac{\delta_i^{(t)}}{\operatorname{tr}(S^{(t)})}
+
\frac{
\left|\operatorname{tr}(S^{(t)})-\operatorname{tr}(S_{-i}^{(t)})\right|
}
{\operatorname{tr}(S^{(t)})}
=
2\frac{\delta_i^{(t)}}{\operatorname{tr}(S^{(t)})}.
\]
Thus,
\[
\left\|
\rho^{(t)}-\rho_{-i}^{(t)}
\right\|_1
\leq
2\frac{n}{n-1}
\frac{\|\bar g_i^{(t)}\|^2}
{\operatorname{tr}(S^{(t)})}.
\]

Therefore, if
\[
\max_i
\frac{\|\bar g_i^{(t)}\|^2}
{\operatorname{tr}(S^{(t)})}
=
O(1/n)
\]
uniformly over the epochs under consideration, then
\[
\max_i
\left\|
\rho^{(t)}-\rho_{-i}^{(t)}
\right\|_1
=
O(1/n).
\]
Since entropy is uniformly continuous on the finite dimensional set of
trace one positive semidefinite matrices, the corresponding leave one out
effective rank contributions become small as $n$ grows.
\end{proof}

\subsection{Training Details and Reproducibility}
\label{app:training-details}

All experiments use controlled synthetic label corruption on CIFAR training subsets.
Unless otherwise stated, we use a training subset of 20,000 examples sampled without
replacement. For each run, the training subset, corruption pattern, model
initialization, minibatch order, and stochastic data augmentations are determined by
the run seed. The clean label is retained separately for every selected training
example, while the observed training label is replaced according to the label-noise
protocol below. Evaluation on the test set is always performed with the original clean
labels.

The training dataloader uses standard CIFAR data augmentation: random cropping with
padding 4 and random horizontal flipping. Fisher-rank evaluation, leave-one-out
analysis, and clean test evaluation use non-augmented inputs. In particular, the
evaluation transform only converts images to tensors, except for the CIFAR-100
ResNet18 experiments where the standard CIFAR-100 normalization is also applied.

\paragraph{Label-noise protocol.}
Let \(\mathcal{D}=\{(x_i,y_i)\}_{i=1}^n\) denote the clean training subset, where
\(y_i \in \{1,\ldots,K\}\). For a target corruption rate \(\rho\), we sample a
corruption index set
\[
\mathcal{I}_\rho \subset \{1,\ldots,n\},
\qquad
|\mathcal{I}_\rho| = \lfloor \rho n \rfloor,
\]
uniformly without replacement. We then construct the corrupted training set
\(\widetilde{\mathcal{D}}=\{(x_i,\widetilde{y}_i)\}_{i=1}^n\) by setting
\[
\widetilde{y}_i =
\begin{cases}
y_i, & i \notin \mathcal{I}_\rho, \\
\text{Uniform}\big(\{1,\ldots,K\}\setminus\{y_i\}\big), & i \in \mathcal{I}_\rho.
\end{cases}
\]
We retain the corruption indicator
\[
m_i = \mathbf{1}\{i \in \mathcal{I}_\rho\}
=
\mathbf{1}\{\widetilde{y}_i \neq y_i\}
\]
for analysis only. The model is trained using the observed labels
\(\widetilde{y}_i\), while clean-label accuracy is computed using \(y_i\).

\paragraph{Architectures.}
For CIFAR-10 ResNet18, we use a ResNet18 backbone trained from scratch. The first
convolution is modified to use a \(3\times 3\) kernel with stride 1 and padding 1,
and the initial max-pooling layer is removed. The original fully connected layer is
replaced by an identity map, followed by a separate linear classifier from the
512-dimensional representation to 10 classes.

For CIFAR-100 ResNet18, we use the same CIFAR-adapted ResNet18 architecture, except
that the classifier maps the 512-dimensional representation to 100 classes.

For the SmallCNN experiments, we use a three-block convolutional network. The first
block has two \(3\times 3\) convolutional layers with 64 channels, batch normalization,
and ReLU activations, followed by max pooling. The second block has two \(3\times 3\)
convolutional layers with 128 channels, batch normalization, and ReLU activations,
followed by max pooling. The third block has a \(3\times 3\) convolutional layer with
256 channels, batch normalization, and ReLU activation, followed by global adaptive
average pooling. The resulting 256-dimensional feature vector is projected to a
128-dimensional representation and classified using a linear layer.

For the ViT experiment, we use a compact Vision Transformer trained from scratch on
CIFAR-10. Images of size \(32\times 32\) are divided into \(4\times 4\) patches,
giving 64 patches per image. The model uses embedding dimension 384, depth 6,
6 attention heads, MLP dimension 1536, dropout 0.1, learned positional embeddings,
and a learned class token. The final classifier maps the class-token representation
to 10 classes.

\paragraph{Optimization.}
For SmallCNN, CIFAR-10 ResNet18, and CIFAR-100 ResNet18, we use stochastic gradient
descent with momentum 0.9, learning rate 0.05, and weight decay \(5\times 10^{-4}\).
The learning rate is scheduled using cosine annealing over the full training horizon.
For the CIFAR-10 ViT experiment, we use AdamW with learning rate \(3\times 10^{-4}\)
and weight decay \(5\times 10^{-2}\), again with cosine annealing over the full
training horizon. All experiments use batch size 128. Unless otherwise stated,
Fisher statistics are computed every 5 epochs.

\begin{table}[h]
\centering
\small
\caption{Experiment-specific training details. All experiments use batch size 128,
cosine annealing, and Fisher evaluation every 5 epochs. LOO denotes true leave-one-out
rank contribution analysis at the peak-rank checkpoint.}
\label{tab:training-hyperparams}
\begin{tabular}{lccccc}
\toprule
Experiment & Noise setting & Seeds & Epochs & Fisher samples & LOO samples \\
\midrule
CIFAR-10 SmallCNN
& \(\{0.5\}\)
& \(\{42,43,44,45,46\}\)
& 100
& 4096
& 2048 \\

CIFAR-10 ResNet18
& \(\{0.5\}\)
& \(\{42,43,44,45,46\}\)
& 100
& 4096
& 2048 \\

CIFAR-10 ViT
& \(\{0.5\}\)
& \(\{42,43,44,45,46\}\)
& 200
& 4096
& 2048 \\

CIFAR-100 ResNet18
& \(\{0.5\}\)
& \(\{42,43,44,45,46\}\)
& 100
& 2048
& 2048 \\

CIFAR-10 ResNet18 noise sweep
& \(\{0.0,0.2,0.3,0.4,0.5,0.6\}\)
& \(\{42,43,44\}\)
& 100
& 4096
& -- \\
CIFAR-10N ResNet18
& \texttt{worse\_label}
& \{42,43,44,45,46\}
& 100
& 4096
& 2048 \\
\bottomrule
\end{tabular}
\end{table}

\paragraph{Noise-rate sweep.}
To test whether Fisher Rank Inflation strengthens with corruption severity, we run a
separate CIFAR-10 ResNet18 noise-rate sweep with
\[
\rho \in \{0.0,0.2,0.3,0.4,0.5,0.6\}
\]
and seeds \(\{42,43,44\}\), giving \(6\times 3=18\) training runs. This experiment
does not use leave-one-out analysis. For each pair \((\rho,s)\), we train a model for
100 epochs and log one summary row containing the noise rate, seed, peak Fisher
effective rank, peak-rank epoch, peak NIR, peak-NIR epoch, final NIR, final clean
test accuracy, final training accuracy on noisy labels, final training accuracy
against clean labels, rank onset, overfitting onset, and lead time.

The main noise-sweep figure reports summary statistics only. Specifically, we plot
the mean and standard deviation over the three seeds of peak Fisher effective rank
as a function of \(\rho\), and the mean and standard deviation over the three seeds
of peak NIR as a function of \(\rho\). Since NIR requires both clean and corrupted
training examples, it is reported for \(\rho>0\). The full training curves, including
Fisher effective rank versus epoch, NIR versus epoch, test accuracy versus epoch,
and training accuracy on noisy labels versus epoch, are moved to the appendix.

\paragraph{Fisher-rank computation.}
At a given checkpoint, we compute the per-example last-layer gradient matrix. Let
\(h_i\in\mathbb{R}^m\) denote the penultimate representation of example \(i\), and let
\(p_i\in\mathbb{R}^K\) denote the softmax probability vector. For the observed training
label \(\widetilde{y}_i\), the last-layer gradient is
\[
g_i
=
\left[
\operatorname{vec}\left((p_i-e_{\widetilde{y}_i})h_i^\top\right),
\;
p_i-e_{\widetilde{y}_i}
\right]
\in \mathbb{R}^{K(m+1)},
\]
where \(e_{\widetilde{y}_i}\) is the one-hot vector for the observed label. Stacking
these gradients gives
\[
G =
\begin{bmatrix}
g_1^\top \\
\vdots \\
g_M^\top
\end{bmatrix}
\in \mathbb{R}^{M\times K(m+1)}.
\]
Here \(M\) denotes the number of examples used for Fisher-rank evaluation at the checkpoint.

For any gradient matrix \(A\in\mathbb{R}^{M_A\times K(m+1)}\), we define its
row-centered version by
\[
\mathcal{C}(A)
=
A-\mathbf{1}_{M_A}\bar g_A^\top,
\qquad
\bar g_A
=
\frac{1}{M_A}\sum_{i=1}^{M_A} a_i,
\]
where \(a_i\) denotes the \(i\)-th row of \(A\). All Fisher-rank quantities are computed
from the centered last-layer gradient scatter spectrum. Specifically, we compute the
positive eigenvalues of
\[
S(A)
=
\mathcal{C}(A)\mathcal{C}(A)^\top.
\]
Equivalently, these are the squared nonzero singular values of \(\mathcal{C}(A)\).
The normalization of \(S(A)\) by a scalar factor does not affect the effective rank,
since the eigenvalues are normalized before computing entropy.

In numerical computations, eigenvalues are clamped to be nonnegative, and
eigenvalues not exceeding \(10^{-12}\) are discarded before entropy
normalization. The same tolerance is used inside the entropy computation.

Although this absolute tolerance is not strictly scale invariant, we performed
a sensitivity analysis at the peak rank checkpoints using relative thresholds
of the form $\lambda_j > \tau \lambda_{\max}$, with
$\tau \in \{10^{-14},10^{-12},10^{-10},10^{-8}\}$. Across the evaluated
datasets and architectures, the resulting effective ranks differed from those
obtained with the absolute threshold by at most approximately $3\times10^{-6}$
in relative terms, while essentially all scatter trace was retained.

Let \(\{\lambda_j(A)\}_{j=1}^{r_A}\) denote the positive eigenvalues of \(S(A)\). We
define normalized spectral weights
\[
q_j(A)
=
\frac{\lambda_j(A)}
{\sum_{\ell=1}^{r_A}\lambda_\ell(A)}.
\]
The Fisher effective rank of \(A\) is then
\[
\operatorname{erank}(A)
=
\exp\left(
-\sum_{j=1}^{r_A} q_j(A)\log q_j(A)
\right).
\]

\paragraph{Noisy-to-Clean Inflation Ratio.}
For each checkpoint, we split the per-example last-layer gradient matrix into
clean-label and corrupted-label submatrices:
\[
G_{\mathrm{clean}}
=
\begin{bmatrix}
g_i^\top : m_i=0
\end{bmatrix},
\qquad
G_{\mathrm{noisy}}
=
\begin{bmatrix}
g_i^\top : m_i=1
\end{bmatrix}.
\]
The two submatrices are centered separately before computing their spectra. That is,
\(\operatorname{erank}(G_{\mathrm{clean}})\) is computed from
\[
\mathcal{C}(G_{\mathrm{clean}})
=
G_{\mathrm{clean}}
-
\mathbf{1}_{M_{\mathrm{clean}}}\bar g_{\mathrm{clean}}^\top,
\]
where \(\bar g_{\mathrm{clean}}\) is the mean of the clean-example gradients, and
\(\operatorname{erank}(G_{\mathrm{noisy}})\) is computed from
\[
\mathcal{C}(G_{\mathrm{noisy}})
=
G_{\mathrm{noisy}}
-
\mathbf{1}_{M_{\mathrm{noisy}}}\bar g_{\mathrm{noisy}}^\top,
\]
where \(\bar g_{\mathrm{noisy}}\) is the mean of the corrupted-example gradients.
The Noisy-to-Clean Inflation Ratio is defined as
\[
\mathrm{NIR}
=
\frac{
\operatorname{erank}(G_{\mathrm{noisy}})
}{
\operatorname{erank}(G_{\mathrm{clean}})
}.
\]
Thus, NIR compares the effective ranks of the separately subset-centered clean and
corrupted last-layer gradient scatter spectra. A value \(\mathrm{NIR}>1\) indicates that corrupted examples have a more spectrally
diffuse subset-centered last-layer gradient scatter than clean examples at that checkpoint. For the clean setting \(\rho=0\), NIR is undefined
and is therefore omitted from NIR-based summaries.

\paragraph{Rank onset, overfitting onset, and lead time.}
For each run, we define the increase in effective rank relative to initialization as
\[
\Delta r_t
=
\operatorname{erank}(G_t)-\operatorname{erank}(G_0).
\]
The rank-onset epoch is the first evaluated epoch at which \(\Delta r_t\) reaches
20\% of its maximum value over training. The overfitting-onset epoch is the first
evaluated epoch at which clean test accuracy has dropped by at least 0.03 from the
best previous clean test accuracy. The lead time is defined as
\[
\text{lead time}
=
t_{\mathrm{overfit}} - t_{\mathrm{rank}}.
\]
A positive lead time indicates that Fisher-rank inflation begins before the clean
test accuracy degradation becomes visible.

\paragraph{Leave-one-out rank contribution.}
For fixed-noise attribution experiments, we evaluate leave-one-out rank contributions
at the peak-rank checkpoint. The peak-rank checkpoint is the evaluated checkpoint with
the largest Fisher effective rank. Let \(G_{-i}\) denote the matrix obtained by
removing row \(g_i^\top\) from \(G\). For each example \(i\), we recompute the row mean
of the remaining \(M-1\) gradients, re-center the remaining matrix, and then recompute
the centered scatter spectrum. Equivalently, the leave-one-out rank contribution is
\[
\Delta_i
=
\operatorname{erank}(G)
-
\operatorname{erank}(G_{-i}),
\]
where both terms use the centering operator defined above: \(\operatorname{erank}(G)\)
is computed from \(\mathcal{C}(G)\), while \(\operatorname{erank}(G_{-i})\) is computed
from \(\mathcal{C}(G_{-i})\) using the empirical mean of the remaining rows after row
\(i\) is removed. Thus, leave-one-out attribution removes the example, re-centers the
remaining gradients, and recomputes the effective rank from the resulting centered
last-layer gradient scatter spectrum. We use these scores to rank the examples included in the leave-one-out attribution subset
and evaluate whether corrupted examples are overrepresented among the highest
rank-contributing samples. The noise-rate sweep does not use leave-one-out analysis.

\paragraph{Reproducibility.}
For every run, we set the Python, NumPy, and PyTorch random seeds to the
specified run seed. The reported five-seed experiments use seeds
\(\{42,43,44,45,46\}\), while the corruption-rate sweep uses three seeds per
corruption level. Training subsets, label-corruption masks, minibatch order,
model initialization, Fisher-evaluation subsets, and leave-one-out subsets are
generated deterministically from the run seed. Fisher-rank evaluation,
leave-one-out attribution, direct spectral diagnostics, and clean test
evaluation use non-augmented inputs. All Fisher-gradient evaluations, leave-one-out computations, and test evaluations are performed
with the model in evaluation mode, so that batch-normalization statistics are fixed during spectral
measurement.

All experiments were run locally on an Apple Silicon M4 Max machine using the
PyTorch MPS backend when available, with CPU fallback enabled for operations
not supported on MPS. In notebook-based Apple Silicon runs, dataloaders use
\texttt{num\_workers=0} to avoid backend-specific multiprocessing issues. The
implementation automatically selects Apple MPS when available, otherwise CUDA,
and otherwise CPU. The logged quantities include epoch, model type, noise
probability \(\rho\), seed, train loss, training accuracy with respect to noisy
labels, training accuracy with respect to clean labels, clean test accuracy,
Fisher effective rank, clean-example Fisher effective rank, noisy-example
Fisher effective rank, NIR, rank-onset epoch, overfitting-onset epoch, and lead
time. Peak-rank leave-one-out attribution and direct spectral diagnostics are
computed from the saved seed-specific peak-rank checkpoints and the same
deterministic leave-one-out subsets.

\subsection{Additional Training Dynamics}

The main paper focuses on ResNet18 trained on CIFAR-10. In this appendix, we
provide additional training-dynamics plots for SmallCNN, Vision Transformers,
and ResNet18 trained on CIFAR-100. Across all settings, we observe the same
qualitative phenomenon: Fisher effective rank undergoes a transient expansion
during memorization before subsequently collapsing later in training.

\subsubsection{SmallCNN Dynamics}

Figure~\ref{fig:smallcnn_dynamics} shows a representative training trajectory
for SmallCNN under \(50\%\) symmetric label corruption. Similar to ResNet18,
Fisher effective rank exhibits a pronounced inflation phase followed by a later
contraction. The noisy-to-clean inflation ratio (NIR) remains above one during
much of training, indicating that corrupted gradients are more spectrally
dispersed than clean gradients in the subset-centered Fisher-gradient scatter.
As training progresses, the network increasingly fits the provided noisy
labels, while clean test accuracy remains substantially lower than training
accuracy. These observations are consistent with the Fisher Rank Inflation
mechanism described in the main text.

\begin{figure}[H]
    \centering
    \includegraphics[width=0.75\linewidth]{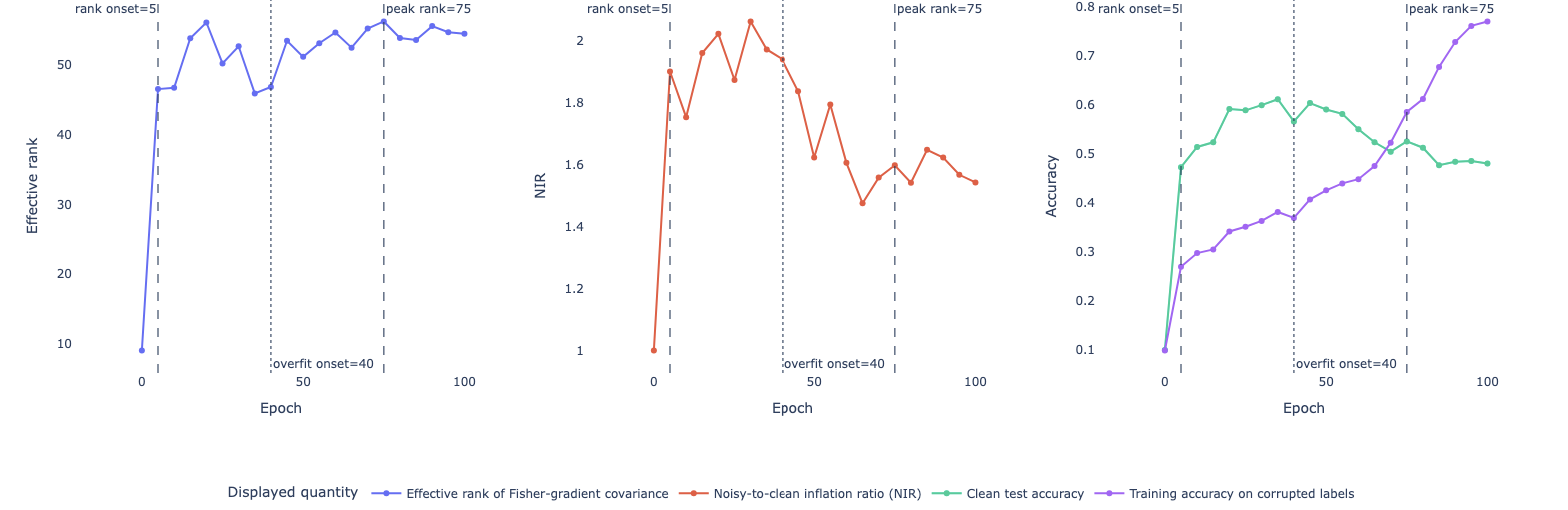}
    \caption{Representative training dynamics for SmallCNN on CIFAR-10 under \(50\%\) symmetric label corruption. Fisher effective rank exhibits an inflation--collapse trajectory similar to that observed in ResNet18. The noisy-to-clean inflation ratio (NIR) remains above one during much of training, indicating increased spectral dispersion among corrupted gradients.}
    \label{fig:smallcnn_dynamics}
\end{figure}

\subsubsection{Vision Transformer Dynamics}

Figure~\ref{fig:vit_dynamics} presents a representative training trajectory for
the Vision Transformer under \(50\%\) symmetric label corruption. Although the
detailed trajectory differs from the convolutional architectures, the same
qualitative inflation--collapse behavior is observed. Fisher effective rank
expands substantially during training before contracting at later epochs. The
noisy-to-clean inflation ratio (NIR) remains elevated during much of the
inflation phase, suggesting that corrupted gradients are more spectrally
dispersed than clean gradients even in the transformer setting. These results
indicate that Fisher Rank Inflation is not restricted to convolutional
architectures, although the multi-seed attribution results in
Table~\ref{tab:cross_arch} show that the corrupted-sample enrichment signal is
weaker for ViT than for the convolutional models.

\begin{figure}[H]
    \centering
    \includegraphics[width=0.75\linewidth]{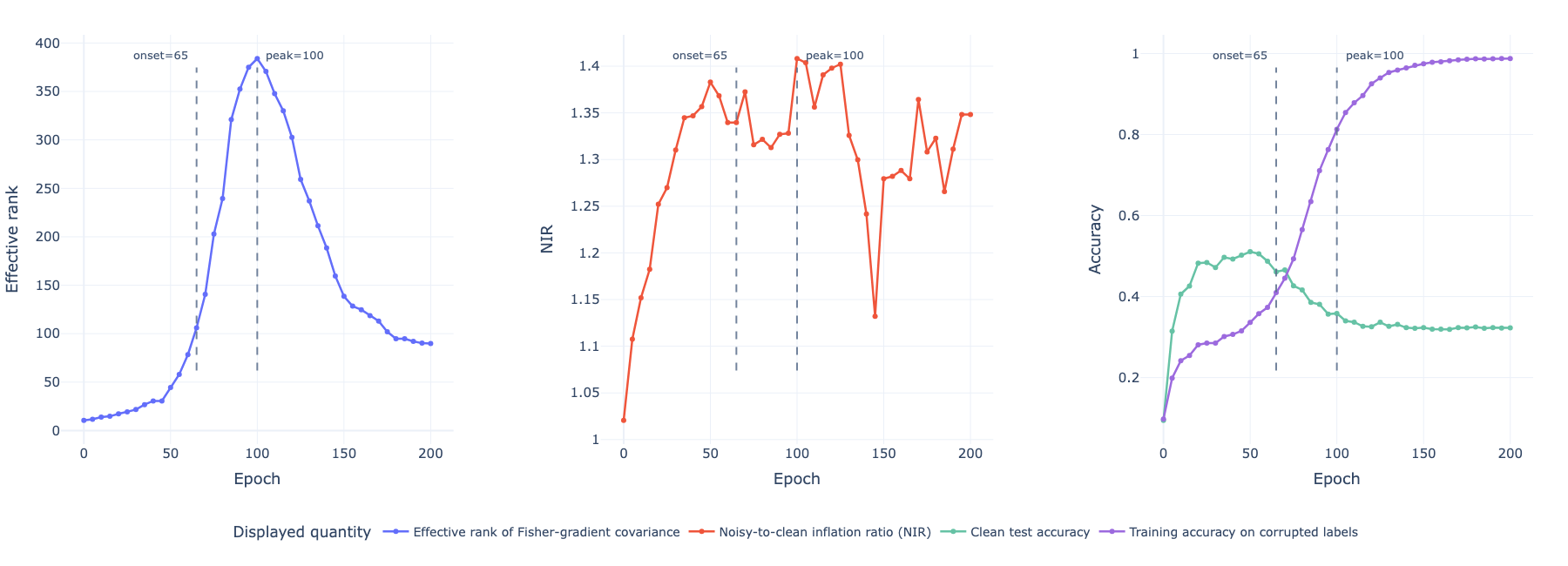}
    \caption{Representative training dynamics for a Vision Transformer on CIFAR-10 under \(50\%\) symmetric label corruption. Fisher effective rank expands during training and contracts later, showing that Fisher Rank Inflation persists in a transformer architecture. The dashed onset marker denotes the first detected rank-onset/overfitting-onset criterion in this representative run, while the peak-rank marker denotes the checkpoint used for representative attribution analysis.}
    \label{fig:vit_dynamics}
\end{figure}

\subsubsection{CIFAR-100 ResNet18 Dynamics}

Figure~\ref{fig:cifar100_dynamics} shows a representative training trajectory
for ResNet18 on CIFAR-100 with \(50\%\) symmetric label corruption. Despite the
increased dataset complexity and larger output space, Fisher effective rank
again exhibits a clear inflation--collapse trajectory. The noisy-to-clean
inflation ratio (NIR) remains above one for a substantial portion of training,
indicating that corrupted gradients are more spectrally dispersed than clean
gradients in the subset-centered Fisher-gradient scatter. These observations
suggest that Fisher Rank Inflation persists beyond CIFAR-10 and remains visible
in a higher-cardinality classification setting.

\begin{figure}[H]
    \centering
    \includegraphics[width=0.75\linewidth]{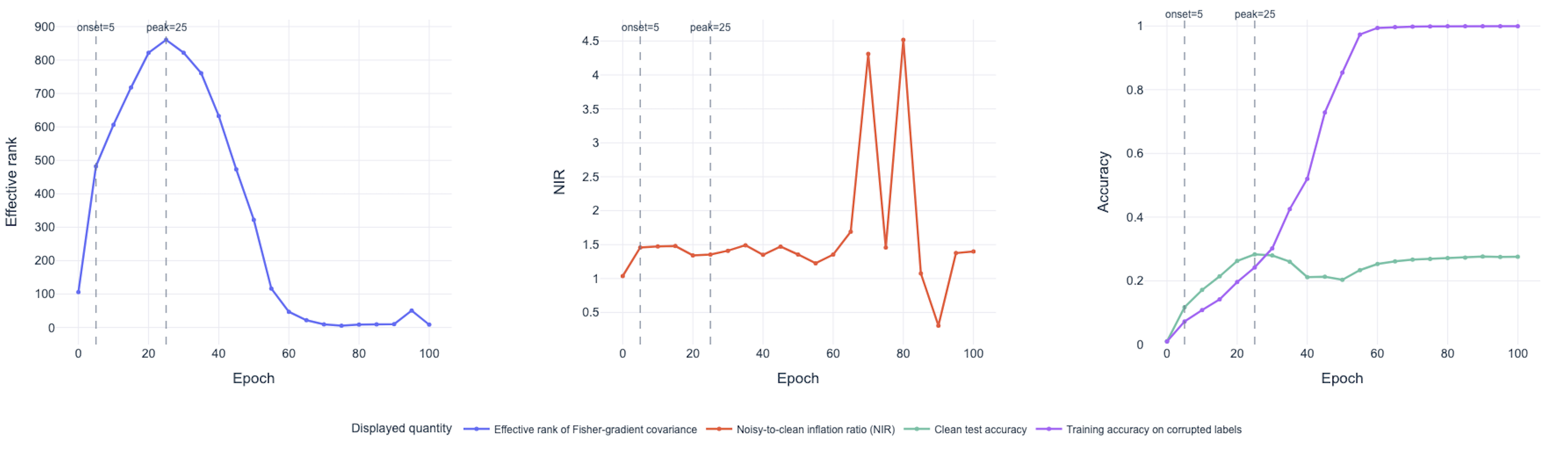}
    \caption{Representative training dynamics for ResNet18 on CIFAR-100 under \(50\%\) symmetric label corruption. Fisher effective rank exhibits a clear inflation--collapse trajectory despite the increased dataset complexity and larger output dimensionality. The noisy-to-clean inflation ratio (NIR) remains above one for a substantial portion of training, indicating greater spectral dispersion among corrupted gradients.}
    \label{fig:cifar100_dynamics}
\end{figure}

\subsection{Additional Attribution Results}

The main paper presents attribution results for ResNet18 on CIFAR-10. In this
appendix, we provide analogous analyses for SmallCNN, Vision Transformers, and
ResNet18 trained on CIFAR-100. Across all settings, we observe the same
qualitative behavior: corrupted examples exhibit systematically larger
leave-one-out rank contributions than clean examples and are strongly enriched
among the highest rank-contributing samples.

\subsubsection{SmallCNN Attribution Results}

Figure~\ref{fig:smallcnn_attr} shows representative leave-one-out rank
contributions at the peak-rank checkpoint for SmallCNN trained on CIFAR-10 with
\(50\%\) symmetric label corruption. In this run, the peak-rank checkpoint
occurs at epoch 75. Corrupted examples exhibit substantially larger positive
contributions than clean examples, with mean contributions of \(0.0051\) and
\(-0.0017\), respectively. A similar separation is observed in the median
contributions, which are \(0.0017\) for corrupted examples and \(-0.0005\) for
clean examples.

Although corrupted examples constitute only \(49.4\%\) of the evaluated subset,
they account for \(91\%\) of the top-100 rank-contributing samples,
corresponding to an enrichment factor of approximately \(1.84\times\). These
representative-run results indicate that Fisher Rank Inflation in SmallCNN is
driven disproportionately by corrupted training examples, mirroring the
behavior observed for ResNet18 in the main text. Across five seeds, the same
SmallCNN setting achieves a top-100 noisy fraction of \(96.2\%\pm2.0\%\) and an
enrichment factor of \(1.928\pm0.064\), as reported in
Table~\ref{tab:cross_arch}.

\begin{figure}[H]
    \centering
    \begin{subfigure}{0.48\textwidth}
        \centering
        \includegraphics[width=\linewidth]{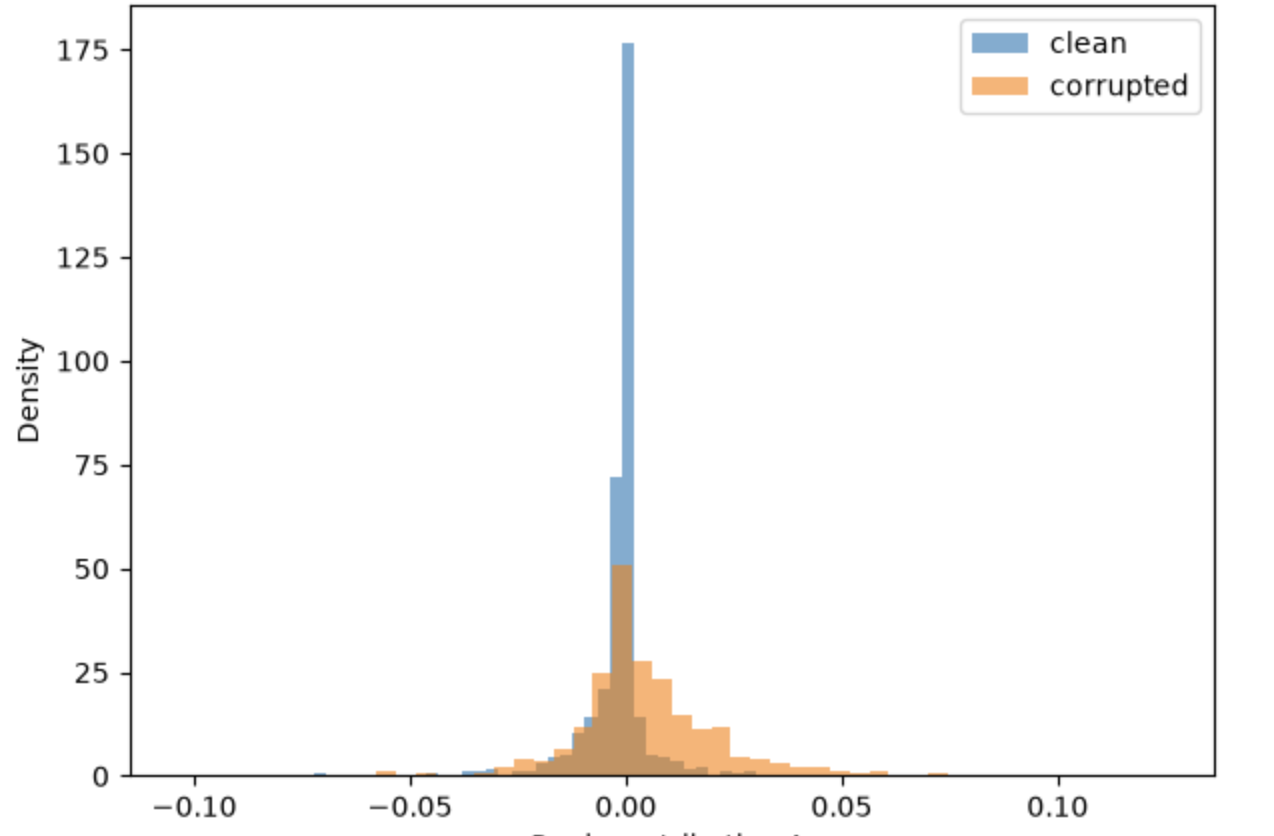}
    \end{subfigure}
    \hfill
    \begin{subfigure}{0.48\textwidth}
        \centering
        \includegraphics[width=\linewidth]{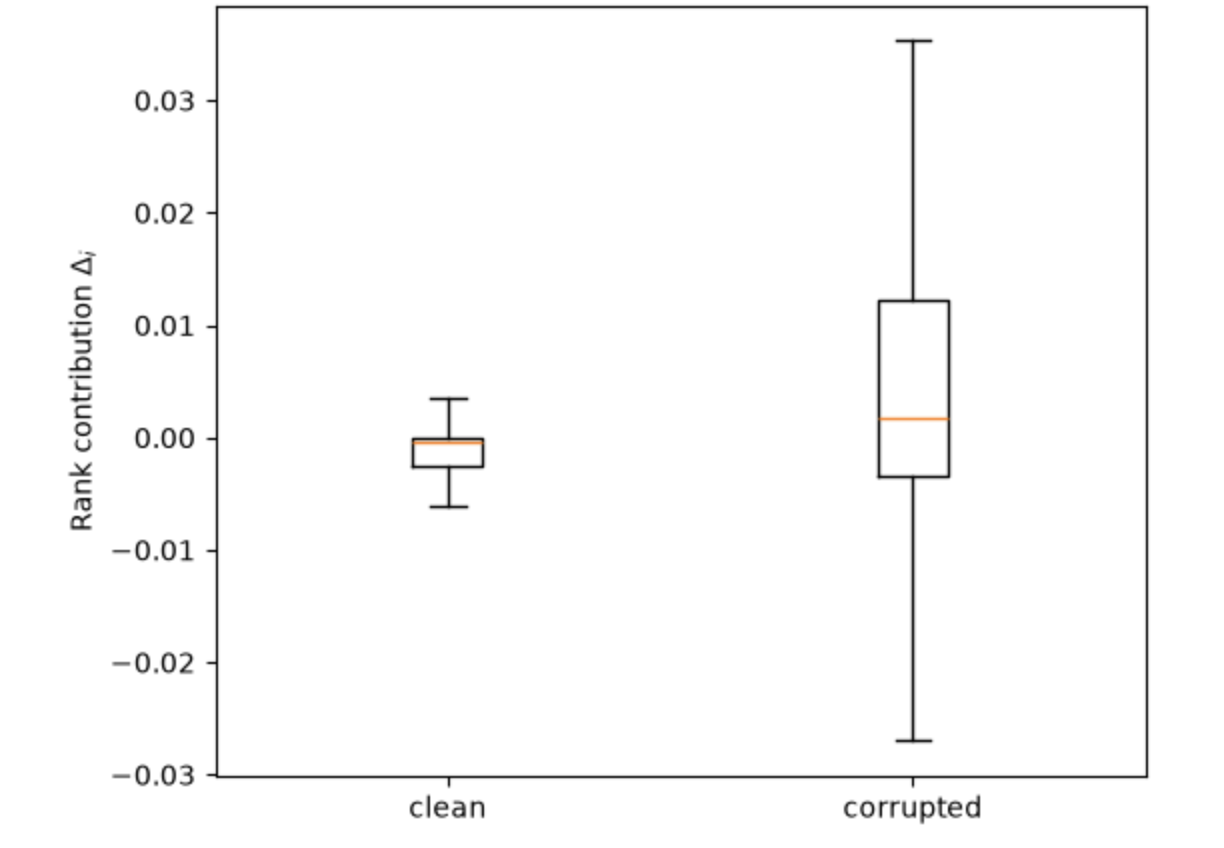}
    \end{subfigure}

    \caption{Representative leave-one-out rank contributions at the peak-rank checkpoint
for SmallCNN on CIFAR-10 with \(50\%\) symmetric label corruption. Corrupted
examples exhibit larger positive rank contributions than clean examples and are
enriched among the highest rank-contributing samples. In this representative
run, corrupted examples constitute \(49.4\%\) of the evaluated subset and
account for \(91\%\) of the top-100 rank-contributing examples.
}
    \label{fig:smallcnn_attr}
\end{figure}

\begin{table}[H]
\centering
\caption{Representative-run corruption-detection diagnostics at the peak-rank checkpoint
for SmallCNN on CIFAR-10 with \(50\%\) symmetric label noise. Multi-seed summary
statistics are reported in Table~\ref{tab:cross_arch}.}
\label{tab:smallcnn_attr}
\begin{tabular}{lccc}
\toprule
Score & AUROC & AUPRC & Top-100 noisy fraction \\
\midrule
Rank contribution $\Delta_i$ & 0.642 & 0.720 & 0.91 \\
Loss & 0.844 & 0.848 & 0.94 \\
Gradient norm & 0.807 & 0.770 & 0.80 \\
Negative confidence & 0.844 & 0.848 & 0.94 \\
Negative margin & 0.714 & 0.668 & 0.76 \\
\bottomrule
\end{tabular}
\end{table}

\subsubsection{Vision Transformer Attribution Results}

Figure~\ref{fig:vit_attr} presents representative leave-one-out rank
contributions at the peak-rank checkpoint for the Vision Transformer trained on
CIFAR-10 with \(50\%\) symmetric label corruption. Despite architectural
differences between convolutional and transformer-based models, corrupted
examples remain enriched among the highest rank-contributing samples. The
effect is weaker than in the convolutional architectures, but the same
qualitative pattern is still visible: corrupted examples tend to contribute
more strongly to Fisher effective rank than clean examples.

In this representative run, rank contribution achieves an AUROC of \(0.620\),
an AUPRC of \(0.641\), and a top-100 noisy fraction of \(0.72\). Across five
seeds, the same ViT setting achieves a top-100 noisy fraction of
\(69.2\%\pm3.7\%\) and an enrichment factor of \(1.387\pm0.087\), as reported in
Table~\ref{tab:cross_arch}. Thus, while the ViT attribution signal is
attenuated relative to the convolutional models, corrupted examples remain
overrepresented among the highest rank-contributing samples.

\begin{figure}[H]
    \centering
    \includegraphics[width=1\linewidth]{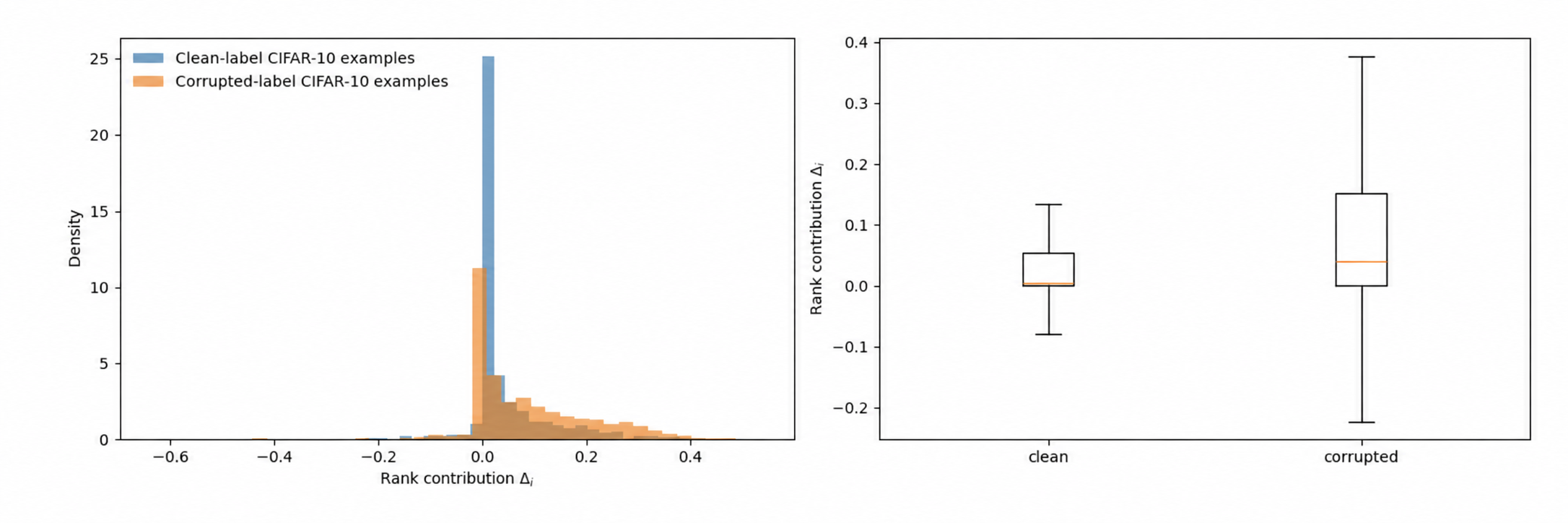}
    \caption{Representative leave-one-out rank contributions at the peak-rank checkpoint for a Vision Transformer on CIFAR-10 with \(50\%\) symmetric label corruption. The attribution signal is weaker than in the convolutional models, but corrupted examples remain enriched among the highest rank-contributing samples. }
    \label{fig:vit_attr}
\end{figure}

\begin{table}[H]
\centering
\caption{Representative-run corruption-detection diagnostics at the peak-rank checkpoint for a Vision Transformer on CIFAR-10 with \(50\%\) symmetric label noise. Multi-seed summary statistics are reported in Table~\ref{tab:cross_arch}.}
\label{tab:vit_attr}
\begin{tabular}{lccc}
\toprule
Score & AUROC & AUPRC & Top-100 noisy fraction \\
\midrule
Rank contribution $\Delta_i$ & 0.620 & 0.641 & 0.72 \\
Loss & 0.744 & 0.737 & 0.86 \\
Gradient norm & 0.742 & 0.736 & 0.88 \\
Negative confidence & 0.744 & 0.737 & 0.86 \\
Negative margin & 0.623 & 0.577 & 0.57 \\
\bottomrule
\end{tabular}
\end{table}

\subsubsection{CIFAR-100 Attribution Results}

Figure~\ref{fig:cifar100_attr} shows representative leave-one-out rank
contributions at the peak-rank checkpoint for ResNet18 trained on CIFAR-100
with \(50\%\) symmetric label corruption. Although CIFAR-100 is a more
challenging classification setting than CIFAR-10, corrupted examples again
exhibit larger rank contributions and remain enriched among the
highest-contributing samples.

In this representative run, corrupted examples account for \(82\%\) of the
top-100 rank-contributing samples despite a background corruption rate of
approximately \(50\%\). Across five seeds, the same CIFAR-100 ResNet18 setting
achieves a top-100 noisy fraction of \(92.2\%\pm1.6\%\) and an enrichment factor
of \(1.848\pm0.052\), as reported in Table~\ref{tab:cross_arch}. These results
indicate that the association between Fisher Rank Inflation and corrupted
examples extends beyond CIFAR-10 and persists in a higher-cardinality
classification setting.

\begin{figure}[H]
    \centering
    \includegraphics[width=1\linewidth]{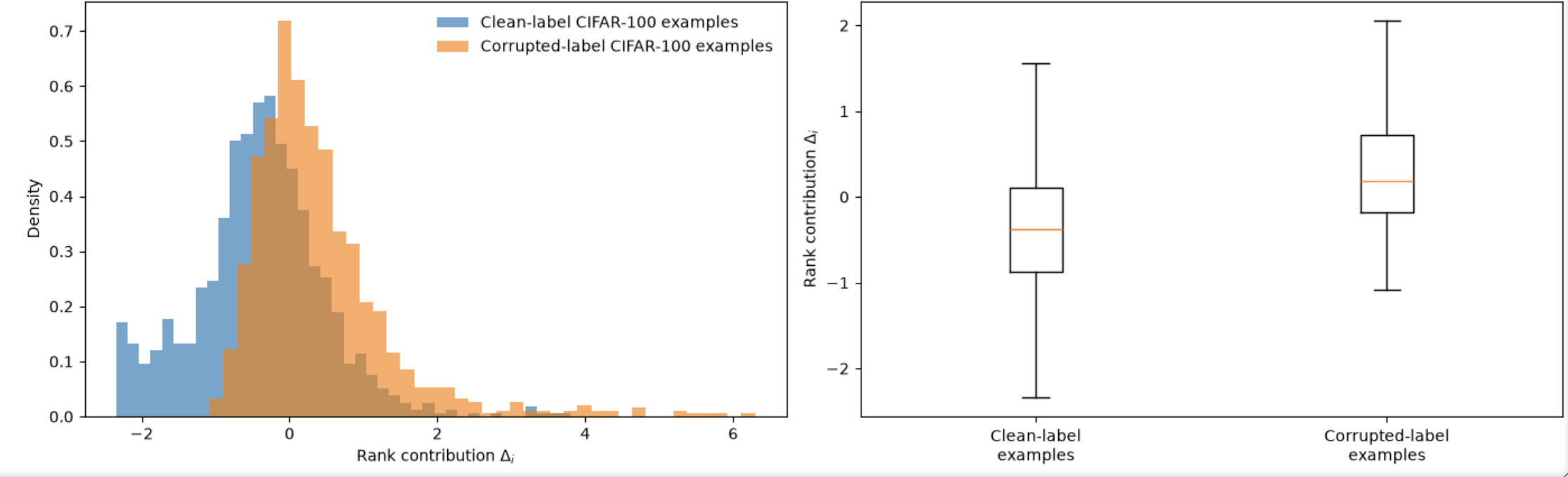}
    \caption{Representative leave-one-out rank contributions at the peak-rank checkpoint for ResNet18 on CIFAR-100 with \(50\%\) symmetric label corruption. Corrupted examples exhibit larger rank contributions and are enriched among the highest rank-contributing samples.}
    \label{fig:cifar100_attr}
\end{figure}

\begin{table}[H]
\centering
\caption{Representative-run corruption-detection diagnostics at the peak-rank checkpoint for ResNet18 on CIFAR-100 with \(50\%\) symmetric label noise. Multi-seed summary statistics are reported in Table~\ref{tab:cross_arch}.}
\label{tab:cifar100_attr}
\begin{tabular}{lccc}
\toprule
Score & AUROC & AUPRC & Top-100 noisy fraction \\
\midrule
Rank contribution $\Delta_i$ & 0.746 & 0.707 & 0.82 \\
Loss & 0.871 & 0.850 & 0.95 \\
Gradient norm & 0.720 & 0.683 & 0.77 \\
Negative confidence & 0.584 & 0.545 & 0.53 \\
\bottomrule
\end{tabular}
\end{table}

\subsection{Fisher Rank Inflation under Human Annotation Noise}

The experiments in the main paper focus on synthetic symmetric label corruption. To evaluate whether Fisher Rank Inflation extends beyond artificially generated noise, we additionally study CIFAR-10N \citep{wei2022learningnoisylabelsrevisited}, which contains naturally occurring human annotation errors collected from multiple annotators. Unlike synthetic corruption, these label errors arise from human mistakes and ambiguity rather than random label replacement.

We evaluate ResNet18 on the CIFAR-10N \texttt{worse\_label} split using the same training protocol, Fisher-rank computation, and leave-one-out attribution procedure described in Section~4.1. Models are trained using the observed human labels while clean labels are retained for evaluation and attribution analysis. All results are reported over five random seeds.

\subsubsection{Training Dynamics}

Figure~\ref{fig:cifar10n_dynamics} shows representative Fisher-rank dynamics under human annotation noise. Similar to the synthetic-corruption setting, Fisher effective rank exhibits a clear inflation--collapse trajectory during training. Effective rank initially expands as training progresses, reaches a pronounced maximum, and subsequently contracts despite continued optimization. The noisy-to-clean inflation ratio (NIR) remains elevated throughout much of training, indicating greater spectral dispersion among human-noisy examples than among clean examples.

The temporal relationship between Fisher-rank inflation and observable overfitting is also preserved. Across five seeds, Fisher-rank onset precedes the chosen overfitting-onset criterion by $29.0 \pm 13.1$ epochs on average. Thus, the early-warning behavior observed under synthetic corruption continues to appear under naturally occurring annotation errors.

These results indicate that the Fisher-spectrum dynamics identified in the main paper are not restricted to synthetic label corruption. Instead, naturally occurring annotation errors produce qualitatively similar inflation--collapse trajectories and lead-time behavior.

\begin{figure}[H]
\centering
\includegraphics[width=0.9\linewidth]{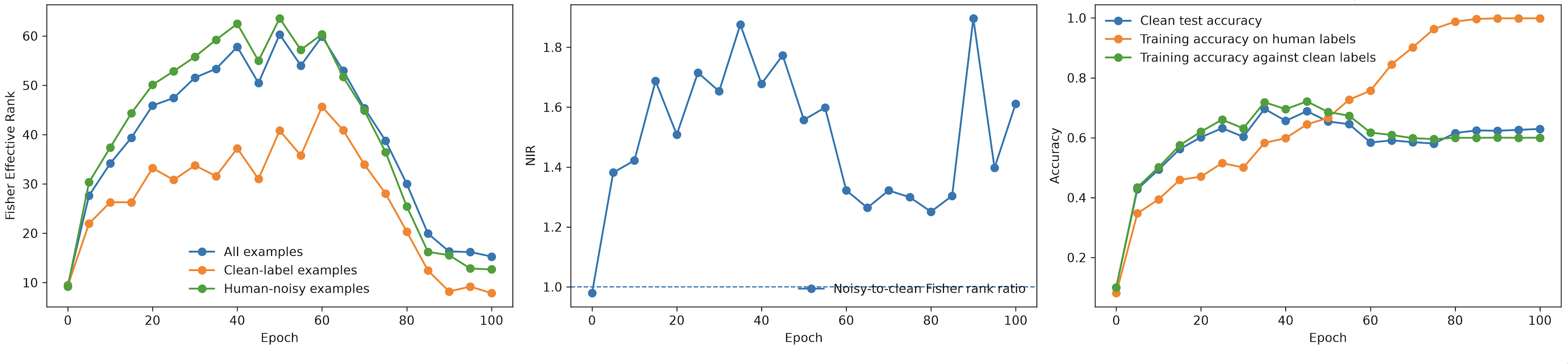}
\caption{
Representative training dynamics for ResNet18 on CIFAR-10N using the \texttt{worse\_label} split. Fisher effective rank exhibits a pronounced inflation--collapse trajectory similar to that observed under synthetic label corruption. The noisy-to-clean inflation ratio (NIR) remains above one for much of training, indicating greater spectral dispersion among human-noisy examples than among clean examples.
}
\label{fig:cifar10n_dynamics}
\end{figure}

\subsubsection{Leave-One-Out Attribution}

We next evaluate leave-one-out Fisher-rank contributions at the seed-specific peak-rank checkpoints. Figure~\ref{fig:cifar10n_attribution} shows representative attribution distributions for clean and human-noisy examples.

Human-noisy examples exhibit substantially larger positive rank contributions than clean examples. The distribution of noisy-example contributions is shifted toward larger positive values and exhibits greater variance, while clean examples remain concentrated near zero contribution. This qualitative pattern closely matches the synthetic-corruption experiments reported in the main paper.

The enrichment of human-noisy examples among the highest Fisher-rank contributors is particularly strong. Across five seeds, human-noisy examples account for $94.4\%\pm1.9\%$ of the top-100 rank-contributing samples despite constituting only $39.7\%\pm1.0\%$ of the evaluated subset. This corresponds to an enrichment factor of $2.377\pm0.093$ relative to the background human-noise rate.

These findings indicate that the examples most responsible for Fisher-rank expansion remain strongly concentrated among mislabeled examples even when the noise arises from naturally occurring annotation errors rather than synthetic label flips.

\begin{figure}[H]
\centering
\includegraphics[width=1\linewidth]{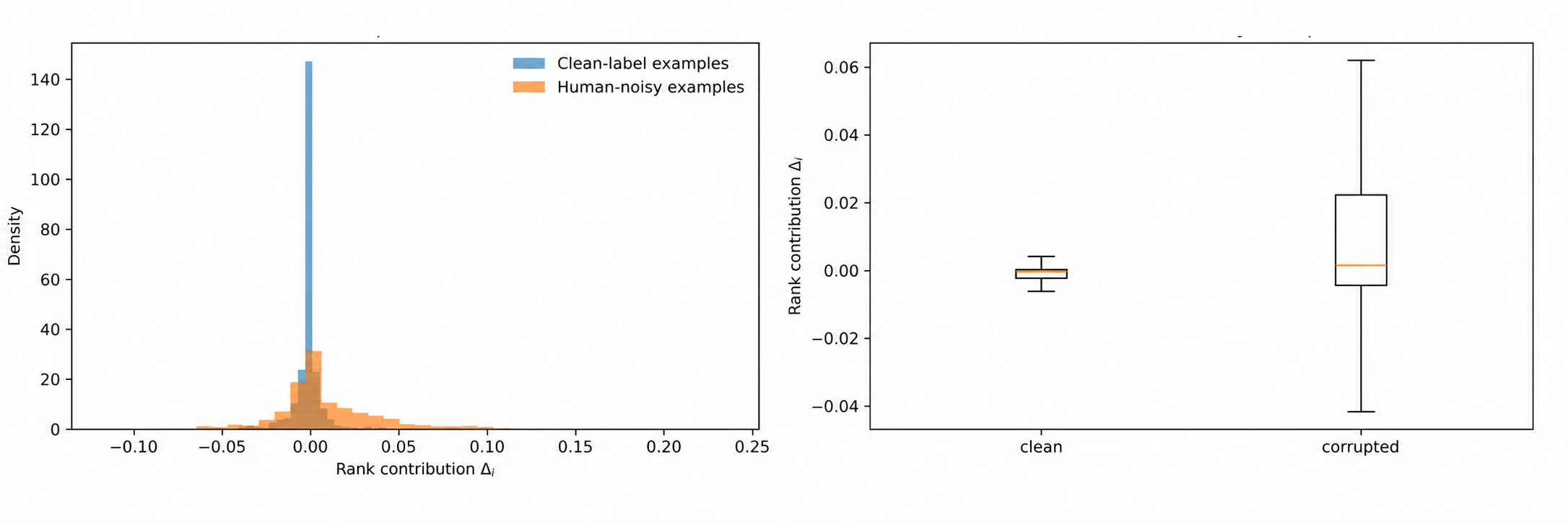}
\caption{
Representative leave-one-out rank contributions at the peak-rank checkpoint for ResNet18 on CIFAR-10N. Human-noisy examples exhibit substantially larger positive rank contributions than clean examples and are strongly enriched among the highest Fisher-rank contributors.
}
\label{fig:cifar10n_attribution}
\end{figure}

\subsection{Quantitative Results}

Table~\ref{tab:cifar10n_results} summarizes the five-seed CIFAR-10N results.

Although rank contribution is not intended as a generic noisy-label detector, it remains informative under human annotation noise. The rank-contribution score achieves an AUROC of $0.608\pm0.010$ and an AUPRC of $0.645\pm0.014$. These values are lower than those achieved by loss-based noisy-label detection baselines, but they are consistent with the interpretation of Fisher-rank attribution as a spectral diagnostic rather than a direct label-quality estimator.

More importantly, the attribution signal remains highly concentrated among the examples responsible for Fisher-spectrum expansion. The strong enrichment of human-noisy examples among the highest rank contributors indicates that Fisher Rank Inflation continues to identify the subset of mislabeled examples most strongly associated with memorization-related spectral reorganization.

\begin{table}[H]
\centering
\caption{
Five-seed Fisher Rank Inflation results on CIFAR-10N using the \texttt{worse\_label} split. Results are reported as mean $\pm$ standard deviation over five random seeds.
}
\label{tab:cifar10n_results}
\begin{tabular}{lc}
\toprule
Metric & Value \\
\midrule
Human-noise rate & $0.397 \pm 0.010$ \\
Peak Fisher effective rank & $59.39 \pm 2.50$ \\
Peak-rank epoch & $50.0 \pm 6.1$ \\
Lead time & $29.0 \pm 13.1$ \\
Top-100 noisy fraction & $0.944 \pm 0.019$ \\
Top-100 enrichment & $2.377 \pm 0.093$ \\
Rank-contribution AUROC & $0.608 \pm 0.010$ \\
Rank-contribution AUPRC & $0.645 \pm 0.014$ \\
\bottomrule
\end{tabular}
\end{table}

\subsection{Additional Noise-Sweep Results}
\label{app:noise-sweep}

The main paper summarizes the dependence of Fisher Rank Inflation on corruption
severity in Figure~\ref{fig:noise-sweep}. In this appendix, we provide
epoch-wise mean \(\pm\) standard deviation trajectories for the same
corruption-severity sweep. These plots show the full temporal evolution of
Fisher effective rank, the noisy-to-clean inflation ratio (NIR), clean test
accuracy, and training accuracy on the provided labels across corruption rates.

All curves in this section are computed over three random seeds for ResNet18 on
CIFAR-10 under symmetric label corruption rates
\[
\rho \in \{0.0,0.2,0.3,0.4,0.5,0.6\}.
\]
The shaded regions denote one standard deviation across seeds.

\subsubsection{Fisher-Spectrum Dynamics Across Noise Levels}
\label{app:noise-spectrum-dynamics}

Figure~\ref{fig:noise_sweep_er_epoch} shows the epoch-wise Fisher effective-rank
trajectories across corruption levels. Increasing label corruption produces a
larger effective-rank expansion during training, with higher corruption rates
reaching larger peak effective rank before the subsequent collapse phase. This
trajectory-level behavior supports the summary result in the main text: peak
Fisher effective rank increases with corruption severity.

\begin{figure}[H]
    \centering
    \includegraphics[width=0.75\linewidth, trim={0cm 0cm 0cm 0.7cm}, clip]{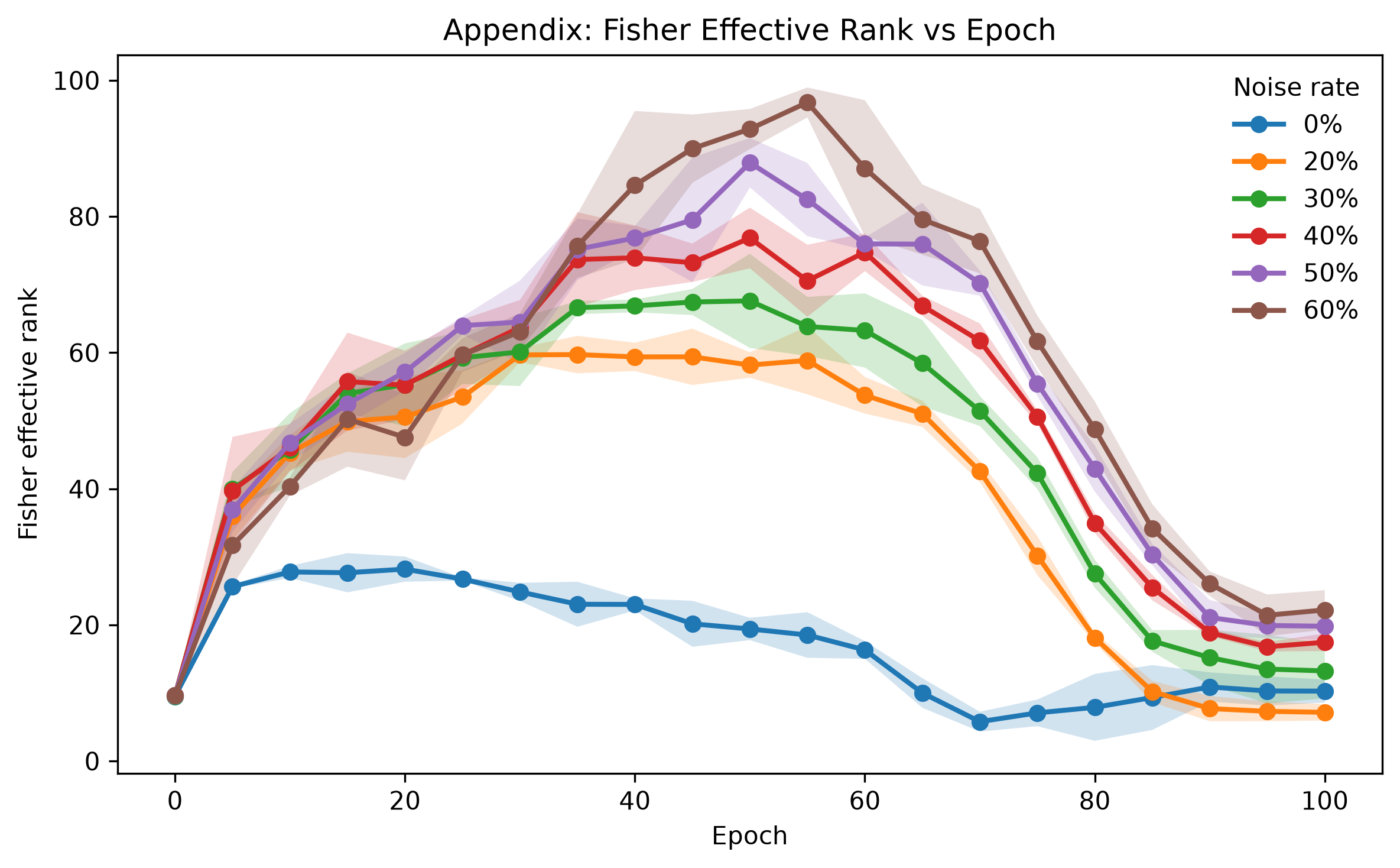}
    \caption{
    Epoch-wise Fisher effective-rank trajectories for ResNet18 on CIFAR-10
    under different symmetric label-corruption rates. Curves show mean
    \(\pm\) standard deviation across three random seeds. Higher corruption
    rates produce stronger Fisher-rank expansion during training, followed by a
    collapse phase after corrupted labels are increasingly fit.
    }
    \label{fig:noise_sweep_er_epoch}
\end{figure}

Figure~\ref{fig:noise_sweep_nir_epoch} shows the corresponding epoch-wise
noisy-to-clean inflation ratio (NIR). Across noisy settings, NIR remains above
one for much of training, indicating that corrupted-gradient subsets are more
spectrally dispersed than clean-gradient subsets in the subset-centered
Fisher-gradient scatter. Unlike peak effective rank, NIR is not strictly
monotonic in the corruption rate, but it remains elevated across noisy
conditions. NIR is undefined for the clean setting because there is no corrupted
subset.

\begin{figure}[H]
    \centering
    \includegraphics[width=0.75\linewidth, trim={0cm 0cm 0cm 0.7cm}, clip]{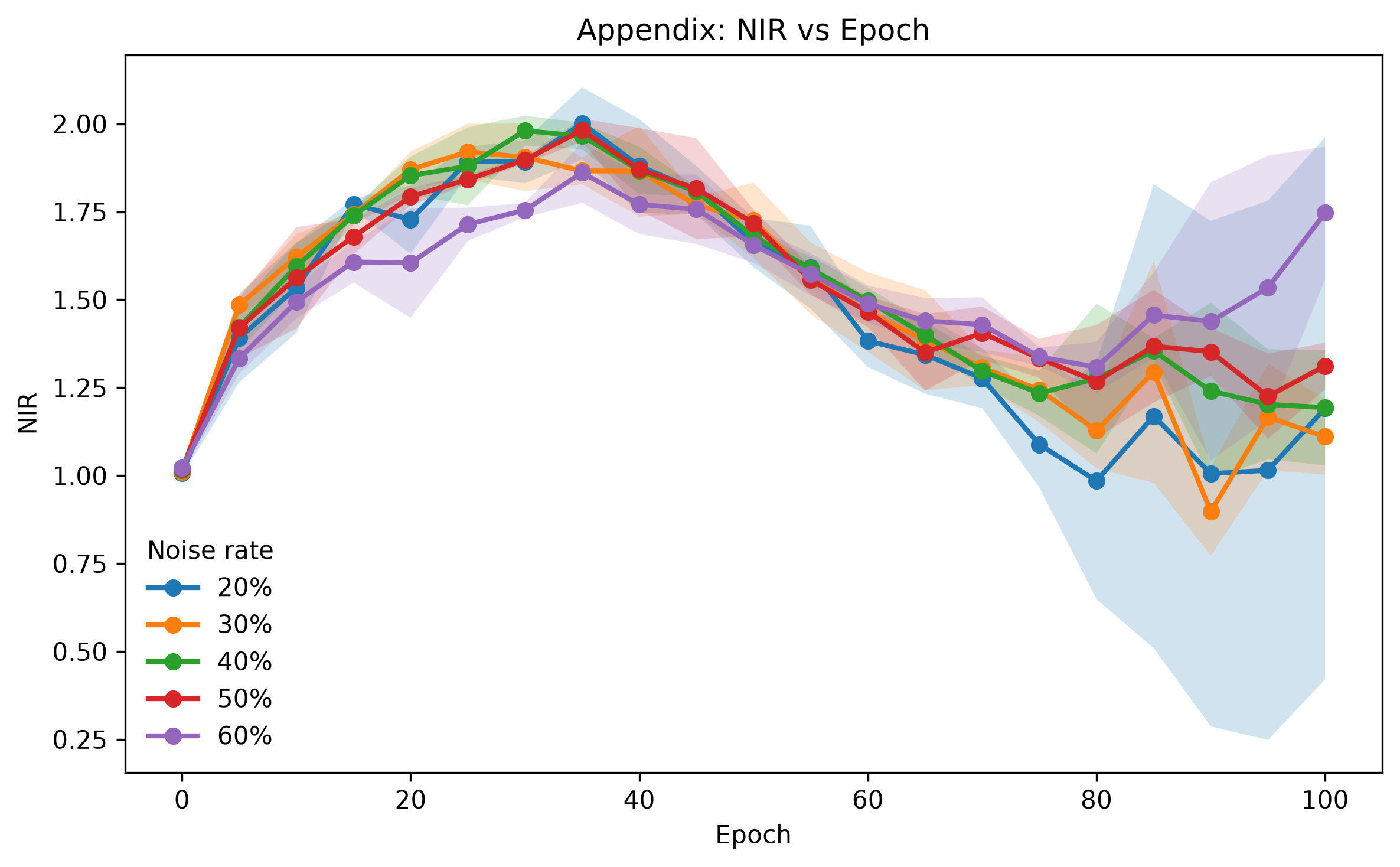}
    \caption{
    Epoch-wise noisy-to-clean inflation ratio (NIR) trajectories for ResNet18
    on CIFAR-10 under different symmetric label-corruption rates. Curves show
    mean \(\pm\) standard deviation across three random seeds. NIR remains above
    one for much of training across noisy settings, indicating greater
    subset-centered spectral dispersion among corrupted gradients than among
    clean gradients.
    }
    \label{fig:noise_sweep_nir_epoch}
\end{figure}

\subsubsection{Generalization and Memorization Across Noise Levels}
\label{app:noise-generalization-dynamics}

Figure~\ref{fig:noise_test_acc} shows the clean test-accuracy trajectories
obtained during the label-noise sweep. As the corruption rate increases, clean
test accuracy decreases and generalization degradation becomes more pronounced.
While all models initially improve during optimization, higher corruption levels
ultimately lead to lower clean test accuracy despite continued improvement on
the provided training labels.

\begin{figure}[H]
    \centering
    \includegraphics[width=0.75\linewidth, trim={0cm 0cm 0cm 0.7cm}, clip]{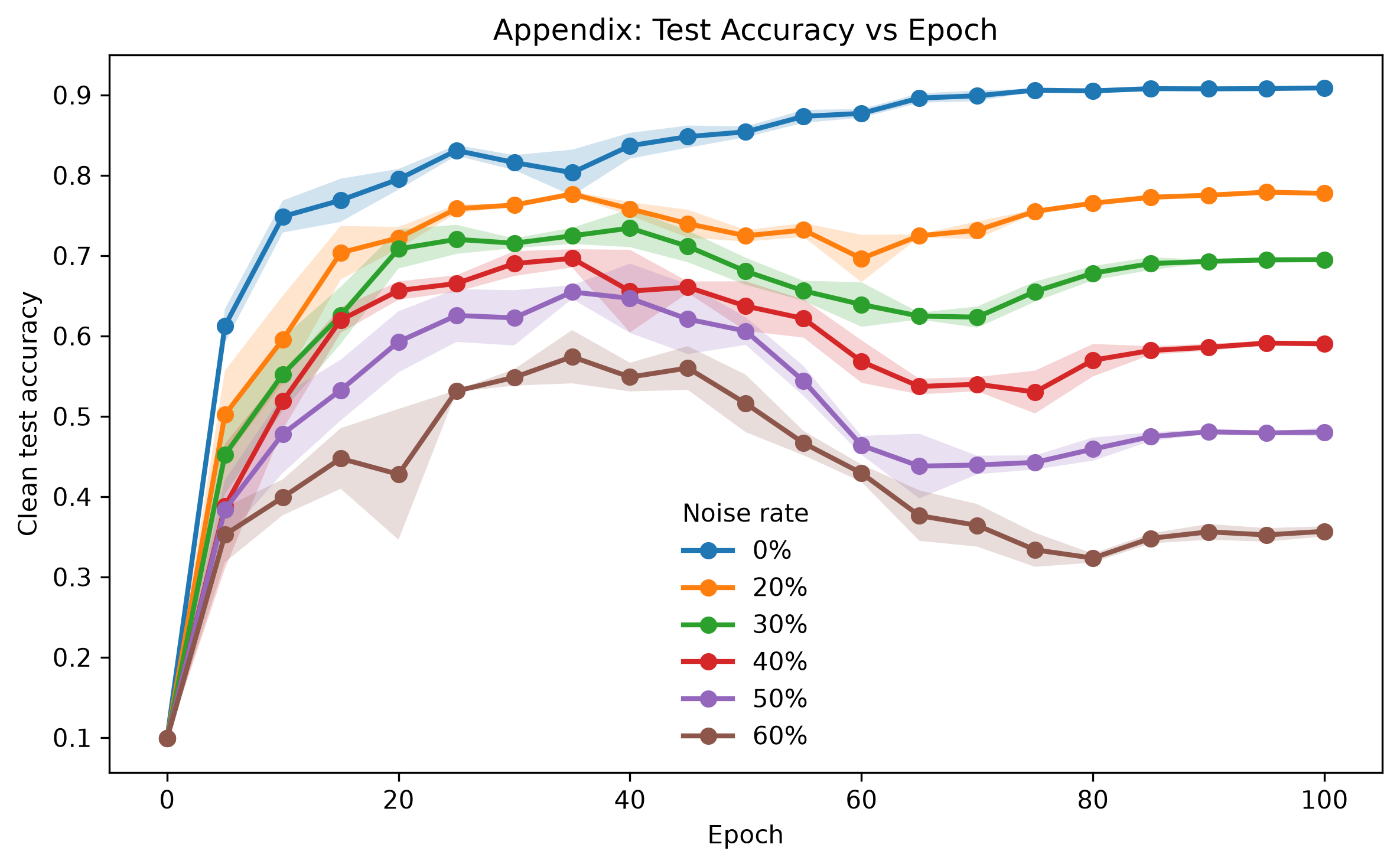}
    \caption{
    Clean test-accuracy trajectories for ResNet18 trained on CIFAR-10 under
    varying levels of symmetric label corruption. Curves show mean \(\pm\)
    standard deviation across three random seeds. Higher corruption rates lead
    to progressively lower clean test accuracy and stronger generalization
    degradation.
    }
    \label{fig:noise_test_acc}
\end{figure}

Figure~\ref{fig:noise_train_acc} reports training accuracy measured with
respect to the provided labels. Although larger corruption rates slow
optimization during the early stages of training, all noisy settings eventually
approach near-perfect training accuracy on the provided labels. This behavior
indicates that the network ultimately fits both clean and corrupted labels.
Combined with the clean test-accuracy trajectories, these results illustrate
the classical memorization phenomenon in which training performance continues
to improve even as generalization deteriorates.

\begin{figure}[H]
    \centering
    \includegraphics[width=0.82\linewidth, trim={0cm 0cm 0cm 0.7cm}, clip]{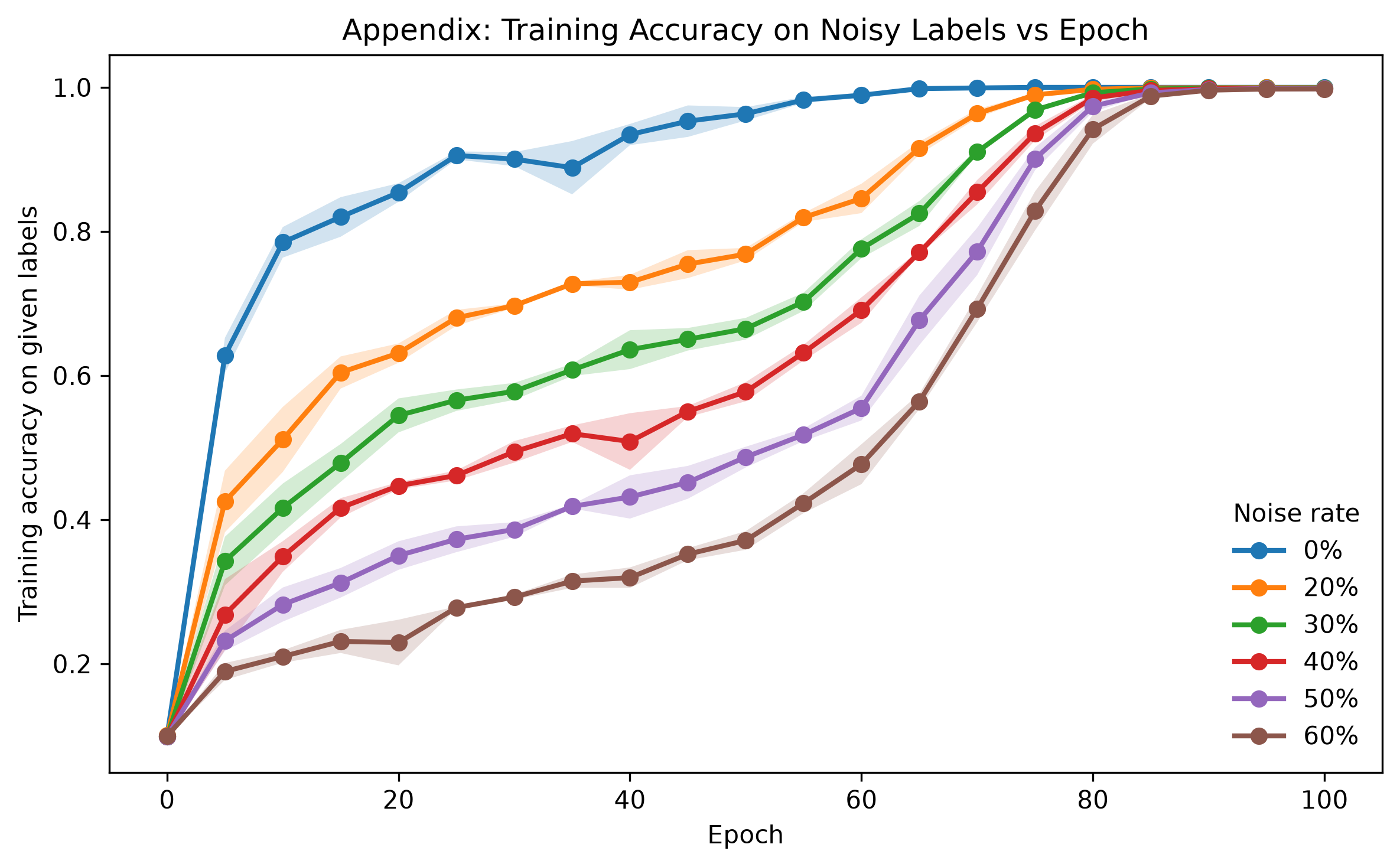}
    \caption{
    Training accuracy measured with respect to the provided labels for ResNet18
    trained on CIFAR-10 under varying levels of symmetric label corruption.
    Curves show mean \(\pm\) standard deviation across three random seeds.
    Despite substantial corruption, training accuracy approaches unity across
    noisy settings, demonstrating eventual memorization of the provided labels.
    }
    \label{fig:noise_train_acc}
\end{figure}

Together, these epoch-wise results provide additional support for the main
corruption-severity finding. Increasing label corruption produces stronger
Fisher-spectrum expansion, corrupted gradients remain more spectrally dispersed
than clean gradients for much of training, and heavier corruption leads to
worse clean generalization even as training accuracy on the provided labels
continues to improve.

\subsubsection{Lead Time Analysis}
\label{app:lead-time}

We also examine the temporal relationship between Fisher-rank onset and
observable overfitting. For each run, rank onset is defined as the first
evaluated epoch at which Fisher effective rank reaches \(20\%\) of its maximum
increase over training. Overfitting onset is defined as the first evaluated
epoch at which clean test accuracy drops by at least \(0.03\) from the best
previous clean test accuracy. The lead time is then defined as
\[
\mathrm{lead\ time}
=
t_{\mathrm{overfit}} - t_{\mathrm{rank}}.
\]

Across the noisy corruption settings, the mean lead time is positive, indicating
that Fisher-rank onset generally precedes the chosen overfitting-onset
criterion. In the three-seed corruption sweep, the mean lead times are
\(43.3\), \(35.0\), \(36.7\), \(38.3\), and \(28.3\) epochs for corruption
rates \(20\%\), \(30\%\), \(40\%\), \(50\%\), and \(60\%\), respectively. The
corresponding standard deviations are \(2.9\), \(8.7\), \(2.9\), \(2.9\), and
\(11.5\) epochs.

 These results show that the onset of Fisher Rank Inflation can occur before observable test accuracy degradation under label noise. The magnitude of the lead time varies across seeds and corruption levels. We therefore interpret this analysis as a retrospective comparison of temporal ordering.

\subsection{Additional Direct-Diagnostic Details}
\label{app:direct-diagnostic-details}

The main paper reports compact checkpoint-level diagnostics in
Table~\ref{tab:direct-spectral-diagnostics}. For completeness, the diagnostics
were computed at the same seed-specific peak-rank checkpoints used for the
leave-one-out attribution analysis. For each seed, we recomputed the
per-example final-layer gradient matrix at the saved peak-rank checkpoint,
formed the globally centered gradient matrix, and evaluated the corresponding
spectral quantities from Section~3. The new-direction fraction was computed as
\[
\frac{\operatorname{tr}(Q_C S_N^{\mathrm{glob}}Q_C)}
{\operatorname{tr}(S_N^{\mathrm{glob}})},
\]
where \(Q_C\) is the orthogonal projector onto the complement of the globally
centered clean-gradient span. The first-order score
\[
A_i(S)=\bar g_i^\top B_S\bar g_i
\]
was compared against the exact leave-one-out contribution
\[
\Delta_i=\operatorname{er}(S)-\operatorname{er}(S_{-i}).
\]
All reported values are mean and standard deviation over the same five seeds
used in the peak-rank attribution experiments.

\subsection{Onset-Threshold Sensitivity}
\label{app:onset-threshold-sensitivity}

\begin{table}[H]
\centering
\caption{Sensitivity of lead-time estimates to the choice of rank-onset and
overfitting-onset thresholds. Rank onset is evaluated using threshold fractions
\(\{0.10,0.20,0.30\}\) of the peak effective-rank increase, and overfitting
onset is evaluated using clean-test-accuracy drops
\(\{0.02,0.03,0.05\}\) from the best previous value, giving nine threshold
pairs per setting. We report the mean lead time averaged over threshold pairs,
the range of mean lead times across the grid, and the fraction of threshold
pairs for which the mean lead time is positive.}
\label{tab:onset-threshold-sensitivity}
\begin{tabular}{llccc}
\toprule
Dataset & Model
& Mean lead
& Lead range
& Positive fraction \\
\midrule
CIFAR-10 & SmallCNN
& \(22.0\)
& \([21.0,23.0]\)
& \(1.00\) \\

CIFAR-10 & ResNet18
& \(35.7\)
& \([32.0,40.0]\)
& \(1.00\) \\

CIFAR-10 & ViT
& \(6.0\)
& \([-7.0,22.0]\)
& \(0.60\) \\

CIFAR-100 & ResNet18
& \(30.7\)
& \([27.0,35.0]\)
& \(1.00\) \\
\bottomrule
\end{tabular}
\end{table}

Table~\ref{tab:onset-threshold-sensitivity} evaluates the robustness of the
lead-time analysis to the choice of rank-onset and overfitting-onset
thresholds. For the convolutional CIFAR-10 models and CIFAR-100 ResNet18, the
mean lead time remains positive for every threshold pair considered. This
indicates that the conclusion that Fisher-rank inflation precedes observable
test degradation is not an artifact of the particular \(20\%\) rank-onset
threshold or \(0.03\) test-drop threshold used in the main analysis.

The Vision Transformer exhibits a weaker and more threshold-dependent lead-time
signal: the average lead time remains positive over the grid, but some
threshold choices produce non-positive lead time. This is consistent with the
weaker ViT attribution and direct-diagnostic results reported in the main text.

\subsection{Centered versus uncentered Fisher-gradient spectra}

Our main experiments use the centered scatter of per-example last-layer
gradients, since this isolates example-to-example gradient fluctuations.
To verify that Fisher Rank Inflation is not an artifact of centering, we
also repeated the analysis using the uncentered empirical Fisher. The two
quantities produced nearly identical results on CIFAR-10 ResNet18 under
50\% symmetric label corruption. The centered scatter achieved a peak
effective rank of $85.99 \pm 3.75$, peak NIR of $2.00 \pm 0.06$, and
top-100 noisy fraction of $0.950 \pm 0.014$, while the uncentered empirical
Fisher achieved a peak effective rank of $85.34 \pm 3.86$, peak NIR of
$1.98 \pm 0.05$, and top-100 noisy fraction of $0.952 \pm 0.015$.
Similarly, the rank-contribution AUROC/AUPRC were nearly unchanged:
$0.681/0.762$ for the centered scatter and $0.681/0.762$ for the uncentered
Fisher. These results indicate that the observed inflation and corrupted-example
enrichment are not caused by the centering operation.

\begin{table}[H]
\centering
\caption{
Centered versus uncentered Fisher-gradient ablation for CIFAR-10 ResNet18
under 50\% symmetric label corruption. Results are mean $\pm$ standard
deviation over five seeds.
}
\label{tab:centered-uncentered-ablation}
\begin{tabular}{lcccccc}
\toprule
Matrix & Peak ER & Peak NIR & Top-100 Noisy & Enrichment & AUROC & AUPRC \\
\midrule
Centered scatter
& $85.99 \pm 3.75$
& $2.00 \pm 0.06$
& $0.950 \pm 0.014$
& $1.904 \pm 0.037$
& $0.681 \pm 0.024$
& $0.762 \pm 0.023$ \\
Uncentered Fisher
& $85.34 \pm 3.86$
& $1.98 \pm 0.05$
& $0.952 \pm 0.015$
& $1.908 \pm 0.039$
& $0.681 \pm 0.023$
& $0.762 \pm 0.022$ \\
\bottomrule
\end{tabular}
\end{table}

\subsection{Additional Results for the Clean-Difficulty Control}
\label{app:clean_difficulty}

This subsection provides additional evidence supporting the clean-difficulty control
experiment presented in Section~\ref{sec:clean_difficulty}.

For each random seed, we train ResNet18 on CIFAR-10 with 50\% symmetric label
corruption and compute leave-one-out Fisher-rank contributions at the seed-specific
peak Fisher-rank checkpoint. Samples are partitioned into three groups:

\begin{itemize}
    \item \textbf{Normal clean}: correctly labeled examples with relatively low
    training loss.
    \item \textbf{High-loss clean}: correctly labeled examples having the highest
    training losses among all clean samples. These serve as a proxy for rare,
    ambiguous, or intrinsically difficult yet correctly labeled examples.
    \item \textbf{Memorized corrupted}: examples whose observed labels are incorrect
    but whose training losses under those incorrect labels are already very small,
    indicating that the network has memorized the corrupted supervision.
\end{itemize}

Figure~\ref{fig:control_histogram} shows the corresponding contribution
distributions. The memorized corrupted group consistently occupies the positive
tail, while high-loss clean examples do not exhibit comparable positive
contributions.

Table~\ref{tab:control_topk} summarizes the composition of the largest
leave-one-out contributors. Averaged over three seeds, memorized corrupted
examples constitute approximately $96.7\%$, $96.0\%$, and $93.2\%$ of the
top-50, top-100, and top-200 contributors, respectively, while high-loss clean
examples remain nearly absent.

Together, these results demonstrate that Fisher Rank Inflation is not explained
by optimization difficulty alone. Instead, corrupted examples dominate the
Fisher-rank expansion once they have been memorized, even though their training
losses under the observed labels are already small.

\begin{table}[H]
\centering
\caption{Composition of the highest leave-one-out Fisher-rank contributors,
averaged across three random seeds. Entries report percentages.}
\label{tab:control_topk}
\begin{tabular}{lccc}
\toprule
Top-$k$ & Memorized corrupted & High-loss clean & Normal clean\\
\midrule
50  & $96.7\pm4.2$ & $1.3\pm1.2$ & $2.0\pm3.5$\\
100 & $96.0\pm2.6$ & $0.7\pm0.6$ & $3.3\pm2.1$\\
200 & $93.2\pm2.4$ & $1.5\pm0.5$ & $5.3\pm2.3$\\
\bottomrule
\end{tabular}
\end{table}

\begin{figure}[H]
\centering
\begin{subfigure}{0.32\linewidth}
    \centering
    \includegraphics[width=\linewidth]{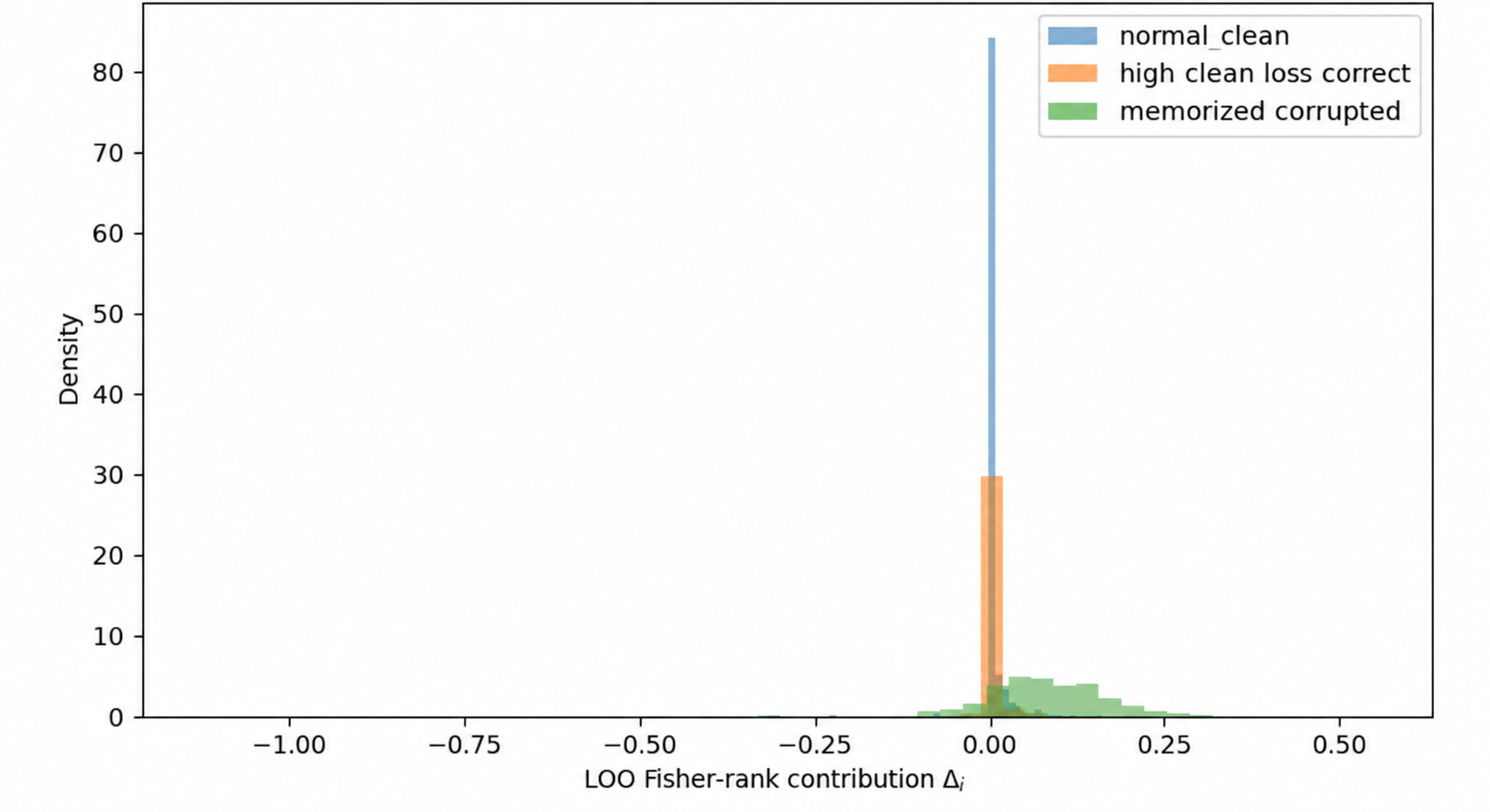}
    \caption{Seed 42}
\end{subfigure}
\hfill
\begin{subfigure}{0.32\linewidth}
    \centering
    \includegraphics[width=\linewidth]{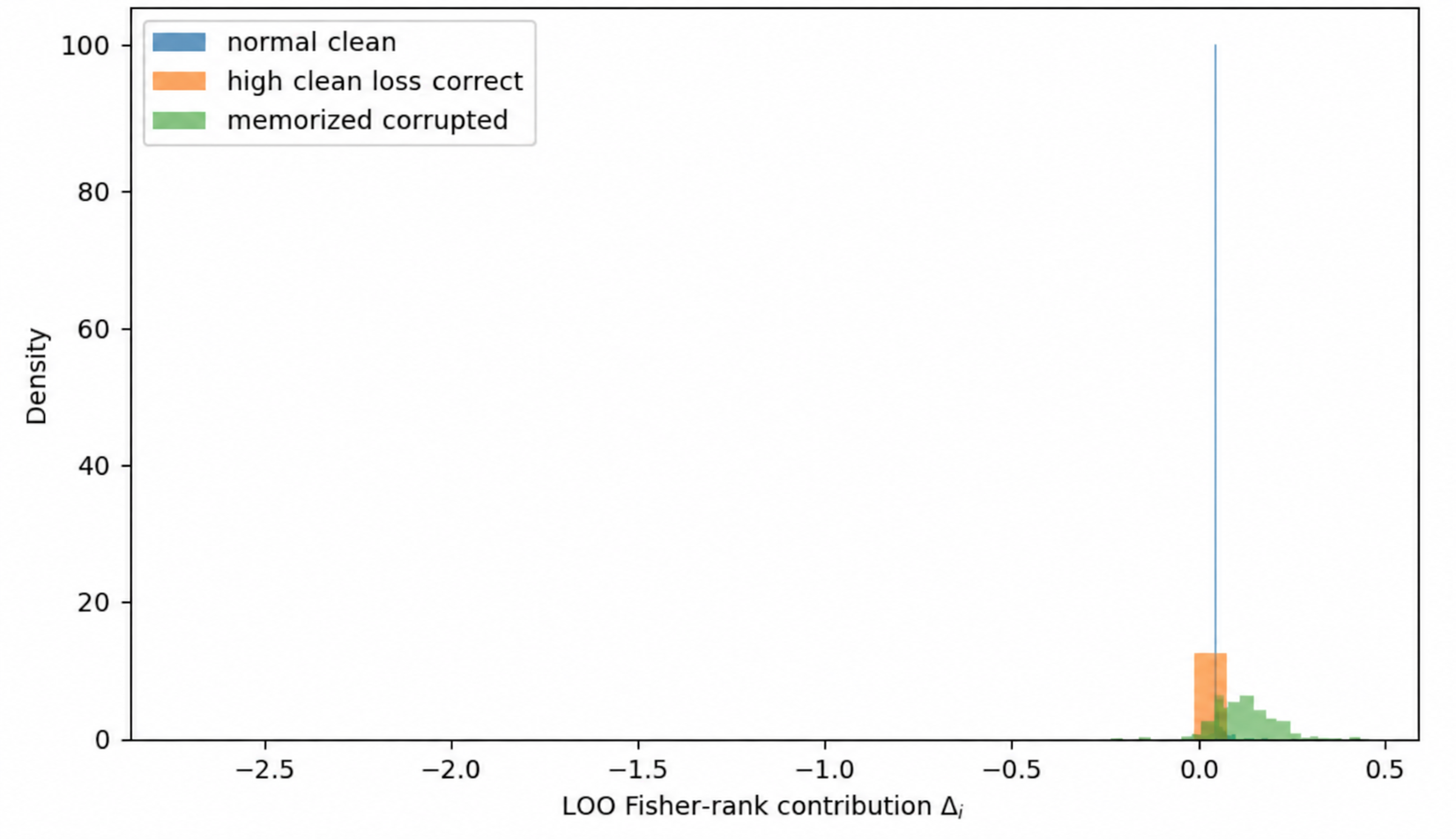}
    \caption{Seed 43}
\end{subfigure}
\hfill
\begin{subfigure}{0.32\linewidth}
    \centering
    \includegraphics[width=\linewidth]{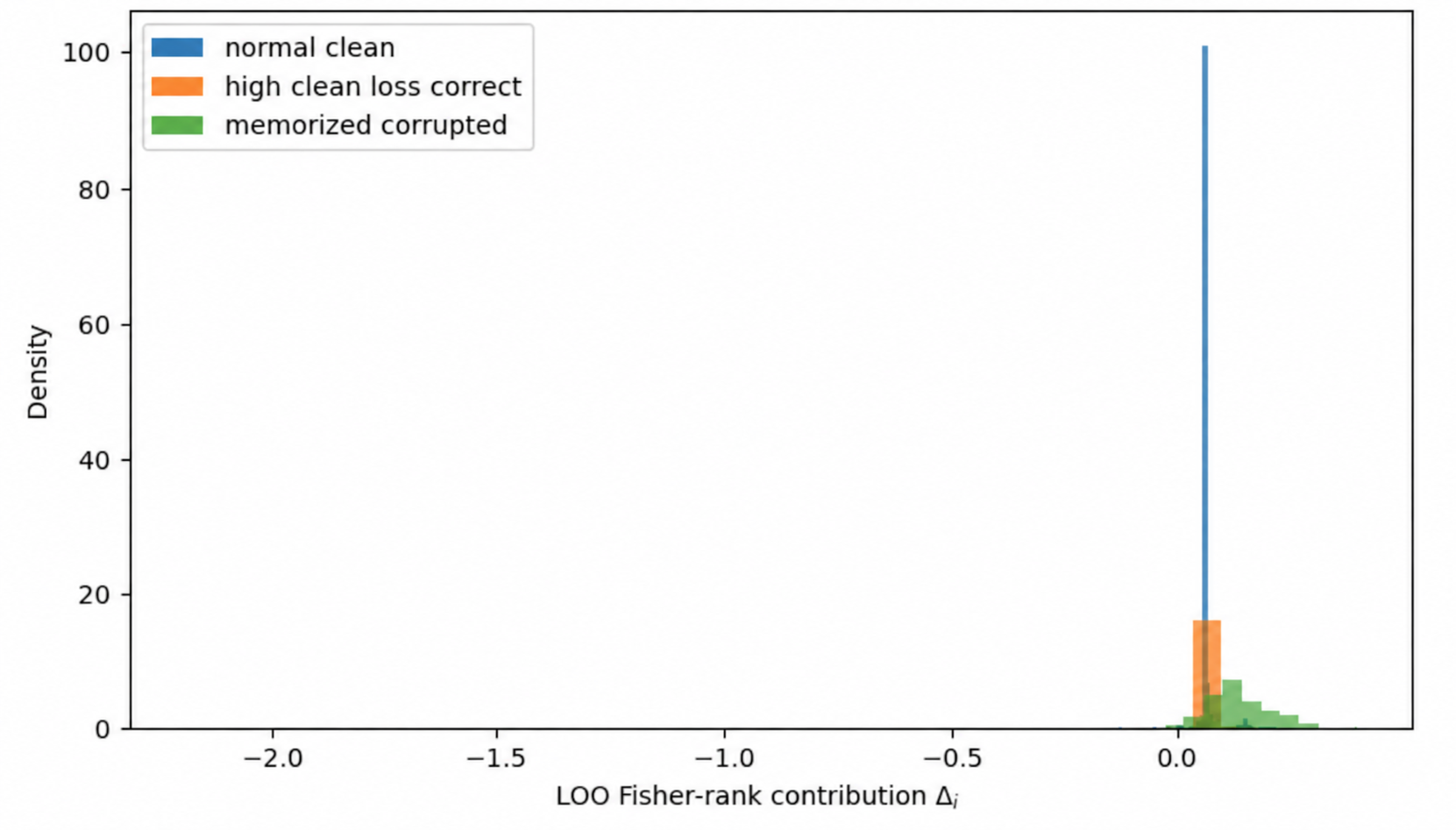}
    \caption{Seed 44}
\end{subfigure}

\caption{
Distribution of leave-one-out Fisher-rank contributions for the
clean-difficulty control across three random seeds. In all seeds,
memorized corrupted examples occupy the positive tail, whereas both clean
groups remain concentrated near zero.
}
\label{fig:control_histogram}
\end{figure}

\subsection{Eigenvalue Cutoff Sensitivity}

The effective rank computations in the main experiments discard eigenvalues not
exceeding an absolute tolerance of $10^{-12}$. Since an absolute tolerance is
not strictly invariant to rescaling of the gradient matrix, we evaluated the
sensitivity of the reported peak checkpoint effective ranks to alternative
data dependent thresholds.

For each of the five seeds in the CIFAR-10 SmallCNN, CIFAR-10 ResNet18,
CIFAR-10 ViT, and CIFAR-100 ResNet18 settings, we reconstructed the centered
last layer gradient matrix at the saved peak rank checkpoint. We compared the
original absolute threshold with relative thresholds
\[
\lambda_j > \tau \lambda_{\max},
\qquad
\tau \in \{10^{-14},10^{-12},10^{-10},10^{-8}\},
\]
as well as a machine precision based singular value tolerance.

At the native gradient scale, the maximum relative change in effective rank was
on the order of $10^{-6}$ across all evaluated settings. The retained scatter
trace fraction was essentially one in every case. We also rescaled each
gradient matrix by factors ranging from $10^{-4}$ to $10^{4}$ and observed only
negligible numerical variation. These results indicate that the fixed cutoff
does not materially affect the effective rank values reported at the peak rank
checkpoints.

\end{document}